\DeclarePairedDelimiterX{\infdivx}[2]{(}{)}{%
  #1\;\delimsize\|\;#2%
}
\DeclarePairedDelimiterX{\infdist}[2]{(}{)}{%
  #1, #2%
}
\newcommand{\infdiv}{D\infdivx}
\newcommand{\tvinfdiv}{D_{\text{TV}}\infdist}
\newcommand{\tvpiinfdiv}{D_{\text{TV}}^\pi\infdist}
\newcommand{\klinfdiv}{D_{\text{KL}}\infdivx}
\newcommand{\jsinfdiv}{D_{\text{JS}}\infdivx}
\newcommand{\winfdiv}{D_{\text{W}}\infdist}
\newcommand\T{\rule{0pt}{2.6ex}}       
\newcommand\B{\rule[-1.6ex]{0pt}{0pt}} 
\newcommand\TSMALL{\rule{0pt}{2.3ex}}       
\newcommand\BSMALL{\rule[-1.2ex]{0pt}{0pt}} 
\newcolumntype{P}[1]{>{\centering\arraybackslash}p{#1}}
\newtheorem{lemma}{Lemma}
\newtheorem{corollary}{Corollary}
\newtheorem{proposition}{Proposition}
\title{A Conservative Approach for Few-Shot Transfer in Off-Dynamics Reinforcement Learning}
\author{
Paul Daoudi$^{1}$\and Christophe Prieur$^2$\and
Bogdan Robu$^{2}$\and Merwan Barlier$^{1}$\And Ludovic Dos Santos$^{3}$\\
\affiliations
$^1$Huawei Noah’s Ark Lab \\
$^2$GIPSA Lab \\
$^3$Criteo AI Lab \\
\emails
\{paul.daoudi1, \ merwan.barlier\}@huawei.com, 
l.dossantos@criteo.com, \\
\{christophe.prieur, \ bogdan.robu\}@gipsa-lab.grenoble-inp.fr
}
\begin{document}

\maketitle

\begin{abstract}
    Off-dynamics Reinforcement Learning (ODRL) seeks to transfer a policy from a source environment to a target environment characterized by distinct yet similar dynamics. In this context, traditional RL agents depend excessively on the dynamics of the source environment, resulting in the discovery of policies that excel in this environment but fail to provide reasonable performance in the target one. In the few-shot framework, a limited number of transitions from the target environment are introduced to facilitate a more effective transfer. Addressing this challenge, we propose a new approach inspired by recent advancements in Imitation Learning and conservative RL algorithms. The proposed method introduces a penalty to regulate the trajectories generated by the source-trained policy. We evaluate our method across various environments representing diverse off-dynamics conditions, where access to the target environment is extremely limited. Across most tested scenarios, our proposed method demonstrates performance improvements compared to existing baselines.
\end{abstract}

\section{Introduction}

Traditional online Reinforcement Learning (RL) is a promising path to obtain a near-optimal policy for complex systems within  a general trial-and-error framework. However, this standalone mechanism faces numerous challenges \cite{dulac2021challenges} when applied to real-world systems, especially in scenarios where interactions can be prohibitively expensive due to safety \cite{garcia2015comprehensive,achiam2017constrained} or time \cite{weiss2016survey} considerations. In such context, one alternative is to leverage a cheap source environment, usually built as a simplification of the target environment. While these environments may also differ w.r.t. their observations \cite{gamrian2019transfer} and rewards \cite{barreto2017successor}, we study the off-dynamics setting where the mismatch is between their dynamics \cite{hofer2021sim2real}. It is a relevant consideration when it is possible to estimate or simplify the physics of the target environment, for instance within a simulator. We also consider that the transition probabilities of both the source and target environments are unavailable to reflect any kind of simulator.

The off-dynamics framework is particularly challenging: direct policy transfer from the source to the target environment usually fails \cite{muratore2019assessing,ju2022transferring} due to compounding errors. The small discrepancies between the transition probabilities may accumulate over time, leading to increasing deviations between the trajectories of the source and target environments over time. Worse, modern optimization-based agents may exploit these discrepancies to find policies that perform exceptionally well in the source but result in trajectories that are impossible to replicate in the target.

In general, in addition to relying on a source environment, it is still possible to deploy the agent in the target system to collect data. However, this deployment is limited due to the mentioned safety and time considerations: the available data is this often limited to a few narrow trajectories. Two orthogonal approaches are possible to include this data in the derivation of the policy. The first one - well studied \cite{abbeel2006using,ZhuKBB18,desai2020imitation,hanna2021grounded} - leverages this data to improve the source domain, then learns a traditional RL agent on the upgraded system. The second approach maintains the source environment fixed and biases the learning process to account for the dynamics discrepancies \cite{koos2012transferability}. This second line of work is complementary to the first one, as both could be combined to make best use of the limited target samples. To the best of our knowledge, only a few works have taken this purely off-dynamics direction, and even fewer have focused on the low data regime scenario. Currently, the most prominent approach is DARC \cite{eysenbach2020off} which modifies the reward function to search for parts of the source that behave similarly to the target system. Although this method is effective for a few classes of problems, such as the "broken" environments, we have found that it may fail in others, limiting its application to a restrictive class of discrepancies between the environments.

In this paper, we introduce the \textbf{F}ew-sh\textbf{O}t \textbf{O}ff \textbf{D}ynamics (FOOD) algorithm, a conservative method that penalizes the derived policy to be around the trajectories observed in the target environment. We theoretically justify this method, which directs the policy towards feasible trajectories in the target system, and thus mitigates the potential trajectory shifts towards untrustworthy regions of the source system. Our regularization takes the form of a divergence between visitation distributions and can be practically implemented using state-of-the-art techniques from the Imitation Learning (IL) literature \cite{hussein2017imitation}. Our method is validated on a set of environments with multiple off-dynamics disparities. We show that, compared to other baselines, our approach is the most successful in exploiting the few available data. Our agent is also shown to be relevant for a wider range of dynamic discrepancies.

\section{Related Work}


The off-dynamics setting has been studied in two distinct contexts, depending on the accessibility of the agent to transitions from the target environment, referred to as "zero-shot" and "few-shot" off-dynamics RL.

\paragraph{Zero-Shot Off-Dynamics RL.} Sampling data from the target environment can be impossible due to strict safety constraints or time-consuming interactions. In such cases, the source environment is used to ensure robustness to guarantee a certain level of performance without sampling from the target system. It can take many forms. One possible choice is domain randomization \cite{mordatch2015ensemble} where relevant parts of the source system are randomized to make it resilient to changes. Another line of work focuses on addressing the worst-case scenarios under stochastic source dynamics \cite{abdullah2019wasserstein}. Robustness can also be achieved w.r.t. actions \cite{jakobi1995noise,tessler2019action}, that arise when certain controllers become unavailable in the target environment. These techniques are outside the scope of this paper as they do not involve any external data in the learning process.

\paragraph{Few-Shot Off-Dynamics RL.} When data can be sampled from the target environment, two orthogonal approaches have been developed to propose efficient agents. The first approach, well established, is to improve the accuracy of the source environment. The parameters of the source physics can be optimized directly if available \cite{ZhuKBB18,tan2018sim}. Otherwise, expressive models can be introduced to improve the source dynamics model \cite{abbeel2006using}. Within this category, a family of methods builds an action transformation mechanism that - when taken in the source system - produces the same transition that would have occurred in the target environment \cite{hanna2021grounded}. In particular, GARAT \cite{desai2020imitation} leverages recent advances in Imitation Learning from Observations \cite{TorabiWS19a} to learn this action transformation and ground the source environment with only a few trajectories. All these algorithms are orthogonal to our work since once the source system has been improved, a new RL agent has to be trained.

The second approach, more closely aligned with our work, is the line of inquiry that modifies the learning process of the RL policy in the source to be efficient in the target environment. One group of approaches creates a policy - or policies - that can quickly adapt to a variety of dynamic conditions \cite{arndt2020meta,yu2020learning,KumarFPM21}. It requires the ability to set the parameters of the source dynamics model which may not always be feasible, e.g., if the model is a black box. A more general algorithm is DARC \cite{eysenbach2020off}. It learns two classifiers to distinguish transitions between the source and target environments and incorporates them into the reward function to account for the dynamics shift. Learning the classifiers is easier than correcting the dynamics of the source environment, but as we will see in the experiments, this technique seems to work mainly when some regions of the source environment accurately model the target environment and others don't. Another related work is H2O \cite{niu2022trust} which extends the approach by considering access to a fixed dataset of transitions from a target environment. It combines the regularization of the offline algorithm CQL \cite{kumar2020conservative} with the classifiers proposed by DARC. However, the performance of H2O depends on the amount of data available. In fact, it performed similarly, or worse, to the pure offline algorithm when only a small amount of target data was available \cite[Appendix~C.3]{niu2022trust}.

\section{Background}

\subsection{Preliminaries}

Let $\Delta(\cdot)$ be the set of all probability measures on $(\cdot)$. The agent-environment interaction is modeled as a Markov Decision Process (MDP) $\left( \mathcal{S}, \mathcal{A}, r, P, \gamma, \rho_0 \right)$, with a state space $\mathcal{S}$, an action space $\mathcal{A}$, a transition kernel $P : \mathcal{S} \times \mathcal{A} \to \Delta(\mathcal{S})$, a reward function $r : \mathcal{S} \times \mathcal{A} \times \mathcal{S} \to \left[ R_{\text{min}}, R_{\text{max}} \right]$, the initial state distribution $\rho_0$ and a discount factor $\gamma\in \left[0,1\right)$. A policy $\pi : \mathcal{S} \to \Delta(\mathcal{A})$ is a decision rule mapping a state over a distribution of actions. The value of a policy $\pi$ is measured through the value function $V^\pi_P(s) = \mathbb{E}_{\pi, P}\left[ \sum_{t=0}^\infty \gamma^t r(s_t, a_t, s_{t+1}) \vert s_0=s \right]$. The objective is to find the optimal policy maximizing the expected cumulative rewards $J^\pi_P = \mathbb{E}_{\rho_0}\left[ V^\pi_P(s) \right]$. We also define the $Q$-value function $Q^\pi_P(s, a) = \mathbb{E}_{\pi, P}\left[ \sum_{t=0}^\infty \gamma^t r(s_t, a_t, s_{t+1}) \vert s_0=s, a_0=a \right]$ and the advantage value function $A^\pi_P(s, a) = Q^\pi_P(s, a) - V^\pi_P(s)$. Finally, let $d^\pi_P(s) = (1 - \gamma) \mathbb{E}_{\rho_0, \pi, P}\left[  \sum_{t=0}^\infty \gamma^t \mathbb{P}\left( s_t=s \right) \right]$ the state visitation distribution, as well as its extension to state-action $\mu^\pi_P(s,a)$ and transition $\nu^\pi_P(s,a, s')$. All these quantities are expectations w.r.t. both the policy and the transition probabilities.

The off-dynamics setting involves two MDPs: the source system $\mathcal{M}_{\text{s}}$ and the target environment $\mathcal{M}_{\text{t}}$. We hypothesize that they are identical except for their transition probabilities $P_{\text{s}} \neq P_{\text{t}} $. Our hypothesis states that while most of the MDP parameters are known, the underlying physics of the environment are only estimated. It encapsulates many real-world applications: a model of the dynamics may have been previously learned, or practitioners may have created a simulator based on a simplification of the system's physics. We do not assume access to any parameter modifying the transition probabilities $P_{\text{s}}$ of the source physics to encompass black-box simulators. For readability purposes, we drop the $P$ subscripts for the value functions as they are always associated with the source environment $\mathcal{M}_{\text{s}}$.

Many few-shot off-dynamics agents \cite{abbeel2006using,desai2020imitation,hanna2021grounded} typically employ the following procedure to handle complex environments. First, the policy and value functions are initialized in the source system. The choice of objective at this stage can vary, although a classical approach is to solely maximize the rewards of the source MDP. At each iteration, the policy is verified by experts. If it is deemed safe, $N$ trajectories are gathered in the target environment and saved in a replay buffer $\mathcal{D}_{\text{t}}$. These trajectories are then used to potentially correct the source dynamics and/or the training objective and induce a new policy. This process is repeated until a satisfactory policy is found. This setup is time-consuming and may be risky even when the policy is verified by experts, hence the need to learn with as few data as possible from the target environment.

This work focuses on how to best modify the objective with few trajectories. If handled properly, this could reduce the number of interactions required by the whole process overcoming the need to build a perfect source environment. For the purpose of our study, we assume that $\mathcal{M}_{\text{t}}$ remains fixed throughout the process.

\subsection{Conservative Algorithms and the Visitation Distribution Constraint}

Due to their efficiency and stability, conservative algorithms \cite{schulman2015trust,schulman2017proximal,kumar2020conservative} have shown strong efficiency in various RL settings. Many of them are based on \cite{kakade2002approximately}, where an iteration scheme improves the policy by maximizing a lower bound on the true objective. This process has been extended by refining the lower bound \cite{terpin2022trust,moskovitz2020efficient}, for example by introducing a behavior $b^\pi_P$ that encapsulates any additional property of the MDP \cite{pacchiano2020learning,touati2020stable}.

This family of algorithms is formalized as follows. A policy and a value function are parametrized with respective weights $\theta\in\Theta$ and $\omega\in\Omega$, that we denote from now on $\pi_\theta$ and $V^{\pi_\theta}_\omega$. At each iteration $k$, the policy is improved using the advantage function built from the approximated value function $V^{\pi_{\theta_k}}_{\omega_k}$, with a penalization to respect the lower bound:

\begin{maxi}{\theta\in\Theta}{\mathbb{E}_{\substack{s\sim d^{\pi_{\theta_k}}_{P_{\text{s}}}(\cdot),\\ a\sim\pi_\theta(\cdot|s)}} \left[ A^{\pi_{\theta_k}}_{\omega_k}(s,a) \right] - \alpha_k \ \infdiv*{b^{\pi_{\theta}}_P}{b^{\pi_{\theta_k}}_P},}{}{}
\end{maxi}
where $D$ is any kind of similarity metric and $\alpha_k$ is a hyper-parameter, often set to a constant $\alpha$. TRPO \cite{schulman2015trust} can be retrieved with $b^\pi_P = \pi$, by setting $D$ to be the Kullback-Leibler (KL) divergence and enforcing the penalization as a constraint with a step-size $\epsilon_k$. Alternative behavior options can be found in \cite{pacchiano2020learning,touati2020stable,moskovitz2020efficient}. In particular, \cite{touati2020stable} proposed to encapsulate the whole trajectories induced by $\pi$ and $P$ by setting $b^\pi_P = d^{\pi}_{P}$. It resulted in better results both in terms of sample efficiency and final cumulative rewards than most of its counterparts. This is natural as the new constraint between the state visitation distributions takes the whole trajectories induced by the policy into account, providing more information than the policy alone.

\section{Few-Shot Off-Dynamics Reinforcement Learning}

In this section, we propose a new objective to better transfer a policy learned in the source to the target environment. We extend the conservative objective to the off-dynamics setting. Then, we remind necessary results on Imitation Learning (IL) before deriving our practical algorithm \textbf{F}ew-sh\textbf{O}t \textbf{O}ff \textbf{D}ynamics (FOOD) RL.

\subsection{A New Conservative Off-Dynamics Objective}

Given the discrepancies between the dynamics of the source and the target environment, applying the same policy to both environments may result in different trajectories. This poses a challenge as the agent may make the most of these differences to find policies that produce excellent trajectories in the source environment but are impossible to replicate in the target system.

We analyze the difference between the objectives $J^\pi_P$ associated with the target and source environments, depending on a metric between visitation distributions. For this, we first apply directly the tools from traditional conservative methods \cite{schulman2015trust,achiam2017constrained} to the off-dynamics setting, and propose the following lower bound using state visitation distributions.

\begin{proposition}
\label{prop:trust_region_1}
    Let $J^\pi_P = \mathbb{E}_{\rho_0}\left[ V^\pi_P(s) \right]$ the expected cumulative rewards associated with policy $\pi$, transitions $P$ and initial state distribution $\rho_0$. For any policy $\pi$ and any transition probabilities $P_{\text{t}}$ and $P_{\text{s}}$, the following holds:
    
    \begin{equation}
    J^\pi_{P_{\text{t}}} \geq J^\pi_{P_{\text{s}}} - \frac{2 R_{\text{max}}}{1-\gamma} \left( \tvinfdiv*{d^\pi_{P_{\text{s}}}}{d^\pi_{P_{\text{t}}}} + \tvpiinfdiv{P_{\text{s}}}{P_{\text{t}}} \right),
\end{equation}
with $D_{\text{TV}}$ the Total Variation distance and $\tvpiinfdiv{P_{\text{s}}}{P_{\text{t}}} = \mathbb{E}_{\substack{s\sim d^\pi_{P_{\text{s}}}(\cdot), \\ a\sim\pi(\cdot|s)}} \left[ 
        \tvinfdiv*{P_{\text{s}}\left( \cdot \vert s, a\right)}{P_{\text{t}}\left( \cdot \vert s, a\right)}
        \right]$. 


\end{proposition}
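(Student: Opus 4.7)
The plan is to reduce the claim to a Total Variation bound between the transition visitation distributions $\nu^\pi_{P_{\text{s}}}$ and $\nu^\pi_{P_{\text{t}}}$, and then to split that single TV distance into the two pieces appearing in the bound via one triangle inequality through a carefully chosen intermediate measure. First, from the definitions of $V^\pi_P$, $d^\pi_P$ and $\nu^\pi_P(s,a,s') = d^\pi_P(s)\pi(a|s)P(s'|s,a)$, I would start from the ``flat'' identity
\[
J^\pi_P \;=\; \frac{1}{1-\gamma}\,\mathbb{E}_{(s,a,s')\sim\nu^\pi_P}\!\left[r(s,a,s')\right],
\]
which holds for both $P = P_{\text{s}}$ and $P = P_{\text{t}}$. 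Writing $J^\pi_{P_{\text{s}}} - J^\pi_{P_{\text{t}}}$ as an integral of $r$ against $\nu^\pi_{P_{\text{s}}} - \nu^\pi_{P_{\text{t}}}$ and using $|r|\leq R_{\max}$ together with the variational characterization of TV, I obtain
\[
J^\pi_{P_{\text{t}}} \;\geq\; J^\pi_{P_{\text{s}}} - \frac{2R_{\max}}{1-\gamma}\, \tvinfdiv*{\nu^\pi_{P_{\text{s}}}}{\nu^\pi_{P_{\text{t}}}}.
\]

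Next I would introduce the auxiliary measure
\[
\tilde\nu(s,a,s') \;=\; d^\pi_{P_{\text{s}}}(s)\,\pi(a|s)\,P_{\text{t}}(s'|s,a),
\]
which couples the \emph{source} state occupancy with the \emph{target} one-step kernel, and apply the TV triangle inequality
\[
\tvinfdiv*{\nu^\pi_{P_{\text{s}}}}{\nu^\pi_{P_{\text{t}}}} \;\leq\; \tvinfdiv*{\nu^\pi_{P_{\text{s}}}}{\tilde\nu} + \tvinfdiv*{\tilde\nu}{\nu^\pi_{P_{\text{t}}}}.
\]
A direct calculation factoring the shared terms then shows that the first summand collapses to $\tvpiinfdiv{P_{\text{s}}}{P_{\text{t}}}$ (the $(s,a)$-marginal is identical on both sides of $\nu^\pi_{P_{\text{s}}}$ and $\tilde\nu$), while the second summand collapses to $\tvinfdiv*{d^\pi_{P_{\text{s}}}}{d^\pi_{P_{\text{t}}}}$ (summing out $a$ and $s'$ against $\pi$ and $P_{\text{t}}$, which integrate to $1$). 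Plugging this back into the bound from Step~1 produces the announced inequality.

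Each individual step is routine once the key object is identified, so the only real choice is the intermediate measure $\tilde\nu$. Pairing the source occupancy with the target kernel is what makes the two mismatches --- long-term trajectory shift versus one-step dynamics error --- decouple into exactly the two distances on the right-hand side. A direct Kakade--Langford-style telescoping, by contrast, would produce an extra $1/(1-\gamma)$ factor multiplying the one-step transition term; here that propagation is already absorbed into the state-visitation distance $\tvinfdiv*{d^\pi_{P_{\text{s}}}}{d^\pi_{P_{\text{t}}}}$, which is what makes the bound as tight as stated. The main obstacle, therefore, is resisting the temptation to unfold the trajectory distribution explicitly and instead trust the compact one-shot decomposition through $\tilde\nu$.
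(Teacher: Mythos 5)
Your argument is correct, and it reaches the stated bound by a genuinely different route from the paper's. The paper proves this proposition through a CPO-style identity (Lemma~\ref{lem:achiam}): for a free test function $f$ it writes $J^\pi_{P_{\text{t}}}-J^\pi_{P_{\text{s}}}$ as a difference of occupancy-weighted expectations of $\delta_f$, re-centres the $d^\pi_{P_{\text{t}}}$-average around $d^\pi_{P_{\text{s}}}$, applies H\"older to obtain the $D_{\text{TV}}(d^\pi_{P_{\text{s}}},d^\pi_{P_{\text{t}}})$ term, and then sets $f\equiv 0$ so that the residual $L_f$ reduces to the expected one-step kernel TV. You instead first prove the transition-visitation bound $\lvert J^\pi_{P_{\text{t}}}-J^\pi_{P_{\text{s}}}\rvert\le \frac{2R_{\text{max}}}{1-\gamma}D_{\text{TV}}(\nu^\pi_{P_{\text{s}}},\nu^\pi_{P_{\text{t}}})$ --- which is exactly the paper's Proposition~\ref{prop:trust_region} --- and then split the right-hand side by a triangle inequality through the hybrid measure $\tilde\nu=d^\pi_{P_{\text{s}}}\,\pi\,P_{\text{t}}$; both legs do collapse exactly as you claim (the first because the $(s,a)$-marginals of $\nu^\pi_{P_{\text{s}}}$ and $\tilde\nu$ agree, the second because $\pi P_{\text{t}}$ integrates to one for each $s$). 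Your route is shorter and has the bonus of making explicit that Proposition~\ref{prop:trust_region} implies Proposition~\ref{prop:trust_region_1} (the $\nu$-based bound is the tighter of the two), a relationship the paper leaves implicit; the paper's route buys generality, since keeping $f$ free also yields the importance-sampling variant with $f=V^\pi_{P_{\text{s}}}$ discussed in its appendix, which your one-shot decomposition does not produce. Two small caveats: your closing remark slightly mischaracterizes the alternative, since the paper's proof is precisely a Kakade--Langford-style telescoping and does not incur an extra $1/(1-\gamma)$ on the kernel term (that propagation is absorbed by $D_{\text{TV}}(d^\pi_{P_{\text{s}}},d^\pi_{P_{\text{t}}})$ in both arguments); and both proofs bound $\lvert r\rvert$ by $R_{\text{max}}$ even though the reward range is stated as $[R_{\text{min}},R_{\text{max}}]$, a shared and harmless imprecision.
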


The Proposition also holds by replacing $\tvinfdiv*{d^\pi_{P_{\text{s}}}}{d^\pi_{P_{\text{t}}}}$ with $\tvinfdiv*{\mu^\pi_{P_{\text{s}}}}{\mu^\pi_{P_{\text{t}}}}$. We defer the proof to Appendix~\ref{app:proofs}. It illustrates how the performance of the optimal policy in the target environment may differ from that of the source due to two metrics. The first metric $\tvinfdiv*{d^\pi_{P_{\text{s}}}}{d^\pi_{P_{\text{t}}}}$ quantifies the difference between the visited states of the rollouts in the source and target environments. The second $\tvinfdiv*{d^\pi_{P_{\text{s}}}}{d^\pi_{P_{\text{t}}}}$ describes the difference between the transition probabilities associated with the visited states and the actions following the given policy. These terms must be controlled to allow a good transfer, especially given that they are exacerbated by the factor $\frac{2 R_{\text{max}}}{1-\gamma}$. However, optimizing the second metric is difficult since the transition probabilities of both $\mathcal{M}_{\text{t}}$ and $\mathcal{M}_{\text{s}}$ are unknown. Hence, we propose the following simpler lower bound that considers transition visitation distributions instead.

\begin{proposition}
\label{prop:trust_region}
    Let $J^\pi_P = \mathbb{E}_{\rho_0}\left[ V^\pi_P(s) \right]$ the expected cumulative rewards associated with policy $\pi$, transitions $P$ and initial state distribution $\rho_0$. For any policy $\pi$ and any transition probabilities $P_{\text{t}}$ and $P_{\text{s}}$, the following holds:
    
    \begin{equation}
    J^\pi_{P_{\text{t}}} \geq J^\pi_{P_{\text{s}}} - \frac{2 R_{\text{max}}}{1-\gamma}  \tvinfdiv*{\nu^\pi_{P_{\text{s}}}}{\nu^\pi_{P_{\text{t}}}},
\end{equation}
with $D_{\text{TV}}$ the Total Variation distance.   
\end{proposition}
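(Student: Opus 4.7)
The plan is to bypass the decomposition used in Proposition~\ref{prop:trust_region_1} by rewriting $J^\pi_P$ directly as a single expectation against the transition visitation distribution $\nu^\pi_P$. Doing so turns the gap between the two objectives into one integral against $\nu^\pi_{P_{\text{t}}} - \nu^\pi_{P_{\text{s}}}$, which total variation then controls in one step, with no residual kernel-discrepancy term to carry around.

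The first step is to unfold $J^\pi_P$ using the definition of the transition visitation distribution. Writing $\nu^\pi_P(s,a,s') = (1-\gamma)\sum_{t\geq 0}\gamma^t \mathbb{P}_{\rho_0,\pi,P}(s_t=s,\,a_t=a,\,s_{t+1}=s')$ and exchanging sum and expectation via Fubini-Tonelli yields
\begin{equation*}
J^\pi_P \;=\; \mathbb{E}_{\rho_0,\pi,P}\!\left[\sum_{t=0}^\infty \gamma^t r(s_t,a_t,s_{t+1})\right] \;=\; \frac{1}{1-\gamma}\,\mathbb{E}_{(s,a,s')\sim\nu^\pi_P}\!\left[r(s,a,s')\right].
\end{equation*}
This representation is what makes $\nu^\pi$ the natural object for the bound: all dependence on $P$ is now packaged inside $\nu^\pi_P$, and the reward no longer carries any explicit $P$-dependence.

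The second step is to subtract the two versions of this identity, one for $P_{\text{t}}$ and one for $P_{\text{s}}$, and apply the standard change-of-measure inequality. Since $|r(s,a,s')|\leq R_{\text{max}}$,
\begin{equation*}
\left|\mathbb{E}_{\nu^\pi_{P_{\text{t}}}}[r] - \mathbb{E}_{\nu^\pi_{P_{\text{s}}}}[r]\right| \;\leq\; 2\,R_{\text{max}}\,\tvinfdiv*{\nu^\pi_{P_{\text{s}}}}{\nu^\pi_{P_{\text{t}}}},
\end{equation*}
where the factor of $2$ reflects the convention $D_{\text{TV}}(P\|Q)=\tfrac{1}{2}\|P-Q\|_1$. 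Dividing by $1-\gamma$ and keeping only the lower side of the absolute value gives exactly the announced inequality.

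There is essentially no real obstacle beyond bookkeeping: the argument reduces to the two elementary facts above. The only things to be careful about are the constants --- the $\tfrac{1}{1-\gamma}$ arises from the normalisation in $\nu^\pi_P$, and the $2R_{\text{max}}$ arises from pairing the reward bound with the chosen TV convention. Crucially, no separate $\tvpiinfdiv{P_{\text{s}}}{P_{\text{t}}}$ term survives here, because the kernel discrepancy is already absorbed into the comparison between the two joint distributions $\nu^\pi_{P_{\text{s}}}$ and $\nu^\pi_{P_{\text{t}}}$, which is precisely the conceptual gain over Proposition~\ref{prop:trust_region_1}.
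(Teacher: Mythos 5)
Your proposal is correct and follows essentially the same route as the paper's proof: both rewrite $J^\pi_P = \frac{1}{1-\gamma}\,\mathbb{E}_{(s,a,s')\sim\nu^\pi_P}[r(s,a,s')]$ and then bound the difference of the two expectations by $2R_{\text{max}}\,\tvinfdiv*{\nu^\pi_{P_{\text{s}}}}{\nu^\pi_{P_{\text{t}}}}$ (the paper phrases this last step as H\"older's inequality with $p=\infty$, $q=1$, which is the same change-of-measure bound you invoke). Your remark on where the factor $2$ and the $\frac{1}{1-\gamma}$ come from matches the paper's accounting exactly.
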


We also defer the proof to Appendix~\ref{app:proofs}. Here, the lower bound depends on the sole metric $\tvinfdiv*{\nu^\pi_{P_{\text{s}}}}{\nu^\pi_{P_{\text{t}}}}$ that directly quantifies the difference in trajectories. As we will see, this term is easily minimized, especially given that the Total Variation distance could be replaced by other divergences. For instance, the Kullback-Leibler divergence or the Jensen-Shannon divergence could be used thanks to Pinsker's inequality \cite{csiszar1981coding} or the one in \cite[Proposition 3.2]{corander2021jensen}, provided the minimal assumptions of having a finite state-action space and the absolute continuity of the considered measures. Complete details can be found in Appendix~\ref{app:proofs}.

Overall, this lower bound highlights a good transfer between the source and target environment is possible when $\tvinfdiv*{\nu^\pi_{P_{\text{s}}}}{\nu^\pi_{P_{\text{t}}}}$ is small, as the policy induces similar objectives $J^\pi_P$. Inspired by this insight, we adapt conservative methods to the off-dynamics setting and propose a new regularization between trajectories by setting the behaviors $b^\pi_P$ to be the transition visitation distribution respectively associated with the transition probabilities of the source and target environment $\nu^\pi_P$:

\begin{maxi}{\theta\in\Theta}{\mathbb{E}_{\substack{s\sim d^{\pi_{\theta_k}}_{P_{\text{s}}}(\cdot),\\ a\sim\pi_\theta(\cdot|s)}} \left[ A^{\pi_{\theta_k}}_{\omega_k}(s,a) \right] - \alpha \ \infdiv*{\nu^{\pi_\theta}_{P_{\text{s}}}}{\nu^{\pi_{\theta_k}}_{P_{\text{t}}}}.}{}{}
\label{eq:true_objective}
\end{maxi}
The new penalization ensures that the policy is optimized for trajectories that are feasible in the target system, thus preventing the RL agent from exploiting any potential hacks that may exist in the source environment. In addition, remaining close to the data sampled from the target environment can be beneficial when the source system has been constructed using that data, as querying out-of-distribution data can yield poor results \cite{kang2022lyapunov}.

Unfortunately, the difference between the transition probabilities makes the regularization in Equation~\ref{eq:true_objective} difficult to compute. The previous work of \cite{touati2020stable} addressed this by restricting $D$ to $f$-divergences $D_f\left( \mu^{\pi_\theta}_{P} || \mu^{\pi_{\theta_k}}_{P} \right) = \mathbb{E}_{(s, a)\sim \pi_\theta}\left[ f\left( \frac{\mu^{\pi_\theta}_{P}}{\mu^{\pi_{\theta_k}}_{P}} \right) \right]$ and by considering state-action visitation distributions. \cite{touati2020stable} used the DualDICE algorithm \cite{nachum2019dualdice} to directly estimate the relaxed ratio $\frac{\mu^{\pi_\theta}_{P}}{\mu^{\pi_{\theta_k}}_{P}}$ for any policy $\pi_\theta$ sufficiently close to $\pi_{\theta_k}$, eliminating the need to sample data for each policy. However, this method is not applicable to our setting because DualDICE relies on a modified joined Bellman operator, which assumes that both distributions follow the same transition probabilities. Another solution would be to collect at least one trajectory per update. While this would not pose any safety concerns for the data would be sampled in the source system, it can be time-consuming in practice.

\subsection{Practical Algorithm}

In order to devise a practical algorithm for addressing Equation \ref{eq:true_objective}, we aim to get a surrogate objective for $\infdiv*{\nu^{\pi_\theta}_{P_{\text{s}}}}{\nu^{\pi_{\theta_k}}_{P_\text{t}}}$. To construct such proxy, we leverage the recent results from Imitation Learning (IL) \cite{hussein2017imitation} that we briefly recall in this paragraph. In this field, the agent aims to reproduce an expert policy $\pi_e$ using limited data sampled by that expert in the same MDP with generic transition probabilities $P$. Most current algorithms tackle this problem by minimizing a certain similarity metric $D$ between the learning policy's state-action visitation distribution $\mu^{\pi_\theta}_{P}$ and the expert's $\mu^{\pi_e}_{P}$. The divergence minimization problem is transformed into a reward $r_{\text{imit}}$ maximization one, resulting in an imitation value function $V^{\pi}_{\text{imit}} = \mathbb{E}_{\pi, P_{\text{s}}}\left[ \sum_{t=0}^\infty \gamma^t r_{\text{imit}}(s_t, a_t, s_{t+1}) \vert s_0=s \right] $. Since these algorithms are based on data, they can be used to minimize the chosen similarity metric $D$ between two transition visitation distributions with different transition probabilities. Applied to our setting where the target trajectories would be the expert ones, this is formalized as:

\begin{equation} \label{eq:imitation}
    \arg\max_{\theta\in\Theta} V^{\pi_\theta}_{\text{imit}} = \arg\min_\pi \infdiv*{\nu^{\pi_\theta}_{P_{\text{s}}}}{\nu^{\pi_{\theta_k}}_{P_{\text{t}}}}.
\end{equation}
The choices for the divergence $D$ are numerous, leading to different IL algorithms \cite{ho2016generative,fu2017learning,dadashi2020primal}, some of which are summarized in Table~\ref{table:imitation_distances}. For example, Equation~\ref{eq:imitation} is exactly Theorem 1 in \cite{desai2020imitation} in association with GAIL \cite{ho2016generative}, and is also straigthforward with PWIL \cite{dadashi2020primal}.


\begin{table}[ht!]
\centering 
\resizebox{0.99\columnwidth}{!}{\begin{tabular}{|c|c|c|}
  \hline
  GAIL & AIRL & PWIL \TSMALL\BSMALL \\ \hline
  $\jsinfdiv*{X^{\pi_{\theta}}_{P_{\text{s}}}}{X^{\pi_{\theta_k}}_{P_{\text{t}}}}$ 
  & $\klinfdiv*{X^{\pi_{\theta}}_{P_{\text{s}}}}{X^{\pi_{\theta_k}}_{P_{\text{t}}}}$
  & $\winfdiv*{X^{\pi_{\theta}}_{P_{\text{s}}}}{X^{\pi_{\theta_k}}_{P_{\text{t}}}}$ \TSMALL\BSMALL \\ \hline
  
\end{tabular} }
\caption{Objective function for well-known Imitation Learning (IL) algorithms. The variable $X$ can be chosen as either $d$, $\mu$, or $\nu$. Other IL agents can be found in \protect\cite{ghasemipour2020divergence}.}%
\label{table:imitation_distances}
\end{table}

These IL techniques enable efficient estimation of this value function using a small number of samples from $\nu^{\pi_{\theta_k}}_{P_{\text{t}}}$ and unlimited access to $\mathcal{M}_{\text{s}}$. Let $\xi\in\Xi$ be the weights of this parametrized value function. The new regularization is $A^{\pi_{\theta_k}, {\xi_k}}_{\text{imit}}(s,a)$, which can be learned with any suitable IL algorithm. It leads to the practical objective:

\begin{maxi}{\theta\in\Theta}{\mathbb{E}_{\substack{s\sim d^{\pi_{\theta_k}}_{P_{\text{s}}}(\cdot),\\ a\sim\pi_\theta(\cdot|s)}} \left[ A^{\pi_{\theta_k}}_{\omega_k}(s,a) - \alpha \ A^{\pi_{\theta_k}, {\xi_k}}_{\text{imit}}(s,a) \right].}{}{}
\label{eq:proxy_objective}
\end{maxi}

\begin{algorithm}[t]
\hspace*{\algorithmicindent} \textbf{Input:} Algorithms $\mathcal{O}$ and $\mathcal{I}$
\begin{algorithmic}
\STATE Initialize policy and value weights $\theta_0$ and $\omega_0$ with $\mathcal{O}$
\STATE Randomly initialize the weights $\xi_0$
\FOR {$k \in (0, \dots, K-1)$} 
    \STATE Gather $N$ trajectories $\{ \tau_i, \dots, \tau_N \}$ with $\pi_{\theta_k}$ on the target environment $\mathcal{M}_t$ and add them in $\mathcal{D}_{\text{t}}$
    \STATE Learn the value function weights $\omega_{k+1}$ with $\mathcal{O}$ in the source environment $\mathcal{M}_{\text{s}}$
    \STATE Learn the imitation value function weights $\xi_{k+1}$ with $\mathcal{I}$ in $\mathcal{M}_{\text{s}}$ using $\mathcal{D}_{\text{t}}$
    \STATE Learn the policy maximizing \eqref{eq:proxy_objective} using $\mathcal{D}_{\text{t}}$ and $\mathcal{M}_{\text{s}}$ with $\mathcal{O}$
\ENDFOR
\end{algorithmic}
\caption{Few-shOt Off Dynamics (FOOD).}
\label{alg:main_algo}
\end{algorithm}

This new agent is quite generic as it could be optimized with different divergences. It takes as input an online RL algorithm \cite{babaeizadehreinforcement,schulman2017proximal} denoted $\mathcal{O}$ and an Imitation Learning algorithm denoted $\mathcal{I}$. The whole off-dynamics algorithm process, which we denote Few-shOt Off Dynamics (FOOD) RL, is described as follows. First, the policy and the value weights are initialized in the source environment with $\mathcal{O}$. At each iteration $k$, the agent samples $N$ new trajectories with $\pi_{\theta_k}$\footnote{These trajectories could first be used to improve the source system.}. Subsequently, the policy, traditional, and imitation value functions are retrained on the source environment with $\mathcal{O}$ and $\mathcal{I}$ according to Equation~\ref{eq:true_objective}. The whole algorithm is summarized in Algorithm~\ref{alg:main_algo}.

\section{Experiments} 
\label{sec:experiments}

In this section, we evaluate the performance of the FOOD algorithm in the off-dynamics setting in environments presenting different dynamics discrepancies, treated as black box simulators. The code can be found at \url{https://github.com/PaulDaoudi/FOOD}.

The environments are based on Open AI Gym \cite{brockman2016openai} and the Minitaur environment \cite{coumans2021} where the target environment has been modified by various mechanisms. These include gravity, friction, and mass modifications, as well as broken joint(s) systems for which DARC is known to perform well \cite[Section~6]{eysenbach2020off}. We also add the Low Fidelity Minitaur environment, highlighted in previous works \cite{desai2020imitation,yu2018policy} as a classical benchmark for evaluating agents in the off-dynamics setting. In this benchmark, the source environment has a linear torque-current relation for the actuator model, and the target environment - proposed by \cite{tan2018sim} - uses accurate non-linearities to model this relation.

All of FOOD experiments are carried out using both GAIL \cite{ho2016generative}, a state-of-the-art IL algorithm, as $\mathcal{I}$. We find that GAIL performed similarly, or better than other IL algorithms such as AIRL \cite{fu2017learning} or PWIL \cite{dadashi2020primal}. FOOD is tested with its theoretically motivated metric between transition visitation distributions $\nu^\pi_P$, as well as with $d^\pi_P$ and $\mu^\pi_P$ for empirically analyzing the performance associated with the different visitation distributions. The performance of our agent with the different IL algorithms can be found in Appendix~\ref{app:il_comparison}. We compare our approach against various baselines modifying the RL objective, detailed below. They cover current domain adaptation, robustness, or offline Reinforcement Learning techniques applicable to our setting. Further details of the experimental protocol can be found in Appendix~\ref{app:exp_details}. Especially, a study with respect to the number of available target data is done in Appendix~\ref{app:study_data}.

\begin{itemize}
    
    \item \textbf{DARC} \cite{eysenbach2020off} is our main baseline. It is a state-of-the-art off-dynamics algorithm that introduces an importance sampling term in the reward function to cope with the dynamics shift. In practice, this term is computed using two classifiers that distinguish transitions from the source and the target environment. In this agent, an important hyper-parameter is the standard deviation $\sigma_{\text{DARC}}$ of the centered Gaussian noise injected into the training data to stabilize the classifiers \cite[Figure~7]{eysenbach2020off}. We draw inspiration from the open-source code \cite{pytorchh2odarc}.

    \item \textbf{Action Noise Envelope (ANE)} \cite{jakobi1995noise} is a robust algorithm that adds a centered Gaussian noise with standard deviation $\sigma_{\text{ANE}}$ to the agent's actions during training. Although simple, this method outperformed other robustness approaches in recent benchmarks \cite{desai2020imitation} when the source environment is a black box.

    \item \textbf{CQL} \cite{kumar2020conservative} is a purely offline RL algorithm that learns a policy using target data. It does not leverage the source system in its learning process and thereby serves as a lower bound to beat. This algorithm inserts a regularization into the $Q$-value functions, with a strength $\beta$. We use \cite{cqlcode} to run the experiments.

    \item \textbf{H2O} \cite{niu2022trust} is another off-dynamics algorithm that leverages data from the target system. It combines the classifiers from DARC to the CQL regularization. Similarly to FOOD, the agent is encentisized to stay close to the target data, but does so with a combination of DARC and CQL regularization. We also use \cite{pytorchh2odarc} to run these experiments. However, this method performed poorly, which was expected considering it was proposed for a setting where a large amount of target data is available. Hence, its results are omitted from Table~\ref{table:global_results} and deferred in Appendix~\ref{app:h2o_results}.    

    \item We also consider two RL agents, \textbf{$\text{RL}_\text{s}$} trained solely on the source system (without access to any target data) and \textbf{$\text{RL}_\text{t}$} trained solely on the target environment. Both algorithms were trained to convergence. Even though the latter baselines do not fit in the off-dynamics setting they give a rough idea of how online RL algorithms would perform in the target environment. The online RL algorithm $\mathcal{O}$ depends on the environment: we use A2C \cite{babaeizadehreinforcement} for Gravity Pendulum and PPO \cite{schulman2017proximal} for the other environments.
    
\end{itemize}

\begin{table*}[ht!]
\centering 
\resizebox{1.999\columnwidth}{!}{\begin{tabular}{|p{3.1cm}||P{1.9cm}|P{1.9cm}||c|c|c|c|c|c|}
  \hline
  \multirow{2}{*}{\makecell{\parbox{3.1cm}{\centering Environment}}}
  & \multirow{2}{*}{$\text{RL}_\text{s}$} 
  & \multirow{2}{*}{$\text{RL}_\text{t}$} 
  & \multirow{2}{*}{CQL} 
  & \multirow{2}{*}{ANE} 
  & \multirow{2}{*}{DARC}
  & \multicolumn{3}{c|}{FOOD (Ours)} \T\B \\ 
  \cline{7-9}
  & & & & & & $\mu^\pi_P$ & $d^\pi_P$ & $\nu^\pi_P$ \T\B \\ 
  \hline
  Gravity Pendulum
  & $-1964 \pm 186$ 
  & $-406 \pm 22$
  & $-1683 \pm 142$
  & $-2312 \pm 11$
  & $-3511 \pm 865$
  & $-2224 \pm 43$
  & $\bm{-485 \pm 54^*}$
  & $-2327 \pm 14$
  \T\B \\

  Broken Joint Cheetah
  & $1793 \pm 1125$ 
  & $5844 \pm 319$
  & $143 \pm 104$
  & $3341 \pm 132$
  & $2501 \pm 211$
  & $3801 \pm 155$
  & $\bm{3888 \pm 201}$
  & $\bm{3921 \pm 85^*}$
  \T\B \\

  Broken Joint Minitaur
  & $7.4 \pm 4.1 $
  & $20.8 \pm 4.8$
  & $0.25 \pm 0.09$
  & $7.8 \pm 6$
  & $12.6 \pm 1.12$
  & $\bm{14.9 \pm 3}$
  & $\bm{13.6 \pm 3.8}$
  & $\bm{16.9 \pm 4.7^*}$
  \T\B \\

  Heavy Cheetah
  & $3797 \pm 703 $
  & $11233 \pm 1274$
  & $41 \pm 34$
  & $\bm{7443 \pm 330^*}$
  & $4165 \pm 257$
  & $4876 \pm 181$
  & $4828 \pm 553$
  & $4519 \pm 240$
  \T\B \\

  Broken Joint Ant
  & $5519 \pm 876 $
  & $6535 \pm 352$
  & $1042 \pm 177$
  & $3231 \pm 748$
  & $5446 \pm 162$
  & $\bm{6145 \pm 98^*}$
  & $5547 \pm 204$
  & $\bm{6135 \pm 122}$
  \T\B \\

  Friction Cheetah
  & $1427 \pm 674 $
  & $9455 \pm 3554$
  & $-466.4 \pm 13$
  & $\bm{6277 \pm 1405^*}$
  & $3302 \pm 591$
  & $3690 \pm 1495$
  & $3212 \pm 2279$
  & $3289 \pm 236$
  \T\B \\

  Low Fidelity Minitaur
  & $8.9 \pm 5.8 $
  & $27.1 \pm 8$
  & $10.2 \pm 1$
  & $6.4 \pm 3$
  & $3.2 \pm 1.8$
  & $\bm{17 \pm 2}$
  & $15.7 \pm 2.8$
  & $\bm{17.6 \pm 0.4^*}$
  \T\B \\

  Broken Leg Ant
  & $1901 \pm 981 $
  & $6430 \pm 451$
  & $830 \pm 8$
  & $2611 \pm 220$
  & $2305 \pm 175$
  & $2652 \pm 356$
  & $2345 \pm 806$
  & $\bm{2977 \pm 85^*}$
  \T\B \\
  \cline{1-9}

  Median NAR and std \T\B
  & $ 0 \ ; \ 0.25$
  & $ 1 \ ; \ 0.26$
  & $ -0.32 \ ; \ 0.02 $
  & $ 0.1 \ ; \ 0.11 $
  & $ 0.04 \ ; \ 0.04 $
  & $ \bm{0.37 \ ; \ 0.09^*} $
  & $ \bm{0.29 \ ; \ 0.17} $
  & $ \bm{0.36 \ ; \ 0.03} $
  \T\B \\ 
  \cline{1-9}

\end{tabular} }
\caption{Returns over $4$ seeds for CQL, $\text{RL}_{\text{s}}$, $\text{RL}_{\text{t}}$ and ANE, and $8$ seeds for the other agents on the tested environments. The best agent w.r.t. the mean is highlighted with boldface and an asterisk. We perform an unpaired t-test with an asymptotic significance of 0.1 w.r.t. the best performer and highlight with boldface the ones for which the difference is not statistically significant.}
\label{table:global_results}
\end{table*}

\paragraph{Experimental Protocol.} Our proposed model and corresponding off-dynamics/offline baselines require a batch of target data. To provide such a batch of data, we first train a policy and a value function of the considered RL agent until convergence by maximizing the reward on the source environment. After this initialization phase, $5$ trajectories are sampled from the target environment to fit the restricted target data regime. They correspond to $500$ data points for Pendulum and $1000$ data points for the other environments. If some trajectories perform poorly in the target environment, we remove them for FOOD, DARC, and CQL to avoid having a misguided regularization. FOOD, DARC, and ANE are trained for $5000$ epochs in the source environment and were optimized with the same underlying agent. Both $\text{RL}_\text{s}$ and $\text{RL}_\text{t}$ are trained until convergence. CQL is trained for $100 000$ gradient updates for Gravity Pendulum and $500000$  gradient updates for all other environments. Additional details can be found in Appendix~\ref{app:exp_details}.

\paragraph{Hyper-parameters Optimization.} We optimize the hyper-parameters of the evaluated algorithms through a grid search for each different environment. Concerning DARC and ANE, we perform a grid search over their main hyper-parameter $\sigma_{\text{DARC}}\in\{0.0, 0.1, 0.5, 1\}$ and $\sigma_{\text{ANE}}\in\{0.1, 0.2, 0.3, 0.5\}$. The remaining hyper-parameters were set to their default values according to the experiments reported in the open-source code \cite{pytorchh2odarc}. For CQL, we perform a grid search over the regularization strength $\beta \in\{5, 10\}$, otherwise we keep the original hyper-parameters of \cite{cqlcode}. For $\text{RL}_\text{t}$ and $\text{RL}_\text{s}$ we used the default parameters specific to each environment according to \cite{pytorchrl} and trained them over $4$ different seeds. We then selected the seed with the best performance in the source environment. For our proposed algorithm FOOD, the regularization strength hyper-parameter $\alpha$ is selected over a grid search depending on the underlying RL agent, $\alpha\in\{ 0, 1, 5, 10 \}$ for A2C and  $\alpha\in\{ 0.5, 1, 2, 5 \}$ for PPO. This difference in choice is explained by the fact that the advantages are normalized in PPO, giving a more refined control over the regularization weight.

\paragraph{Results.} We monitor the evolution of the agents' performance by evaluating their average return in the target environment during training. We do not gather nor use the data from those evaluations in the learning process since we work under a few-shot framework. The return of all methods is computed and averaged over $4$ seeds for CQL, $\text{RL}{\text{s}}$, $\text{RL}{\text{t}}$ and ANE, and $8$ seeds for FOOD and  DARC. For clarity, the standard deviation in Figure~\ref{fig:comparison_food_darc_hyperparams} is divided by $2$ for readable purposes. In all figures, the $x$-axis represents the number of epochs where each epoch updates the policy and value functions with $8$ different trajectories from the source environment.

\begin{figure*}[htp]
\centering
\includegraphics[width=.999\textwidth, trim=0cm 0.8cm 0cm 0.6cm, clip]{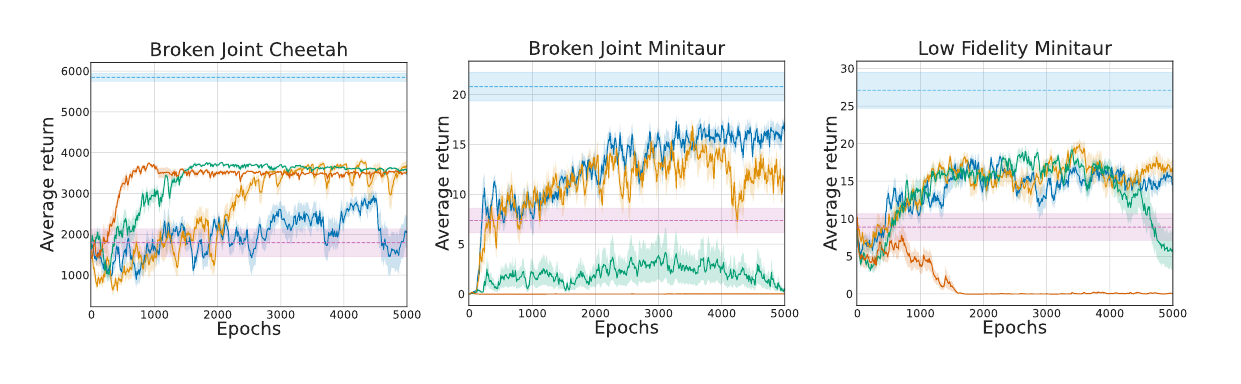}
\includegraphics[width=.8\textwidth]{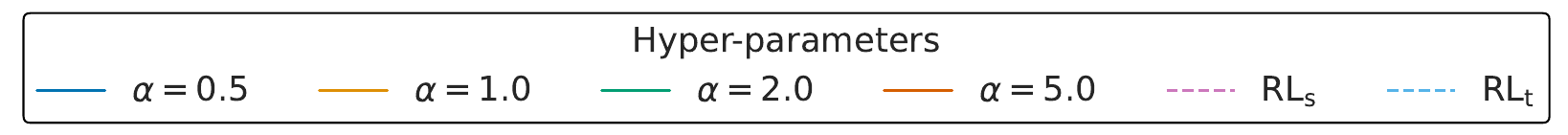}
\caption{Hyper-parameter sensibility analysis for FOOD on three environments.}
\label{fig:comparison_food_darc_hyperparams}
\end{figure*}


\subsection{Comparison Between the Different Agents}

We evaluate the mentioned algorithms on the proposed environments. These experiments provide an overview of the efficiency of the different objectives in finetuning the policy, given reasonably good trajectories. Results are summarized in Table~\ref{table:global_results}, where we also report the median of the normalized average return (NAR) $\frac{J^{\pi_{\text{agent}}}_{P_{\text{t}}} - J^{\pi_{\text{RL}_{\text{s}}}}_{P_{\text{t}}}}{J^{\pi_{\text{RL}_{\text{t}}}}_{P_{\text{t}}} - J^{\pi_{\text{RL}_{\text{s}}}}_{P_{\text{t}}}}$ \cite{desai2020imitation} as well as the median of the NAR's standard deviations. The associated learning curves can be found in Appendix~\ref{app:learning_curves}.

All the experiments demonstrate the insufficiency of training traditional RL agents solely on the source environment. The optimal policy for the source is far from optimal for the target system as we observe a large drop in performance from $\text{RL}_\text{t}$ to $\text{RL}_\text{s}$ on all benchmarked environments. For example, the $\text{RL}_\text{s}$ exploits the linear torque-current relation in Low Fidelity Minitaur and fails to learn a near-optimal policy for the target environment. Furthermore, $\text{RL}_\text{s}$ often exhibits a large variance in target environments as it encounters previously unseen situations. This is welcome as relevant trajectories are gathered to guide the agent in the source environment.

Overall, we can see that our algorithm FOOD exhibits the best performances across all considered environments against all other baselines, whether it is constrained by state, state-action or transition visitation distributions. Two exceptions are on Heavy and Friction Cheetah where ANE has very good results. We also note that FOOD associated with its regularization $\nu_P^\pi$ or $\mu_P^\pi$ has better results than when it is associated with $d_P^\pi$. This is expected for $\nu_P^\pi$ as it optimizes the whole lower bound, and it seems using $\mu_P^\pi$ implicitely mimizes the second term of Proposition~\ref{prop:trust_region_1}.

In addition, we find that the prominent baseline DARC is not efficient in all the use cases. It seems to be particularly good at handling sharp dynamics discrepancies, e.g., when one or two joints are broken or when friction is introduced but struggles for more subtle differences. In fact, it deteriorates over the naive baseline $\text{RL}_\text{s}$ by a large margin for the Gravity Pendulum and the Low Fidelity Minitaur environments. This may be explained by their reward modification $\Delta_r$ (see Appendix~\ref{app:darc_details}) which prevents the agent from entering dissimilar parts of the source environment but seems unable to handle systems with a slight global dynamics mismatch. Even when DARC improves over $\text{RL}_\text{s}$, our algorithm FOOD is able to match or exceed its performance. The robust agent ANE is a strong baseline in most environments but may degrade the performance of traditional RL agents, as seen in Low Fidelity Minitaur, Broken Joint Ant, and Gravity Pendulum. CQL did not provide any good results, except on Gravity Pendulum and Low Fidelity Minitaur, but this was to be expected given the few target trajectories the agent has access to. Finally, we note that the three agents FOOD, DARC, and ANE often reduce the variance originally presented in $\text{RL}_\text{s}$.

We attribute FOOD's success to its ability to force the agent to improve the rewards of the source environment along trajectories from the target environment. Its regularization seems to be efficient even in the low data regime.

\subsection{Hyper-parameter Sensitivity Analysis}

We previously reported the results of the best hyper-parameters of the different methods. In practice, it is important to have a robust range of hyper-parameters in which the considered method performs well. Indeed, to the best of our knowledge, there currently exists no accurate algorithm for selecting such hyper-parameters in a high dimensional environment when the agent has access to limited data gathered with a different policy \cite{fu2020benchmarks}. In this section, we detail the sensitivity of FOOD associated with PPO and GAIL to its hyper-parameter $\alpha$ in $3$ environments. They were specifically chosen to illustrate the relevant range for $\alpha$. FOOD's complete hyper-parameter sensitivity analysis, as well as the one of DARC and ANE, can respectively be found in Appendix~\ref{app:food_hyperparams_analysis}, Appendix~\ref{app:darc_hyperparams_analysis} and Appendix~\ref{app:ane_hyperparams_analysis}. They attest that selecting the right hyper-parameters for the baselines DARC and ANE does not have an intuitive pattern.

The hyper-parameter $\alpha$ controls the strength of the regularization in FOOD. If it is too low, the agent will focus mainly on maximizing the rewards of the source environment and becomes very close to the naive baseline $\text{RL}_\text{s}$. This can be seen in Broken Joint Cheetah. On the other hand, setting $\alpha$ to a high value may induce the agent to solely replicate the trajectories from the target environment, in which case the agent may also be close to $\text{RL}_\text{s}$. Even worse, a hard regularization may degrade over $\text{RL}_\text{s}$, as shown in Low Fidelity Minitaur for $\alpha = 5$. However, $5$ is an extremely high value as advantages are normalized in PPO, and this may increase the gradient too much and corrupt learning.

In any case, we have found that FOOD provides the best results when the regularization has the same scale as the traditional objective. This is also verified for the environments not displayed in this sub-section. We conclude that FOOD is relatively robust to this range of hyper-parameter, and recommend using PPO with $\alpha$ close to $1$ as the underlying RL agent. It is a natural choice given that PPO normalizes its advantage functions.

\section{Conclusion}

In this work, we investigated different objectives to optimize a policy in different few-shot off-dynamics scenarios, including the state-of-the-art method DARC. We found that these objectives are either too simplistic or unable to cope with complex dynamics discrepancies, thereby limiting their application to real-world systems. To address this challenge, based on theoretical insights, we introduced a novel conservative objective along with a practical algorithm leveraging imitation learning techniques. Through experimentations in different off-dynamics use cases, we have shown that our approach often outperforms the existing methods and seems to be more robust to dynamics changes. 

Our agent could also benefit from new advances in the Imitation Learning literature to gain control in building its penalization. Finally, this penalization can be useful when the source environment has been improved using the available target trajectories as it avoids querying the source environment for Out-of-Distribution samples. This will be the primary focus of our future work.



\section*{Contribution Statement}

Paul Daoudi wrote the main parts of the article and conducted the experiments. Merwan Barlier and Ludovic Dos Santos participated in the development of the main ideas and extensively revised the article. Christophe Prieur and Bogdan Robu oversaw the project and provided additional revisions to the article.

\bibliographystyle{named}
\bibliography{ijcai24}

\begin{thebibliography}{}

\bibitem[\protect\citeauthoryear{Abbeel \bgroup \em et al.\egroup }{2006}]{abbeel2006using}
Pieter Abbeel, Morgan Quigley, and Andrew~Y Ng.
\newblock Using inaccurate models in reinforcement learning.
\newblock In {\em Proceedings of ICML}, pages 1--8, 2006.

\bibitem[\protect\citeauthoryear{Abdullah \bgroup \em et al.\egroup }{2019}]{abdullah2019wasserstein}
Mohammed~Amin Abdullah, Hang Ren, Haitham~Bou Ammar, Vladimir Milenkovic, Rui Luo, Mingtian Zhang, and Jun Wang.
\newblock Wasserstein robust reinforcement learning.
\newblock {\em arXiv preprint arXiv:1907.13196}, 2019.

\bibitem[\protect\citeauthoryear{Achiam \bgroup \em et al.\egroup }{2017}]{achiam2017constrained}
Joshua Achiam, David Held, Aviv Tamar, and Pieter Abbeel.
\newblock Constrained policy optimization.
\newblock In {\em Proceedings of ICML}, pages 22--31. PMLR, 2017.

\bibitem[\protect\citeauthoryear{Arndt \bgroup \em et al.\egroup }{2020}]{arndt2020meta}
Karol Arndt, Murtaza Hazara, Ali Ghadirzadeh, and Ville Kyrki.
\newblock Meta reinforcement learning for sim-to-real domain adaptation.
\newblock In {\em Proceedings of ICRA}, pages 2725--2731. IEEE, 2020.

\bibitem[\protect\citeauthoryear{Babaeizadeh \bgroup \em et al.\egroup }{2016}]{babaeizadehreinforcement}
Mohammad Babaeizadeh, Iuri Frosio, Stephen Tyree, Jason Clemons, and Jan Kautz.
\newblock Reinforcement learning through asynchronous advantage actor-critic on a gpu.
\newblock In {\em Proceedings of ICLR}, 2016.

\bibitem[\protect\citeauthoryear{Barreto \bgroup \em et al.\egroup }{2017}]{barreto2017successor}
Andr{\'e} Barreto, Will Dabney, R{\'e}mi Munos, Jonathan~J Hunt, Tom Schaul, Hado~P van Hasselt, and David Silver.
\newblock Successor features for transfer in reinforcement learning.
\newblock {\em Proceedings of NeurIPS}, 30, 2017.

\bibitem[\protect\citeauthoryear{Barth-Maron \bgroup \em et al.\egroup }{2018}]{barth2018distributed}
Gabriel Barth-Maron, Matthew~W Hoffman, David Budden, Will Dabney, Dan Horgan, TB~Dhruva, Alistair Muldal, Nicolas Heess, and Timothy Lillicrap.
\newblock Distributed distributional deterministic policy gradients.
\newblock In {\em Proceedings of ICLR}, 2018.

\bibitem[\protect\citeauthoryear{Brockman \bgroup \em et al.\egroup }{2016}]{brockman2016openai}
Greg Brockman, Vicki Cheung, Ludwig Pettersson, Jonas Schneider, John Schulman, Jie Tang, and Wojciech Zaremba.
\newblock Openai gym.
\newblock {\em arXiv preprint arXiv:1606.01540}, 2016.

\bibitem[\protect\citeauthoryear{Corander \bgroup \em et al.\egroup }{2021}]{corander2021jensen}
Jukka Corander, Ulpu Remes, and Timo Koski.
\newblock On the jensen-shannon divergence and the variation distance for categorical probability distributions.
\newblock {\em Kybernetika}, 2021.

\bibitem[\protect\citeauthoryear{Coumans and Bai}{2016  2021}]{coumans2021}
Erwin Coumans and Yunfei Bai.
\newblock Pybullet, a python module for physics simulation for games, robotics and machine learning.
\newblock \url{http://pybullet.org}, 2016--2021.

\bibitem[\protect\citeauthoryear{Csiszar and K{\"o}rner}{1981}]{csiszar1981coding}
Imre Csiszar and J~K{\"o}rner.
\newblock Coding theorems for discrete memoryless systems.
\newblock In {\em Information Theory}. Akad{\'e}miai Kiad{\'o} (Publishing House of the Hungarian Academy of Sciences), 1981.

\bibitem[\protect\citeauthoryear{Dadashi \bgroup \em et al.\egroup }{2021}]{dadashi2020primal}
Robert Dadashi, L{\'e}onard Hussenot, Matthieu Geist, and Olivier Pietquin.
\newblock Primal wasserstein imitation learning.
\newblock {\em Proceedings of ICLR}, 2021.

\bibitem[\protect\citeauthoryear{Desai \bgroup \em et al.\egroup }{2020}]{desai2020imitation}
Siddharth Desai, Ishan Durugkar, Haresh Karnan, Garrett Warnell, Josiah Hanna, and Peter Stone.
\newblock An imitation from observation approach to transfer learning with dynamics mismatch.
\newblock {\em Proceedings of NeurIPS}, 33:3917--3929, 2020.

\bibitem[\protect\citeauthoryear{Dulac-Arnold \bgroup \em et al.\egroup }{2021}]{dulac2021challenges}
Gabriel Dulac-Arnold, Nir Levine, Daniel~J Mankowitz, Jerry Li, Cosmin Paduraru, Sven Gowal, and Todd Hester.
\newblock Challenges of real-world reinforcement learning: definitions, benchmarks and analysis.
\newblock {\em Machine Learning}, 110(9):2419--2468, 2021.

\bibitem[\protect\citeauthoryear{Eysenbach \bgroup \em et al.\egroup }{2021}]{eysenbach2020off}
Benjamin Eysenbach, Swapnil Asawa, Shreyas Chaudhari, Sergey Levine, and Ruslan Salakhutdinov.
\newblock Off-dynamics reinforcement learning: Training for transfer with domain classifiers.
\newblock {\em Proceedings of ICLR}, 2021.

\bibitem[\protect\citeauthoryear{Fu \bgroup \em et al.\egroup }{2017}]{fu2017learning}
Justin Fu, Katie Luo, and Sergey Levine.
\newblock Learning robust rewards with adversarial inverse reinforcement learning.
\newblock {\em Proceedings of ICLR}, 2017.

\bibitem[\protect\citeauthoryear{Fu \bgroup \em et al.\egroup }{2020}]{fu2020benchmarks}
Justin Fu, Mohammad Norouzi, Ofir Nachum, George Tucker, Alexander Novikov, Mengjiao Yang, Michael~R Zhang, Yutian Chen, Aviral Kumar, Cosmin Paduraru, et~al.
\newblock Benchmarks for deep off-policy evaluation.
\newblock In {\em Proceedings of ICLR}, 2020.

\bibitem[\protect\citeauthoryear{Gamrian and Goldberg}{2019}]{gamrian2019transfer}
Shani Gamrian and Yoav Goldberg.
\newblock Transfer learning for related reinforcement learning tasks via image-to-image translation.
\newblock In {\em Proceedings of ICML}, pages 2063--2072. PMLR, 2019.

\bibitem[\protect\citeauthoryear{Gangwani}{2021}]{airlcode}
Tanmay Gangwani.
\newblock Airl code.
\newblock \url{https://github.com/tgangwani/RL-Indirect-imitation/tree/master}, 2021.

\bibitem[\protect\citeauthoryear{Garc{\i}a and Fern{\'a}ndez}{2015}]{garcia2015comprehensive}
Javier Garc{\i}a and Fernando Fern{\'a}ndez.
\newblock A comprehensive survey on safe reinforcement learning.
\newblock {\em JMLR}, 16(1):1437--1480, 2015.

\bibitem[\protect\citeauthoryear{Geng}{2021}]{cqlcode}
Xinyang~(Young) Geng.
\newblock Code for conservative q learning for offline reinforcement learning.
\newblock \url{https://github.com/young-geng/CQL}, 2021.

\bibitem[\protect\citeauthoryear{Ghasemipour \bgroup \em et al.\egroup }{2020}]{ghasemipour2020divergence}
Seyed Kamyar~Seyed Ghasemipour, Richard Zemel, and Shixiang Gu.
\newblock A divergence minimization perspective on imitation learning methods.
\newblock In {\em Proceedings of CoRL}, pages 1259--1277. PMLR, 2020.

\bibitem[\protect\citeauthoryear{Hanna \bgroup \em et al.\egroup }{2021}]{hanna2021grounded}
Josiah~P Hanna, Siddharth Desai, Haresh Karnan, Garrett Warnell, and Peter Stone.
\newblock Grounded action transformation for sim-to-real reinforcement learning.
\newblock {\em Machine Learning}, 110(9):2469--2499, 2021.

\bibitem[\protect\citeauthoryear{Ho and Ermon}{2016}]{ho2016generative}
Jonathan Ho and Stefano Ermon.
\newblock Generative adversarial imitation learning.
\newblock {\em Proceedings of NeurIPS}, 29, 2016.

\bibitem[\protect\citeauthoryear{H{\"o}fer \bgroup \em et al.\egroup }{2021}]{hofer2021sim2real}
Sebastian H{\"o}fer, Kostas Bekris, Ankur Handa, Juan~Camilo Gamboa, Melissa Mozifian, Florian Golemo, Chris Atkeson, Dieter Fox, Ken Goldberg, John Leonard, et~al.
\newblock Sim2real in robotics and automation: Applications and challenges.
\newblock {\em IEEE transactions on automation science and engineering}, 18(2):398--400, 2021.

\bibitem[\protect\citeauthoryear{Hussein \bgroup \em et al.\egroup }{2017}]{hussein2017imitation}
Ahmed Hussein, Mohamed~Medhat Gaber, Eyad Elyan, and Chrisina Jayne.
\newblock Imitation learning: A survey of learning methods.
\newblock {\em ACM Computing Surveys (CSUR)}, 50(2):1--35, 2017.

\bibitem[\protect\citeauthoryear{Jakobi \bgroup \em et al.\egroup }{1995}]{jakobi1995noise}
Nick Jakobi, Phil Husbands, and Inman Harvey.
\newblock Noise and the reality gap: The use of simulation in evolutionary robotics.
\newblock In {\em Advances in Artificial Life: Third European Conference on Artificial Life}, pages 704--720. Springer, 1995.

\bibitem[\protect\citeauthoryear{Ju \bgroup \em et al.\egroup }{2022}]{ju2022transferring}
Hao Ju, Rongshun Juan, Randy Gomez, Keisuke Nakamura, and Guangliang Li.
\newblock Transferring policy of deep reinforcement learning from simulation to reality for robotics.
\newblock {\em Nature Machine Intelligence}, pages 1--11, 2022.

\bibitem[\protect\citeauthoryear{Kakade and Langford}{2002}]{kakade2002approximately}
Sham Kakade and John Langford.
\newblock Approximately optimal approximate reinforcement learning.
\newblock In {\em Proceedings of ICML}, pages 267--274, 2002.

\bibitem[\protect\citeauthoryear{Kang \bgroup \em et al.\egroup }{2022}]{kang2022lyapunov}
Katie Kang, Paula Gradu, Jason~J Choi, Michael Janner, Claire Tomlin, and Sergey Levine.
\newblock Lyapunov density models: Constraining distribution shift in learning-based control.
\newblock In {\em Proceedings of ICML}, pages 10708--10733. PMLR, 2022.

\bibitem[\protect\citeauthoryear{Koos \bgroup \em et al.\egroup }{2012}]{koos2012transferability}
Sylvain Koos, Jean-Baptiste Mouret, and St{\'e}phane Doncieux.
\newblock The transferability approach: Crossing the reality gap in evolutionary robotics.
\newblock {\em IEEE Transactions on Evolutionary Computation}, 17(1):122--145, 2012.

\bibitem[\protect\citeauthoryear{Kostrikov}{2018}]{pytorchrl}
Ilya Kostrikov.
\newblock Pytorch rl agents.
\newblock \url{https://github.com/ikostrikov/pytorch-a2c-ppo-acktr-gail}, 2018.

\bibitem[\protect\citeauthoryear{Kumar \bgroup \em et al.\egroup }{2020}]{kumar2020conservative}
Aviral Kumar, Aurick Zhou, George Tucker, and Sergey Levine.
\newblock Conservative q-learning for offline reinforcement learning.
\newblock {\em Proceedings of NeurIPS}, 33:1179--1191, 2020.

\bibitem[\protect\citeauthoryear{Kumar \bgroup \em et al.\egroup }{2021}]{KumarFPM21}
Ashish Kumar, Zipeng Fu, Deepak Pathak, and Jitendra Malik.
\newblock {RMA:} rapid motor adaptation for legged robots.
\newblock In {\em Robotics: Science and Systems}, 2021.

\bibitem[\protect\citeauthoryear{Mordatch \bgroup \em et al.\egroup }{2015}]{mordatch2015ensemble}
Igor Mordatch, Kendall Lowrey, and Emanuel Todorov.
\newblock Ensemble-cio: Full-body dynamic motion planning that transfers to physical humanoids.
\newblock In {\em Proceedings of IROS}, pages 5307--5314. IEEE, 2015.

\bibitem[\protect\citeauthoryear{Moskovitz \bgroup \em et al.\egroup }{2021}]{moskovitz2020efficient}
Ted Moskovitz, Michael Arbel, Ferenc Huszar, and Arthur Gretton.
\newblock Efficient wasserstein natural gradients for reinforcement learning.
\newblock {\em Proceedings of ICLR}, 2021.

\bibitem[\protect\citeauthoryear{Muratore \bgroup \em et al.\egroup }{2019}]{muratore2019assessing}
Fabio Muratore, Michael Gienger, and Jan Peters.
\newblock Assessing transferability from simulation to reality for reinforcement learning.
\newblock {\em IEEE transactions on pattern analysis and machine intelligence}, 43(4):1172--1183, 2019.

\bibitem[\protect\citeauthoryear{Nachum \bgroup \em et al.\egroup }{2019}]{nachum2019dualdice}
Ofir Nachum, Yinlam Chow, Bo~Dai, and Lihong Li.
\newblock Dualdice: Behavior-agnostic estimation of discounted stationary distribution corrections.
\newblock {\em Proceedings of NeurIPS}, 32, 2019.

\bibitem[\protect\citeauthoryear{Niu \bgroup \em et al.\egroup }{2022a}]{pytorchh2odarc}
Haoyi Niu, Shubham Sharma, Yiwen Qiu, Ming Li, Guyue Zhou, Jianming HU, and Xianyuan Zhan.
\newblock H2o code.
\newblock \url{https://github.com/t6-thu/H2O}, 2022.

\bibitem[\protect\citeauthoryear{Niu \bgroup \em et al.\egroup }{2022b}]{niu2022trust}
Haoyi Niu, Shubham Sharma, Yiwen Qiu, Ming Li, Guyue Zhou, Jianming Hu, and Xianyuan Zhan.
\newblock When to trust your simulator: Dynamics-aware hybrid offline-and-online reinforcement learning.
\newblock {\em Proceedings of NeurIPS}, 2022.

\bibitem[\protect\citeauthoryear{Pacchiano \bgroup \em et al.\egroup }{2020}]{pacchiano2020learning}
Aldo Pacchiano, Jack Parker-Holder, Yunhao Tang, Krzysztof Choromanski, Anna Choromanska, and Michael Jordan.
\newblock Learning to score behaviors for guided policy optimization.
\newblock In {\em Proceedings of ICML}, pages 7445--7454. PMLR, 2020.

\bibitem[\protect\citeauthoryear{Raffin}{2020}]{rl_zoo3}
Antonin Raffin.
\newblock Rl baselines3 zoo.
\newblock \url{https://github.com/DLR-RM/rl-baselines3-zoo}, 2020.

\bibitem[\protect\citeauthoryear{Schulman \bgroup \em et al.\egroup }{2015}]{schulman2015trust}
John Schulman, Sergey Levine, Pieter Abbeel, Michael Jordan, and Philipp Moritz.
\newblock Trust region policy optimization.
\newblock In {\em Proceedings of ICML}, pages 1889--1897. PMLR, 2015.

\bibitem[\protect\citeauthoryear{Schulman \bgroup \em et al.\egroup }{2017}]{schulman2017proximal}
John Schulman, Filip Wolski, Prafulla Dhariwal, Alec Radford, and Oleg Klimov.
\newblock Proximal policy optimization algorithms.
\newblock {\em arXiv preprint arXiv:1707.06347}, 2017.

\bibitem[\protect\citeauthoryear{Tan \bgroup \em et al.\egroup }{2018}]{tan2018sim}
Jie Tan, Tingnan Zhang, Erwin Coumans, Atil Iscen, Yunfei Bai, Danijar Hafner, Steven Bohez, and Vincent Vanhoucke.
\newblock Sim-to-real: Learning agile locomotion for quadruped robots.
\newblock {\em Robotics: Science and Systems XIV}, 2018.

\bibitem[\protect\citeauthoryear{Terpin \bgroup \em et al.\egroup }{2022}]{terpin2022trust}
Antonio Terpin, Nicolas Lanzetti, Batuhan Yardim, Florian Dorfler, and Giorgia Ramponi.
\newblock Trust region policy optimization with optimal transport discrepancies: Duality and algorithm for continuous actions.
\newblock {\em Proceedings of NeurIPS}, 35:19786--19797, 2022.

\bibitem[\protect\citeauthoryear{Tessler \bgroup \em et al.\egroup }{2019}]{tessler2019action}
Chen Tessler, Yonathan Efroni, and Shie Mannor.
\newblock Action robust reinforcement learning and applications in continuous control.
\newblock In {\em Proceedings of ICML}, pages 6215--6224. PMLR, 2019.

\bibitem[\protect\citeauthoryear{Torabi \bgroup \em et al.\egroup }{2019}]{TorabiWS19a}
Faraz Torabi, Garrett Warnell, and Peter Stone.
\newblock Recent advances in imitation learning from observation.
\newblock In {\em Proceedings of IJCAI}, pages 6325--6331. ijcai.org, 2019.

\bibitem[\protect\citeauthoryear{Touati \bgroup \em et al.\egroup }{2020}]{touati2020stable}
Ahmed Touati, Amy Zhang, Joelle Pineau, and Pascal Vincent.
\newblock Stable policy optimization via off-policy divergence regularization.
\newblock In {\em Proceedings of UAI}, pages 1328--1337. PMLR, 2020.

\bibitem[\protect\citeauthoryear{Weiss \bgroup \em et al.\egroup }{2016}]{weiss2016survey}
Karl Weiss, Taghi~M Khoshgoftaar, and DingDing Wang.
\newblock A survey of transfer learning.
\newblock {\em Journal of Big data}, 3(1):1--40, 2016.

\bibitem[\protect\citeauthoryear{Yu \bgroup \em et al.\egroup }{2018}]{yu2018policy}
Wenhao Yu, C~Karen Liu, and Greg Turk.
\newblock Policy transfer with strategy optimization.
\newblock In {\em Proceedings of ICLR}, 2018.

\bibitem[\protect\citeauthoryear{Yu \bgroup \em et al.\egroup }{2020}]{yu2020learning}
Wenhao Yu, Jie Tan, Yunfei Bai, Erwin Coumans, and Sehoon Ha.
\newblock Learning fast adaptation with meta strategy optimization.
\newblock {\em IEEE Robotics and Automation Letters}, 5(2):2950--2957, 2020.

\bibitem[\protect\citeauthoryear{Zhu \bgroup \em et al.\egroup }{2018}]{ZhuKBB18}
Shaojun Zhu, Andrew Kimmel, Kostas~E. Bekris, and Abdeslam Boularias.
\newblock Fast model identification via physics engines for data-efficient policy search.
\newblock In {\em Proceedings of IJCAI}, pages 3249--3256. ijcai.org, 2018.

\end{thebibliography}

\clearpage
\onecolumn
\appendix
\section{Proofs} \label{app:proofs}

\subsection{Proof of Proposition 1}

We start by recalling an important Lemma of \cite{achiam2017constrained}.

\begin{lemma} \label{lem:achiam}
    For any function $f : \mathcal{S} \to \mathbb{R}$, policy $\pi$ and $\delta_f(s, a, s') = r(s,a, s') + \gamma f(s') - f(s)$: 

    \begin{equation}
        J^\pi_{P} = \mathbb{E}_{s\sim\rho_0} \left[ f(s) \right] + \frac{1}{1-\gamma} \mathbb{E}_{\substack{s\sim d^\pi_P(\cdot), \\ a\sim\pi(\cdot|s), \\ s'\sim P(\cdot|s, a)}} \left[ \delta_f(s, a, s') \right]. 
    \end{equation}
\end{lemma}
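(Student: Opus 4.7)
The plan is to prove the identity by unfolding the state visitation distribution $d^\pi_P$ into its discounted-occupancy form and then exploiting a telescoping cancellation of the $f$-terms inside $\delta_f$. Recall that $d^\pi_P(s) = (1-\gamma)\sum_{t=0}^\infty \gamma^t \mathbb{P}(s_t = s)$, where the probability is taken under the joint law of $(s_0, a_0, s_1, a_1, \dots)$ starting from $s_0 \sim \rho_0$ and evolving according to $\pi$ and $P$. This will let me turn the single expectation on the right-hand side into an infinite series of per-step expectations.

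First, I would substitute the definition of $d^\pi_P$ to rewrite
\begin{equation*}
\frac{1}{1-\gamma}\,\mathbb{E}_{s\sim d^\pi_P,\,a\sim\pi,\,s'\sim P}\bigl[\delta_f(s,a,s')\bigr]
= \sum_{t=0}^\infty \gamma^t\,\mathbb{E}\bigl[\delta_f(s_t,a_t,s_{t+1})\bigr].
\end{equation*}
Next, substituting $\delta_f(s_t,a_t,s_{t+1}) = r(s_t,a_t,s_{t+1}) + \gamma f(s_{t+1}) - f(s_t)$ and splitting the sum, the reward part is exactly $J^\pi_P$ (after taking the outer expectation over $s_0\sim\rho_0$), while the $f$-part becomes
\begin{equation*}
\sum_{t=0}^\infty \bigl(\gamma^{t+1}\,\mathbb{E}[f(s_{t+1})] - \gamma^t\,\mathbb{E}[f(s_t)]\bigr),
\end{equation*}
which telescopes to $-\mathbb{E}[f(s_0)] = -\mathbb{E}_{s\sim\rho_0}[f(s)]$. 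Rearranging terms recovers the stated identity.

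The main technical obstacle is justifying the exchange of summation and expectation, as well as the convergence of the telescoping series. This is handled by noting that $r$ is bounded in $[R_{\min}, R_{\max}]$ and, because $f$ arises in the paper's applications as a value function (so $|f|\le R_{\max}/(1-\gamma)$) or can be assumed bounded without loss of generality, each term is absolutely summable at geometric rate $\gamma^t$. Dominated convergence (or equivalently Fubini) then legitimizes the interchange, and the tail $\gamma^{t+1}\mathbb{E}[f(s_{t+1})]\to 0$ closes the telescoping. The rest is bookkeeping, so I would keep the write-up to a short display-math derivation once these conditions are stated.
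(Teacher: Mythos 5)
Your proof is correct. The paper itself does not prove this lemma --- it is recalled without proof from Achiam et al.\ (2017) --- and your telescoping argument (expand $d^\pi_P$ into its discounted-occupancy series, split $\delta_f$ into the reward part recovering $J^\pi_P$ and the $f$-part that telescopes to $-\mathbb{E}_{\rho_0}[f]$) is exactly the standard derivation given in that reference, including the correct observation that one needs $f$ bounded (or at least $\gamma^t\,\mathbb{E}[f(s_t)]\to 0$) for the interchange of sum and expectation and for the telescoping tail to vanish.
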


Then, we propose this general Lemma that serves as a basis for our Proposition~\ref{prop:trust_region_1}.

\begin{lemma} \label{lem:general_lemma}
    For any function $f : \mathcal{S} \to \mathbb{R}$, let:

    \begin{align}
        L_f^{\pi, P_{\text{t}}, P_{\text{s}}} 
        &= \mathbb{E}_{\substack{s\sim d^\pi_{P_{\text{s}}}(\cdot), \\ a\sim\pi(\cdot|s)}} \left[
        \mathbb{E}_{s'\sim P_{\text{t}}(\cdot|s, a)} \left[ \delta_f(s, a, s') \right]
         - \mathbb{E}_{s'\sim P_{\text{t}}(\cdot|s, a)} \left[ \delta_f(s, a, s') \right]
        \right] \\
        \epsilon_f^{P_{\text{t}}} &= \max_{s\in\mathcal{S}}\left\lvert 
        \mathbb{E}_{a\sim\pi, s'\sim P_{\text{t}}} \left[ \delta_f(s, a, s') \right]
        \right\rvert. 
    \end{align}

    The following bound holds:

    \begin{equation}
    J^\pi_{P_{\text{t}}} \geq J^\pi_{P_{\text{s}}} + \frac{1}{1-\gamma} \left(
    L_f^{\pi, P_{\text{t}}, P_{\text{s}}} - 2 \epsilon_f^{P_{\text{t}}} \tvinfdiv*{d^\pi_{P_{\text{s}}}}{d^\pi_{P_{\text{t}}}}
    \right).
    \end{equation}
\end{lemma}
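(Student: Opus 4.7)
The plan is to apply Lemma~\ref{lem:achiam} twice — once under $P_{\text{t}}$ and once under $P_{\text{s}}$ — with the same auxiliary function $f$, subtract, and then isolate the $L_f^{\pi, P_{\text{t}}, P_{\text{s}}}$ term by adding and subtracting a hybrid expectation. More precisely, I would first write
\begin{align}
J^\pi_{P_{\text{t}}} - J^\pi_{P_{\text{s}}} = \frac{1}{1-\gamma}\Bigl(
\mathbb{E}_{s \sim d^\pi_{P_{\text{t}}}, a\sim\pi, s'\sim P_{\text{t}}}\!\left[\delta_f(s,a,s')\right]
- \mathbb{E}_{s \sim d^\pi_{P_{\text{s}}}, a\sim\pi, s'\sim P_{\text{s}}}\!\left[\delta_f(s,a,s')\right]
\Bigr),
\end{align}
since the $\mathbb{E}_{s\sim\rho_0}[f(s)]$ terms in Lemma~\ref{lem:achiam} cancel.

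Next, I would insert the hybrid expectation $\mathbb{E}_{s\sim d^\pi_{P_{\text{s}}}, a\sim\pi, s'\sim P_{\text{t}}}[\delta_f(s,a,s')]$ to split the difference into two pieces. The piece that keeps $s\sim d^\pi_{P_{\text{s}}}$ and differs only in the transition kernel used for $s'$ is exactly $L_f^{\pi, P_{\text{t}}, P_{\text{s}}}$ (taking the sign convention of the definition into account, and correcting what looks like a typo in the second expectation, which should be under $P_{\text{s}}$). The remaining piece is
\begin{equation}
\mathbb{E}_{s \sim d^\pi_{P_{\text{t}}}}\!\left[g(s)\right] - \mathbb{E}_{s \sim d^\pi_{P_{\text{s}}}}\!\left[g(s)\right],
\qquad g(s) := \mathbb{E}_{a\sim\pi(\cdot|s),\, s'\sim P_{\text{t}}(\cdot|s,a)}\!\left[\delta_f(s,a,s')\right],
\end{equation}
i.e.\ the same integrand integrated against two different state distributions.

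The key technical step is to control this last piece by the TV distance. I would use the standard inequality $\lvert \mathbb{E}_{p}[g] - \mathbb{E}_{q}[g]\rvert \le 2\,\lVert g \rVert_\infty\, \tvinfdiv*{p}{q}$, which follows from the variational characterization of total variation. By definition $\lVert g\rVert_\infty = \max_{s}\lvert g(s)\rvert = \epsilon_f^{P_{\text{t}}}$, so this term is bounded in absolute value by $2\epsilon_f^{P_{\text{t}}}\,\tvinfdiv*{d^\pi_{P_{\text{s}}}}{d^\pi_{P_{\text{t}}}}$. Combining the equality above with this bound (keeping the inequality direction by replacing the hybrid-difference term by its negative absolute value lower bound) yields
\begin{equation}
J^\pi_{P_{\text{t}}} - J^\pi_{P_{\text{s}}} \;\ge\; \frac{1}{1-\gamma}\Bigl(L_f^{\pi, P_{\text{t}}, P_{\text{s}}} - 2\epsilon_f^{P_{\text{t}}}\,\tvinfdiv*{d^\pi_{P_{\text{s}}}}{d^\pi_{P_{\text{t}}}}\Bigr),
\end{equation}
which rearranges to the claimed bound.

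The main obstacle is purely bookkeeping: making sure that the add-and-subtract step isolates $L_f^{\pi, P_{\text{t}}, P_{\text{s}}}$ with the correct sign (the asymmetry between source and target in the definitions must be respected), and that the right integrand is paired with the right state distribution so that $\epsilon_f^{P_{\text{t}}}$ (not $\epsilon_f^{P_{\text{s}}}$) appears in the bound. Once the decomposition is set up correctly, the TV inequality closes the argument immediately; Proposition~\ref{prop:trust_region_1} is then obtained from this lemma by the standard choice $f = V^\pi_{P_{\text{s}}}$, so that $\mathbb{E}_{s'\sim P_{\text{s}}}[\delta_f] = 0$ pointwise, $L_f^{\pi, P_{\text{t}}, P_{\text{s}}}$ collapses to an expectation of $\gamma(V^\pi_{P_{\text{s}}}(s') \text{ differences})$ bounded via $\tvpiinfdiv{P_{\text{s}}}{P_{\text{t}}}$, and $\epsilon_f^{P_{\text{t}}}$ is controlled by $R_{\max}/(1-\gamma)$.
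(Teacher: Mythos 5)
Your argument is correct and is essentially the paper's own proof: the same double application of Lemma~\ref{lem:achiam} under both kernels, the same insertion of the hybrid term $\langle d^\pi_{P_{\text{s}}},\bar{\delta}_f^{P_{\text{t}}}\rangle$ with $\bar{\delta}_f^{P_{\text{t}}}(s)=\mathbb{E}_{a\sim\pi,\,s'\sim P_{\text{t}}}[\delta_f(s,a,s')]$, and the same H\"older/total-variation bound on $\langle d^\pi_{P_{\text{t}}}-d^\pi_{P_{\text{s}}},\bar{\delta}_f^{P_{\text{t}}}\rangle$ with $\lVert\bar{\delta}_f^{P_{\text{t}}}\rVert_\infty=\epsilon_f^{P_{\text{t}}}$; you are also right that the second expectation in the displayed definition of $L_f^{\pi,P_{\text{t}},P_{\text{s}}}$ is a typo and should be taken under $P_{\text{s}}$. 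Only your closing aside deviates from the paper: Proposition~\ref{prop:trust_region_1} is obtained there by taking $f\equiv 0$ (so $\delta_f=r$, $\epsilon_f^{P_{\text{t}}}\le R_{\max}$, and $L_f$ is bounded below by $-2R_{\max}\,\tvpiinfdiv{P_{\text{s}}}{P_{\text{t}}}$), whereas the choice $f=V^\pi_{P_{\text{s}}}$ is presented only as an alternative that introduces an importance-sampling term.
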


\begin{proof}

    According to Lemma~\ref{lem:achiam}:

    \begin{align}
        J^\pi_{P_{\text{t}}} - J^\pi_{P_{\text{s}}} = \frac{1}{1-\gamma} \left( 
        \mathbb{E}_{\substack{s\sim d^\pi_{P_{\text{t}}}(\cdot), \\ a\sim\pi(\cdot|s) \\ s'\sim P_{\text{t}}(\cdot|s, a)}} \left[ \delta_f(s, a, s') \right] - \mathbb{E}_{\substack{s\sim d^\pi_{P_{\text{s}}}(\cdot), \\ a\sim\pi(\cdot|s), \\ s'\sim P_{\text{s}}(\cdot|s, a)}} \left[ \delta_f(s, a, s') \right]
        \right).
    \end{align}

    The first term can be written, with $\bar{\delta}_f^{P_{\text{t}}}(s) = \mathbb{E}_{\substack{ a\sim\pi(\cdot|s)\\ s'\sim P_{\text{t}}(\cdot|s, a)}} \left[ \delta_f(s, a, s')\right]$:

    \begin{align}
        \mathbb{E}_{\substack{s\sim d^\pi_{P_{\text{t}}}(\cdot)\\ a\sim\pi(\cdot|s)\\ s'\sim P_{\text{t}}(\cdot|s, a)}} \left[ \delta_f(s, a, s') \right] 
        &= \langle d^\pi_{P_{\text{t}}} \,,\, \bar{\delta}_f^{P_{\text{t}}} \rangle \\
        &= \langle d^\pi_{P_{\text{s}}} \,,\, \bar{\delta}_f^{P_{\text{t}}} \rangle + \langle d^\pi_{P_{\text{t}}} - d^\pi_{P_{\text{s}}} \,,\, \bar{\delta}_f^{P_{\text{t}}} \rangle.
    \end{align}

    We apply Holder's inequality with $p=1$ and $q=\infty$, and get:

    \begin{align}
        \mathbb{E}_{\substack{s\sim d^\pi_{P_{\text{t}}}(\cdot)\\ a\sim\pi(\cdot|s)\\ s'\sim P_{\text{t}}(\cdot|s, a)}} \left[ \delta_f(s, a, s') \right] \geq 
        \langle d^\pi_{P_{\text{s}}} \, \bar{\delta}_f^{P_{\text{t}}} \rangle - 2 \epsilon_f^{P_{\text{t}}} D_{\text{TV}}\left( d^\pi_{P_{\text{s}}}, d^\pi_{P_{\text{t}}} \right),
    \end{align}
    with $\epsilon_f^{P_{\text{t}}} = \max_{s\in\mathcal{S}}\left\lvert 
        \mathbb{E}_{a\sim\pi, s'\sim P_{\text{t}}} \left[ \delta_f(s, a, s') \right]
        \right\rvert$. The Total Variation distance comes from the 1-norm resulting from the application of Holder's inequality. We obtain:

    \begin{equation}
        \begin{split}
        (1 - \gamma) \left( J^\pi_{P_{\text{t}}} - J^\pi_{P_{\text{s}}} \right) &\geq 
        \mathbb{E}_{\substack{s\sim d^\pi_{P_{\text{s}}}(\cdot), \\ a\sim\pi(\cdot|s)}} \left[
        \mathbb{E}_{s'\sim P_{\text{t}}(\cdot|s, a)} \left[ \delta_f(s, a, s') \right]
         - \mathbb{E}_{s'\sim P_{\text{t}}(\cdot|s, a)} \left[ \delta_f(s, a, s') \right]
        \right] \\ & \quad \quad - 2 \epsilon_f^{P_{\text{t}}} \tvinfdiv*{d^\pi_{P_{\text{s}}}}{d^\pi_{P_{\text{t}}}}. 
        \end{split}
    \end{equation}

\end{proof}

To conclude the proof of Proposition~\ref{prop:trust_region_1}, we choose $f$ as the null function $f : \mathcal{S} \to 0$ and upper bound the remaining term by reusing Holder's inequality:

\begin{align}
    \mathbb{E}_{\substack{s\sim d^\pi_{P_{\text{s}}}(\cdot), \\ a\sim\pi(\cdot|s), \\ s'\sim P_{\text{t}}(\cdot|s, a)}} \left[ r(s, a, s') \right] - \mathbb{E}_{\substack{s\sim d^\pi_{P_{\text{s}}}(\cdot), \\ a\sim\pi(\cdot|s), \\ s'\sim P_{\text{t}}(\cdot|s, a)}} \left[ r(s, a, s') \right]
        &\geq - 2 R_{\text{max}} \mathbb{E}_{\substack{s\sim d^\pi_{P_{\text{s}}}(\cdot), \\ a\sim\pi(\cdot|s)}} \left[ 
        \tvinfdiv*{P_{\text{s}}\left( \cdot \vert s, a\right)}{P_{\text{t}}\left( \cdot \vert s, a\right)}
        \right] \\
        &\geq - 2 R_{\text{max}} \tvpiinfdiv{P_{\text{s}}}{ P_{\text{t}}}.
\end{align}

\paragraph{Other choice for $f$} The function $f$ could also be chosen as the value function associated with the source system $V^{\pi}_{P_{\text{s}}}$. In which case, we get with Lemma~\ref{lem:general_lemma}:

\begin{equation}
\begin{split}
    J^\pi_{P_{\text{t}}} \geq J^\pi_{P_{\text{s}}} + \frac{1}{1-\gamma} \left( \mathbb{E}_{\substack{s\sim d^\pi_{P_{\text{s}}}(\cdot), \\ a\sim\pi(\cdot|s), \\ s'\sim P_{\text{s}}(\cdot|s, a)}} \left[
        \frac{P_{\text{t}}(s'|s, a)}{P_{\text{s}}(s'|s, a)} \left( r(s, a, s') + \gamma V^{\pi}_{P_{\text{s}}}(s') - V^{\pi}_{P_{\text{s}}}(s)  \right) \right] \right. \\
    \mspace{-50mu} \left. - 2 \epsilon_f^{P_{\text{t}}} \tvinfdiv*{d^\pi_{P_{\text{s}}}} {d^\pi_{P_{\text{t}}}}
    \right).
\end{split}
\end{equation}

It also introduces an additional term than Proposition~\ref{prop:trust_region}. Here, it is an importance sampling term between the transition probabilities that is difficulty optimized. In principle, it could be estimated with the classifiers proposed by DARC but would introduce a new level of complexity to the algorithm. Hence, we preferred focusing on proposing the simpler Proposition~\ref{prop:trust_region}.

\subsection{Proof of Proposition 2}

We present here the proof of our simpler Proposition~\ref{prop:trust_region} that we restate below, as well as its extensions using different discrepancy measures.

\begin{proposition}
\label{prop:trust_region_appendix}
    Let $\nu^\pi_P(s, a, s')$ the state-action-state visitation distribution, where $\nu^\pi_P(s, a, s') = (1 - \gamma) \mathbb{E}_{\rho_0, \pi, P}\left[  \sum_{t=0}^\infty \gamma^t \mathbb{P}\left( s_t=s, a_t=a, s_{t+1}=s' \right) \right]$. For any policy $\pi$ and any transition probabilities $P_{\text{t}}$ and $P_{\text{s}}$, the following holds:
    
    \begin{equation}
    J^\pi_{P_{\text{t}}} \geq J^\pi_{P_{\text{s}}} - \frac{2 R_{\text{max}}}{1-\gamma}  \tvinfdiv*{\nu^\pi_{P_{\text{s}}}}{\nu^\pi_{P_{\text{t}}}},
\end{equation}
with $D_{\text{TV}}$ the Total Variation distance. 
\end{proposition}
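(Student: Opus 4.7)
The plan is to exploit the fact that, once the reward is written as a linear functional on the transition visitation distribution, the statement becomes a one-line consequence of the duality between $L^1$ and $L^\infty$ (i.e.\ Hölder's inequality), exactly as in the classical simulation-lemma style arguments but now with $\nu^\pi_P$ instead of $d^\pi_P$.

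First I would unfold the definition of $\nu^\pi_P$ to obtain the identity
\[
J^\pi_P \;=\; \mathbb{E}_{\rho_0,\pi,P}\!\left[\sum_{t\ge 0}\gamma^t r(s_t,a_t,s_{t+1})\right]
\;=\; \frac{1}{1-\gamma}\, \mathbb{E}_{(s,a,s')\sim \nu^\pi_P}\!\left[r(s,a,s')\right].
\]
This is the key reformulation: it turns the two returns $J^\pi_{P_{\text{s}}}$ and $J^\pi_{P_{\text{t}}}$ into integrals of the \emph{same} function $r$ against two different probability measures $\nu^\pi_{P_{\text{s}}}$ and $\nu^\pi_{P_{\text{t}}}$ over $\mathcal{S}\times\mathcal{A}\times\mathcal{S}$.

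Next I would subtract the two expressions and apply Hölder's inequality with $p=1$, $q=\infty$:
\[
J^\pi_{P_{\text{t}}} - J^\pi_{P_{\text{s}}}
\;=\; \frac{1}{1-\gamma}\,\langle \nu^\pi_{P_{\text{t}}} - \nu^\pi_{P_{\text{s}}},\, r \rangle
\;\ge\; -\,\frac{\|r\|_\infty}{1-\gamma}\, \bigl\|\nu^\pi_{P_{\text{t}}} - \nu^\pi_{P_{\text{s}}}\bigr\|_1 .
\]
Then I would bound $\|r\|_\infty \le R_{\text{max}}$ and use the standard identity relating the $L^1$ norm between two probability measures and the Total Variation distance, namely $\|\nu^\pi_{P_{\text{t}}} - \nu^\pi_{P_{\text{s}}}\|_1 = 2\, D_{\text{TV}}(\nu^\pi_{P_{\text{s}}}, \nu^\pi_{P_{\text{t}}})$. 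Combining these two facts yields the advertised bound with the $\frac{2 R_{\text{max}}}{1-\gamma}$ prefactor.

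There is really no significant obstacle here; the argument is a short application of duality once the return has been rewritten as $\langle \nu^\pi_P, r\rangle/(1-\gamma)$. The only minor bookkeeping points are remembering the factor of $2$ between the $L^1$ distance and $D_{\text{TV}}$, and noting that since both $\nu^\pi_{P_{\text{s}}}$ and $\nu^\pi_{P_{\text{t}}}$ are probability measures one could, if desired, tighten the constant by centering $r$, but this is not needed to obtain the statement as written. The same skeleton immediately extends to the KL and JS variants mentioned in the paper: one simply invokes Pinsker's inequality (or its Jensen--Shannon analogue) to upper bound $D_{\text{TV}}$ by $\sqrt{D_{\text{KL}}/2}$ or the corresponding JS bound, yielding the alternative forms of the lower bound noted after the proposition.
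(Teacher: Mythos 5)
Your proposal is correct and follows essentially the same route as the paper's proof: rewrite $J^\pi_P$ as $\frac{1}{1-\gamma}\,\mathbb{E}_{(s,a,s')\sim\nu^\pi_P}[r(s,a,s')]$, take the difference of the two returns as an integral of $r$ against $\nu^\pi_{P_{\text{t}}}-\nu^\pi_{P_{\text{s}}}$, and apply H\"older's inequality with $(p,q)=(1,\infty)$ together with the identity relating the $L^1$ distance to twice the Total Variation distance. No gap to report.
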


\begin{proof}
    
    It is known that $J^\pi_{P} = \frac{1}{1-\gamma} \mathbb{E}_{(s,a,s')\sim\nu^\pi_P} \left[ r(s, a, s') \right]$. Now:

    \begin{align}
        \left\lvert J^\pi_{P_{\text{t}}} - J^\pi_{P_{\text{s}}} \right\rvert
        &= \frac{1}{1-\gamma} \left\lvert \left( \mathbb{E}_{(s,a,s')\sim\nu^\pi_{P_{\text{t}}}} \left[ r(s, a, s') \right] - \mathbb{E}_{(s,a,s')\sim\nu^\pi_{P_{\text{s}}}} \left[ r(s, a, s') \right] \right) \right\rvert \\
        &= \frac{1}{1-\gamma} \left\lvert \int_{s, a, s'} \left( r(s, a, s') \nu^\pi_{P_{\text{t}}}(s, a, s') - r(s, a, s') \nu^\pi_{P_{\text{s}}}(s, a, s') \right) \mathrm{d}\left\{sas'\right\}  \right\rvert \\
        &= \frac{1}{1-\gamma} \left\lvert \int_{s, a, s'} r(s, a, s') \left( \nu^\pi_{P_{\text{t}}}(s, a, s') - \nu^\pi_{P_{\text{s}}}(s, a, s') \right) \mathrm{d}\left\{sas'\right\} \right\rvert \\
        &\leq \frac{2 R_{\text{max}}}{1-\gamma} \tvinfdiv*{\nu^\pi_{P_{\text{s}}}}{\nu^\pi_{P_{\text{t}}}}.
    \end{align}

    The last inequality is an application of Holder's inequality, by setting $p$ to $\infty$ and $q$ to $1$.
    
\end{proof}

An application of Pinsker inequality \cite{csiszar1981coding} provides a similar upper bound with the ullback Leibleir divergence.

\begin{corollary}
\label{thm:trust_region_appendix_kl}
    Let $\nu^\pi_P(s, a, s')$ the state-action-state visitation distribution, where $\nu^\pi_P(s, a, s') = (1 - \gamma) \mathbb{E}_{\rho_0, \pi, P}\left[  \sum_{t=0}^\infty \gamma^t \mathbb{P}\left( s_t=s, a_t=a, s_{t+1}=s' \right) \right]$. For any policy $\pi$ and any transition probabilities $P_{\text{t}}$ and $P_{\text{s}}$ such that $\nu^\pi_{P_{\text{s}}}$ is absolutely continuous with respect to $\nu^\pi_{P_{\text{t}}}$, the following holds:
    
    \begin{equation}
    J^\pi_{P_{\text{t}}} \geq J^\pi_{P_{\text{s}}} - \frac{\sqrt{2} R_{\text{max}}}{1-\gamma}  \sqrt{\klinfdiv*{\nu^\pi_{P_{\text{s}}}}{\nu^\pi_{P_{\text{t}}}}},
    \end{equation}
with $D_{\text{KL}}$ the Kullback Leibleir divergence. 
    
\end{corollary}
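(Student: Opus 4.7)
The plan is to derive the corollary as a direct consequence of Proposition~\ref{prop:trust_region_appendix} by substituting the Total Variation distance with its Kullback--Leibler upper bound via Pinsker's inequality. Since Proposition~\ref{prop:trust_region_appendix} already yields
\[
J^\pi_{P_{\text{t}}} \geq J^\pi_{P_{\text{s}}} - \frac{2 R_{\text{max}}}{1-\gamma}\, \tvinfdiv*{\nu^\pi_{P_{\text{s}}}}{\nu^\pi_{P_{\text{t}}}},
\]
it suffices to upper bound the Total Variation term by a quantity involving $\klinfdiv*{\nu^\pi_{P_{\text{s}}}}{\nu^\pi_{P_{\text{t}}}}$ and to propagate this bound through the (negative) multiplicative factor.

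First I would invoke Pinsker's inequality \cite{csiszar1981coding}, which asserts that for two probability measures $P$ and $Q$ with $P \ll Q$,
\[
\tvinfdiv*{P}{Q} \leq \sqrt{\tfrac{1}{2}\,\klinfdiv*{P}{Q}}.
\]
The absolute continuity assumption on $\nu^\pi_{P_{\text{s}}}$ with respect to $\nu^\pi_{P_{\text{t}}}$ in the statement of the corollary is precisely what ensures that the KL divergence on the right-hand side is finite and that Pinsker's inequality applies to the pair $(\nu^\pi_{P_{\text{s}}}, \nu^\pi_{P_{\text{t}}})$.

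Substituting this bound into the inequality of Proposition~\ref{prop:trust_region_appendix}, and being careful with the sign since the Total Variation term is subtracted, yields
\[
J^\pi_{P_{\text{t}}} \geq J^\pi_{P_{\text{s}}} - \frac{2 R_{\text{max}}}{1-\gamma}\, \sqrt{\tfrac{1}{2}\,\klinfdiv*{\nu^\pi_{P_{\text{s}}}}{\nu^\pi_{P_{\text{t}}}}} = J^\pi_{P_{\text{s}}} - \frac{\sqrt{2}\, R_{\text{max}}}{1-\gamma}\, \sqrt{\klinfdiv*{\nu^\pi_{P_{\text{s}}}}{\nu^\pi_{P_{\text{t}}}}},
\]
which matches the claimed inequality. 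There is no real obstacle here: the whole argument is an immediate composition of Proposition~\ref{prop:trust_region_appendix} with a classical information-theoretic inequality, and the only subtlety worth mentioning explicitly is the absolute continuity hypothesis, which is made exactly to guarantee finiteness of the KL term. An analogous corollary could be stated for the Jensen--Shannon divergence using \cite[Proposition~3.2]{corander2021jensen} in place of Pinsker's inequality, following the same one-line substitution argument.
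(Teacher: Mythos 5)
Your proposal is correct and matches the paper's own (one-line) argument exactly: the paper likewise obtains the corollary by applying Pinsker's inequality $\tvinfdiv*{\nu^\pi_{P_{\text{s}}}}{\nu^\pi_{P_{\text{t}}}} \leq \sqrt{\tfrac{1}{2}\klinfdiv*{\nu^\pi_{P_{\text{s}}}}{\nu^\pi_{P_{\text{t}}}}}$ to the Total Variation term in Proposition~\ref{prop:trust_region_appendix}, and the constant $2\cdot\sqrt{1/2}=\sqrt{2}$ works out as you compute. Your remark on the role of the absolute continuity hypothesis is a reasonable elaboration of what the paper leaves implicit.
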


A lower bound with the Jensen Shannon divergence can also be found thanks to \cite[Proposition 3.2]{corander2021jensen}.

\begin{corollary}
\label{thm:trust_region_appendix_js}
    We assume the state-action space. Let $\nu^\pi_P(s, a, s')$ the state-action-state visitation distribution, where $\nu^\pi_P(s, a, s') = (1 - \gamma) \mathbb{E}_{\rho_0, \pi, P}\left[  \sum_{t=0}^\infty \gamma^t \mathbb{P}\left( s_t=s, a_t=a, s_{t+1}=s' \right) \right]$. We assume the support of $\nu^\pi_{P_{\text{s}}}$ and $\nu^\pi_{P_{\text{s}}}$ is $\mathcal{S}\times\mathcal{A}\times\mathcal{S}$. Then, for any policy $\pi$ and any transition probabilities $P_{\text{t}}$ and $P_{\text{s}}$, the following holds:

    \begin{equation}
    J^\pi_{P_{\text{t}}} \geq J^\pi_{P_{\text{s}}} - \frac{4 R_{\text{max}}}{ \left(1-\gamma\right)}  \sqrt{\jsinfdiv*{\nu^\pi_{P_{\text{s}}}}{\nu^\pi_{P_{\text{t}}}}},
    \end{equation}
with $D_{\text{JS}}$ the Jensen Shannon divergence. 

\end{corollary}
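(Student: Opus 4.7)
The plan is to exploit the fact that $J^\pi_P$ admits a one-line integral representation against the transition visitation distribution $\nu^\pi_P$, which turns the difference $J^\pi_{P_{\text{t}}} - J^\pi_{P_{\text{s}}}$ into a linear functional of the signed measure $\nu^\pi_{P_{\text{t}}} - \nu^\pi_{P_{\text{s}}}$. Once this representation is in hand, the bound is a routine application of H\"older's inequality.

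First I would unfold the definition
$\nu^\pi_P(s,a,s') = (1-\gamma)\mathbb{E}_{\rho_0,\pi,P}\bigl[\sum_{t=0}^\infty \gamma^t \mathbb{P}(s_t=s,a_t=a,s_{t+1}=s')\bigr]$
and swap sum and expectation in the definition of $J^\pi_P$ to obtain the identity
\[
J^\pi_P \;=\; \mathbb{E}_{\pi,P}\Bigl[\sum_{t=0}^\infty \gamma^t\, r(s_t,a_t,s_{t+1})\Bigr] \;=\; \frac{1}{1-\gamma}\,\mathbb{E}_{(s,a,s')\sim \nu^\pi_P}\!\bigl[r(s,a,s')\bigr].
\]
This is the crucial step: it shifts the comparison from two Markov chains rolled out in time to a single expectation over a fixed probability measure on $\mathcal{S}\times\mathcal{A}\times\mathcal{S}$.

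Next I would write the difference of objectives as an integral of $r$ against the signed measure $\nu^\pi_{P_{\text{t}}} - \nu^\pi_{P_{\text{s}}}$, i.e.
\[
J^\pi_{P_{\text{t}}} - J^\pi_{P_{\text{s}}} \;=\; \frac{1}{1-\gamma}\int r(s,a,s')\,\bigl(\nu^\pi_{P_{\text{t}}} - \nu^\pi_{P_{\text{s}}}\bigr)(s,a,s')\,\mathrm{d}\{sas'\},
\]
and then apply H\"older's inequality with $p=\infty$, $q=1$. The reward is uniformly bounded by $R_{\text{max}}$ in absolute value, and the $L^1$ norm of the difference of two probability measures equals $2 D_{\text{TV}}$, so the right-hand side is lower bounded by $-\tfrac{2R_{\text{max}}}{1-\gamma}\,D_{\text{TV}}(\nu^\pi_{P_{\text{s}}},\nu^\pi_{P_{\text{t}}})$, which rearranges into the claimed inequality.

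There is no real obstacle here; the only mildly delicate point is the bookkeeping in step one, where one must confirm that the discounted occupancy $\nu^\pi_P$ is indeed a (normalized) probability measure on $\mathcal{S}\times\mathcal{A}\times\mathcal{S}$ so that $J^\pi_P$ is literally an expectation under $\nu^\pi_P$ (up to the $1/(1-\gamma)$ factor). The corollaries stated for KL and JS divergences would then follow immediately by chaining Pinsker's inequality, respectively the bound of \cite{corander2021jensen}, onto the Total Variation bound we just established.
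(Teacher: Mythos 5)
Your proposal matches the paper's argument: the paper likewise proves the Total Variation bound by writing $J^\pi_P = \frac{1}{1-\gamma}\mathbb{E}_{(s,a,s')\sim\nu^\pi_P}[r(s,a,s')]$, bounding the difference via H\"older's inequality with $p=\infty$, $q=1$, and then obtains the Jensen--Shannon corollary by invoking the inequality of \cite[Proposition 3.2]{corander2021jensen} exactly as you do. The only thing left implicit in both your write-up and the paper's is checking that the cited relation between $D_{\text{TV}}$ and $\sqrt{D_{\text{JS}}}$ yields the stated constant $4R_{\text{max}}/(1-\gamma)$, but the approach is the same.
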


\section{Algorithms Details}

In this section, we further present the different algorithms used in this paper.

\subsection{Domain Adaptation with Rewards from Classifiers (DARC)} \label{app:darc_details}

We introduce our main baseline Domain Adaptation with Rewards from Classifiers (DARC), which is the prominent state-of-the-art algorithm that tackles the off-dynamics task by modifying the RL objective.

DARC takes a variational perspective to this problem. Given a trajectory $\tau = \left( s_0, a_0, s_1, a_1, \dots \right)$, the target distribution $p(\tau)$ over trajectories is defined as the one inducing trajectories that maximize the exponentiated rewards in the target environment:

\begin{equation}
    p(\tau) = \rho(s_0) \left( \prod_t P_{\text{t}}(s_{t+1}\vert s_t, a_t) \right) \exp\left( \sum_t r(s_t, a_t, s_{t+1}) \right).
\end{equation}

Let the agent's distributions over trajectories in the source environment $q^{\pi_\theta}(\tau)$ be:

\begin{equation}
    q^{\pi_\theta}(\tau) = \rho(s_0) \left( \prod_t P_{\text{s}}(s_{t+1}\vert s_t, a_t) \right) \pi_\theta (a_t\vert s_t).
\end{equation}

DARC minimizes the reversed KL-divergence between $q^{\pi_\theta}(\tau)$ and $p(\tau)$, which results in the following objective expression:

\begin{gather}
    - \klinfdiv*{q^{\pi_\theta}(\tau)}{p(\tau)} = \mathbb{E}_{\tau\sim q^{\pi_\theta}(\cdot)} \left[ \sum_{t=1}^T r(s_t, a_t, s_{t+1}) + \mathcal{H}\left( \pi_\theta(\cdot|s_t) \right) +  \Delta r(s_t, a_t, s_{t+1}) \right],
\end{gather}
with $\Delta r(s_t, a_t, s_{t+1}) = \log P_{\text{t}}(s_{t+1}\vert s_t, a_t) - \log P_{\text{s}}(s_{t+1}\vert s_t, a_t)$ and $\mathcal{H(\cdot)}$ the entropy.

The additional reward term incentivizes the agent to select transitions from the source that are similar to the target environment. Since the transition probabilities are unknown, DARC uses a pair of binary classifiers to infer whether transitions come from the source or target environment. These classifiers are then used to create a proxy equivalent to $\Delta r$.


\subsection{Generative Adversarial Imitation Learning applied for transition distributions}

Generative Adversarial Imitation Learning (GAIL) \cite{ho2016generative} is a state-of-the-art Imitation Learning algorithm. Its goal is to recover an expert policy $\pi_e$ by minimizing the Jensen-Shanon divergence between the state-action visitation distributions of the expert and the learning policy. It has been proved that it is able to handle transition visitation distributions in \cite{desai2020imitation} as follows. To comply with our previous notations, $\pi_e$ is now denoted as $\pi_{\theta_k}$ (fixed).

The authors define the general objective to solve by introducing a convex cost function regularizer $\psi : \mathbb{R}^{S\times A\times S} \to \mathbb{R}$ and its convex conjugate $\psi^*$:
\begin{mini}{\theta\in\Theta}{ \psi^*(\nu_{P_{\text{s}}}^{\pi_\theta} - \nu_{P_{\text{t}}}^{\pi_{\theta_k}}).}{}{}
\label{eq:gail_general_objective_app}
\end{mini}

Following Equation~13 of \cite{ho2016generative} which defines $\psi_{\text{GAIL}}$, the authors establish the following equivalence:
\begin{equation}
    \psi^*_{\text{GAIL}}(\nu_{P_{\text{s}}}^{\pi_\theta} - \nu_{P_{\text{t}}}^{\pi_{\theta_k}}) = \sup_{D\in\left( 0, 1 \right)^{S\times A\times S}} \mathbb{E}_{(s,a,s')\sim\nu_{P_{\text{s}}}^{\pi_\theta}} \left[ \log \left( D(s,a, s') \right) \right] + \mathbb{E}_{(s,a,s')\sim\nu_{P_{\text{t}}}^{\pi_{\theta_k}}}\left[ \log \left( 1 - D(s,a,s') \right) \right]
\end{equation}

where $D : \mathcal{S}\times\mathcal{A}\times\mathcal{S} \to \left( 0, 1 \right)$ is a classifier. Finally, it is demonstrated this specific convex cost function induces the following objective: 
\begin{equation}
\min_{\theta\in\Theta}{ \psi^*_{\text{GAIL}}(\nu_{P_{\text{s}}}^{\pi_\theta} - \nu_{P_{\text{t}}}^{\pi_{\theta_k}}) = \min_{\theta\in\Theta} \jsinfdiv{\nu_{P_{\text{s}}}^{\pi_\theta}}{\nu_{P_{\text{t}}}^{\pi_{\theta_k}}} .}
\label{eq:gail_general_objective}
\end{equation}

In practice, the classifier $D$ is trained to distinguish between samples $(s,a,s')\in(\mathcal{S}\times\mathcal{A}\times\mathcal{S})$ from $\nu_{P_\text{s}}^{\pi_\theta}$ and $\nu_{P_\text{t}}^{\pi_{\theta_k}} $. The reward used for optimizing the RL agent is given by $r_{\text{imit}} = -\log \left( D(s, a, s') \right)$.

\subsection{Conservative Q-Learning (CQL)}

In the offline setting, agents aim to learn a good policy from a fixed data set of $M$ transitions $\mathcal{D} = \{ (s_i, a_i, s_{i+1} \}_{i=0}^M$ that was collected with an unknown behavioral policy $\pi_\beta$, which is here $\pi_{\theta_k}$. Offline RL algorithms have demonstrated impressive results when the data set is gathered with a sufficiently good policy and possesses enough transitions, often outperforming the behavioral policy.

Conservative Q-Learning (CQL) \cite{kumar2020conservative} is a state-of-the-art offline RL algorithm. It modifies the learning procedure of the $Q$-functions to favor transitions appearing in the data set. At iteration $k$, the $Q$-values are updated as follows at step $j$:

\begin{mini}{\omega\in\Omega}{ \beta \, \mathbb{E}_{s\sim\mathcal{D}}\left[ \left(
    \log \sum_{a\in\mathcal{A}} \exp\left( Q_\omega^{\pi_{\theta_j}}(s,a) \right)
    - \mathbb{E}_{a\sim\pi_{\theta_k}(\cdot\vert s)} \left[ Q_\omega^{\pi_{\theta_j}}(s,a) \right] 
    \right) \right]
    + \mathcal{E}\left( Q_\omega^{\pi_{\theta_j}} \right),}{}{}
\label{eq:eq_cql}
\end{mini}

where $\mathcal{E}\left( Q \right)$ represents the traditional Bellman loss associated with the $Q$-functions. The regularization, controlled by the hyper-parameter $\beta$, penalizes the $Q$-values associated with state-action pairs not appearing in the data set.

\section{Experimental Details} \label{app:exp_details}

In this section, in addition to the values of the hyperparameters necessary to replicate our experiments, we provide further details of the experimental protocol and training. In this section, considering the possible high variance of $\text{RL}_{\text{s}}$, the standard deviation is multiplied by a factor of $0.3$. The original variance can be found in Table~\ref{table:global_results}.

\subsection{Environment Details} \label{app:env_details}

In all the considered environments, one property is modified in the target environment.

\paragraph{Gravity Pendulum} Gravity is increased to $14$ instead of $10$. Since the pendulum requires more time to reach the objective, we also increase the length of each episode to 500 time-steps in the target environment, while keeping the original length of 200 time-steps in the source system.

\paragraph{Broken Joint or Leg environments} In these environments, the considered robot - either HalfCheetah or Ant - is crippled in the target domain, where the effect of one or two joints is removed. In practice, this means that it sets one or two dimensions of the action to $0$. These environments were extracted from the open source code of \cite{eysenbach2020off}.

\paragraph{Heavy Cheetah} The total mass of the HalfCheetah MuJoCo robot is increased from $14$ to $20$.

\paragraph{Friction Cheetah} The friction coefficient of the HalfCheetah MuJoCo robot's feet is increased from $0.4$ to $1$.

\paragraph{Low Fidelity Minitaur} The original Minitaur environment uses a linear torque-current linear relation for the actuator model. It has been improved in \cite{tan2018sim} by introducing non-linearities into this relation where they managed to close the Sim-to-Real gap for a real Minitaur environment. In practice, the Minitaur environment can be found in the PyBullet library \cite{coumans2021}. The high fidelity is registered as MinitaurBulletEnv-v0. The low fidelity environment can be recovered by calling MinitaurBulletEnv-v0 and by setting the argument \texttt{accurate motor model enabled} to False and \texttt{pd control enabled} to True.

\subsection{Learning curves} \label{app:learning_curves}

We report in Figure~\ref{fig:learning curves} the learning curves of the different agents mentioned in this paper. For clarity purposes, we keep all baselines fixed except for our agent and DARC, our main competitor. Here, FOOD uses the regularization with $d^\pi_P$ for Gravity Pendulum and $\nu^\pi_P$ for the other environments as GAIL proved to be more stable when FOOD used PPO.

\begin{figure}[htp]
\centering

\includegraphics[width=.325\textwidth]{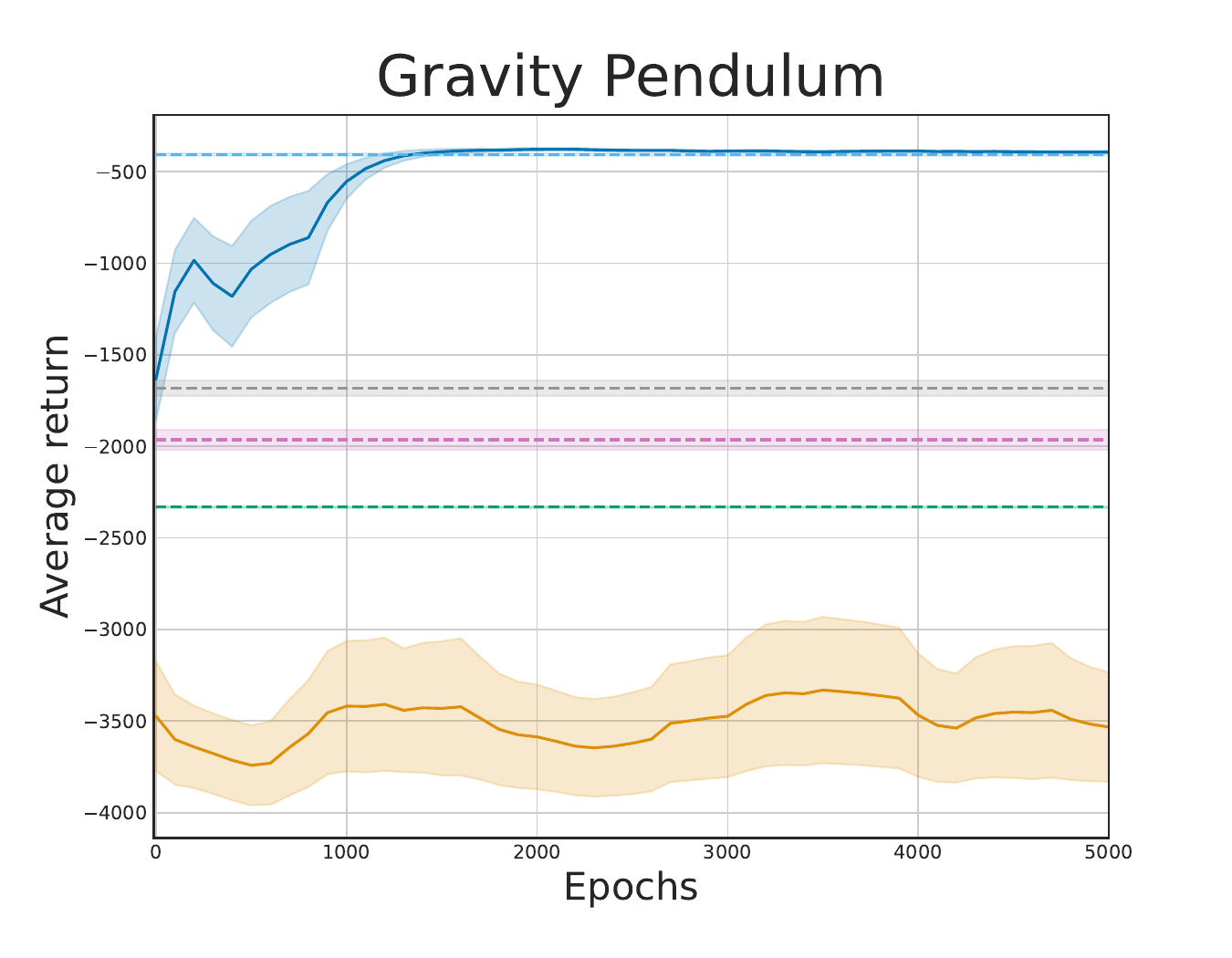}
\includegraphics[width=.325\textwidth]{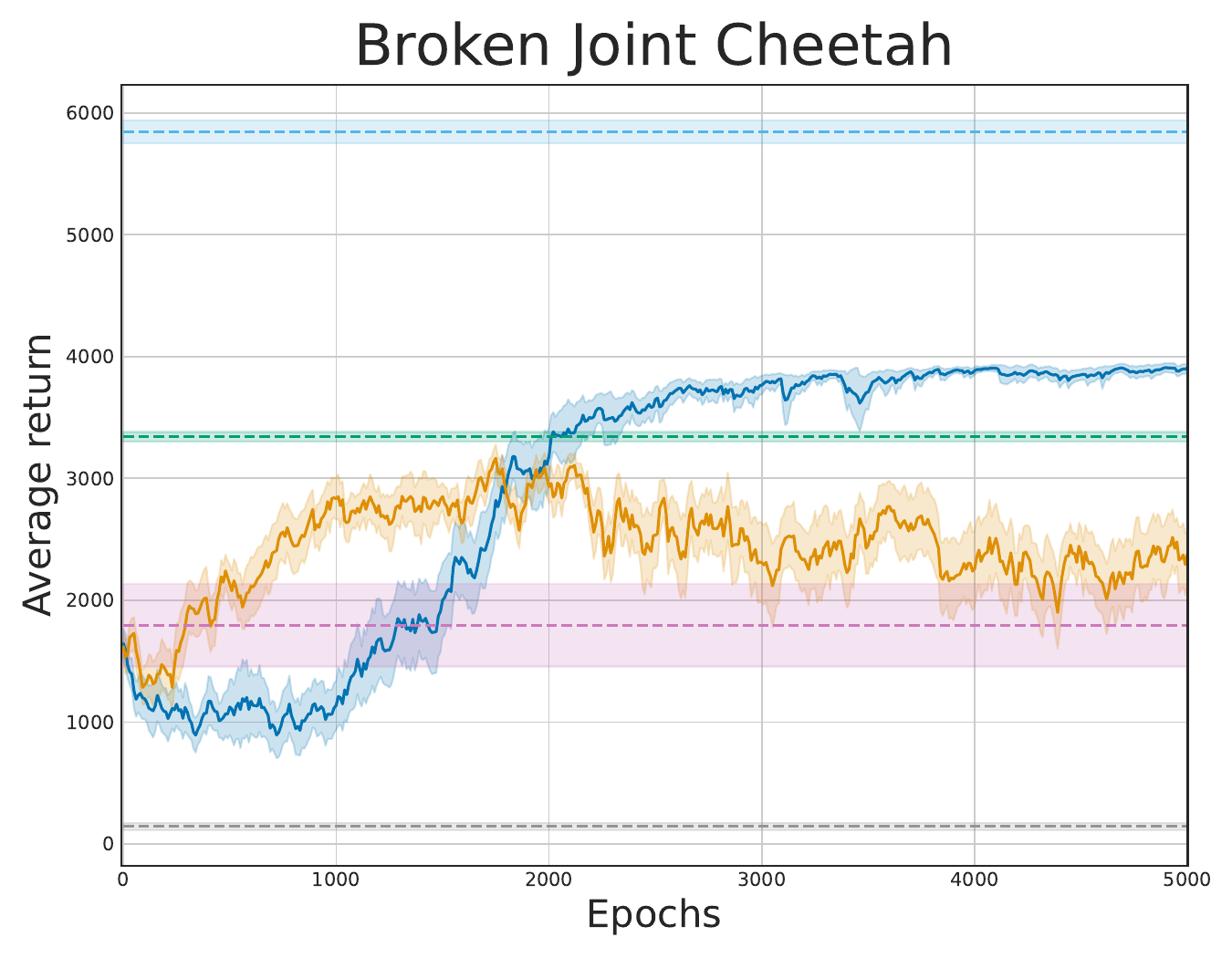}
\includegraphics[width=.325\textwidth]{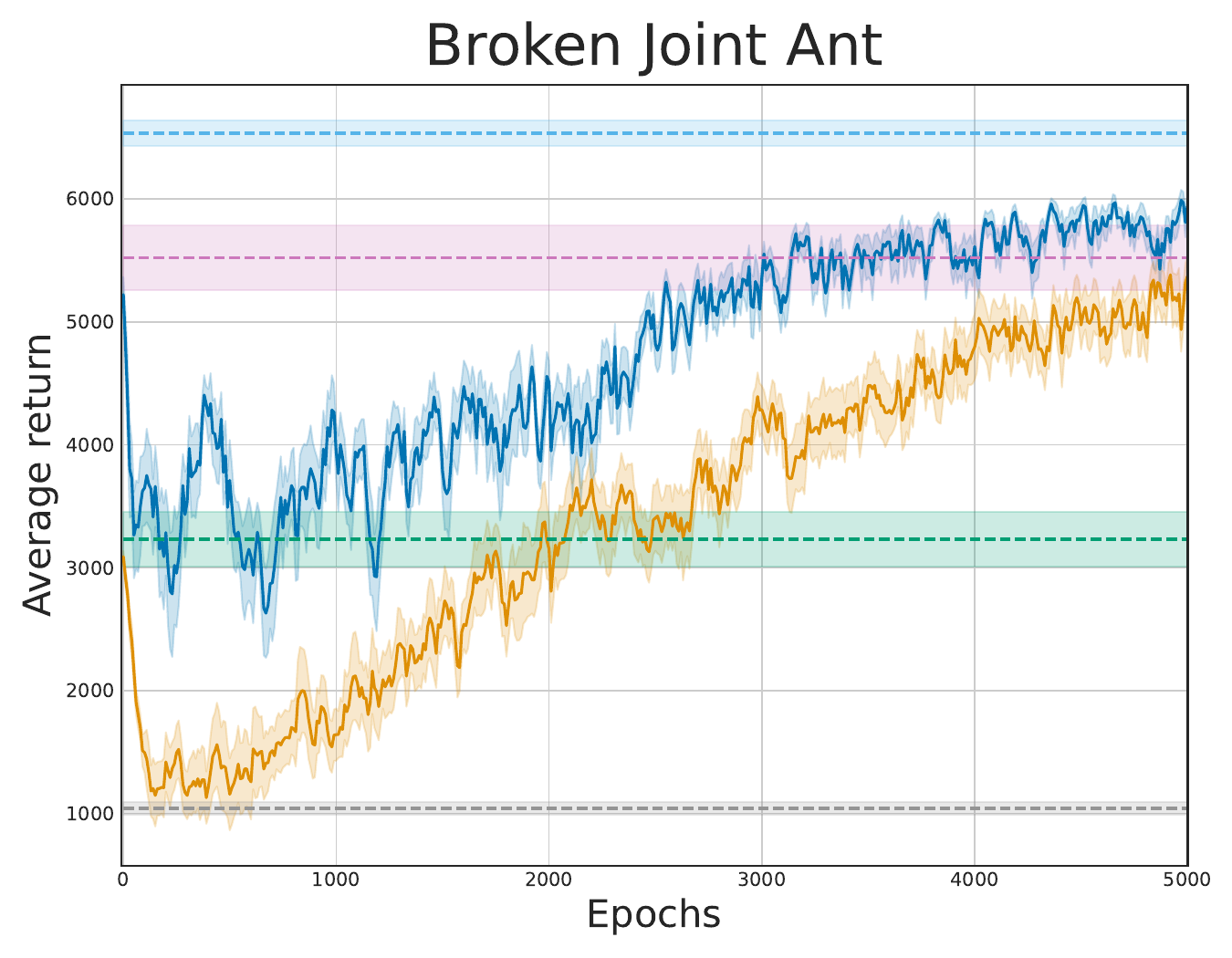}

\medskip

\includegraphics[width=.325\textwidth]{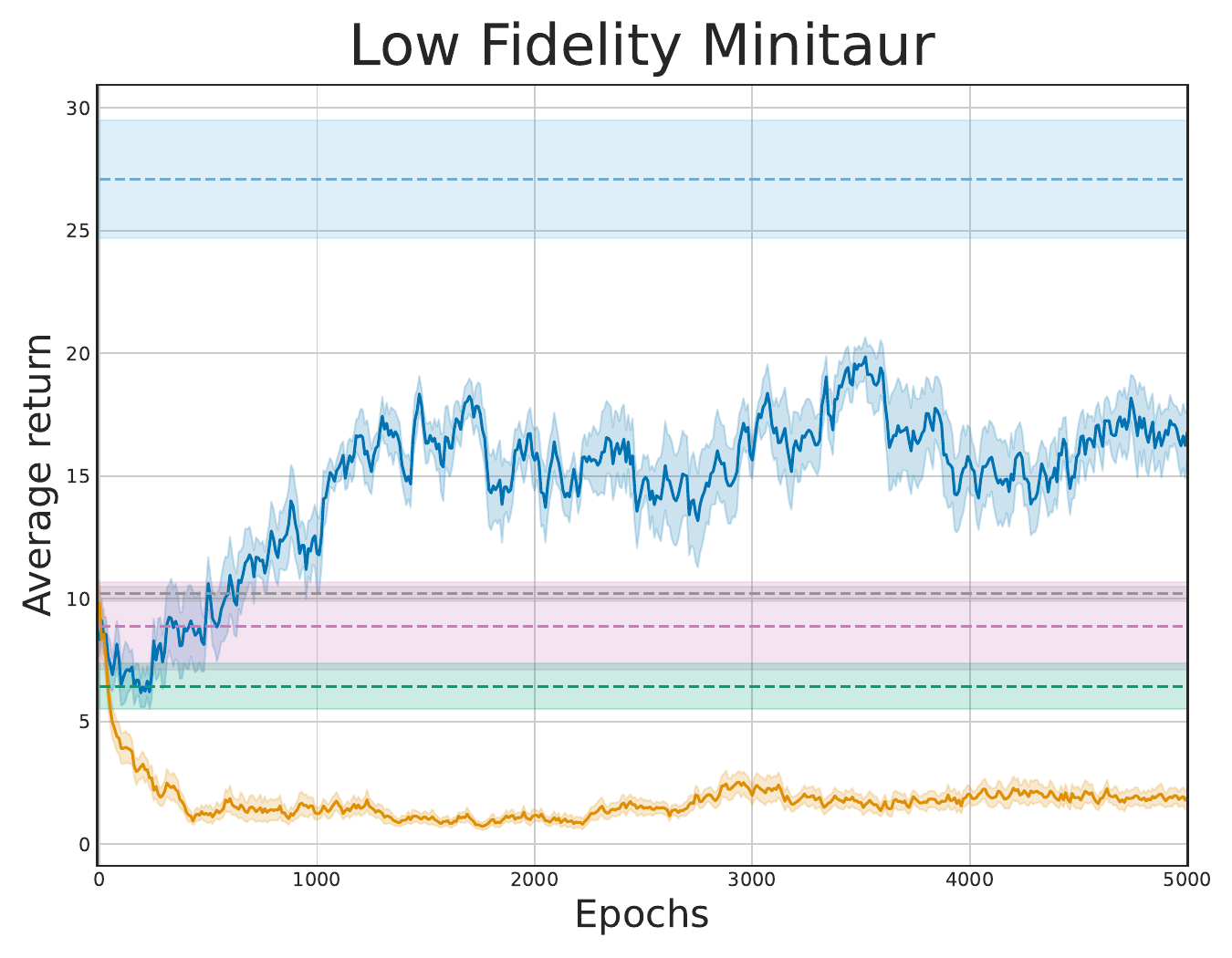}
\includegraphics[width=.325\textwidth]{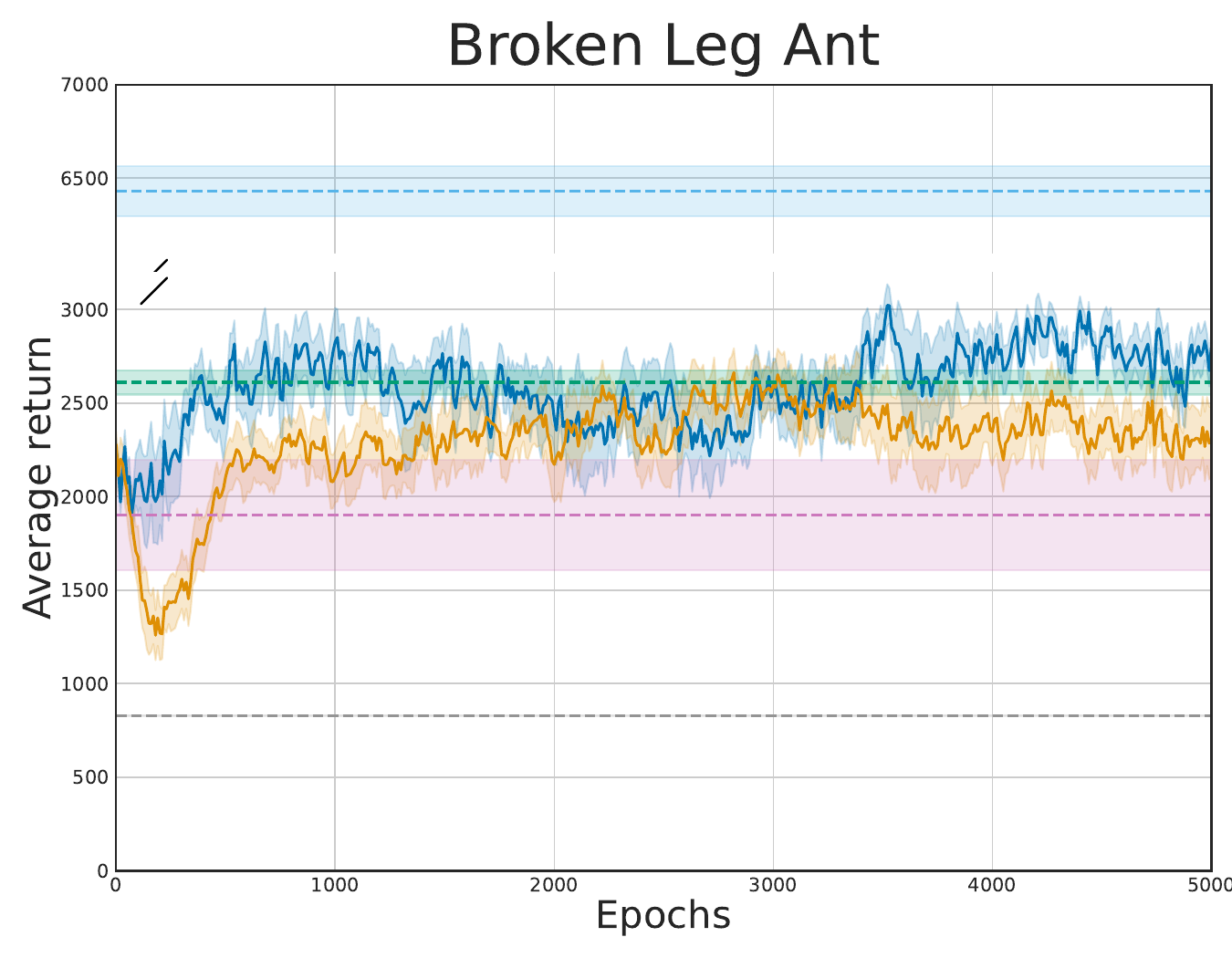}
\includegraphics[width=.325\textwidth]{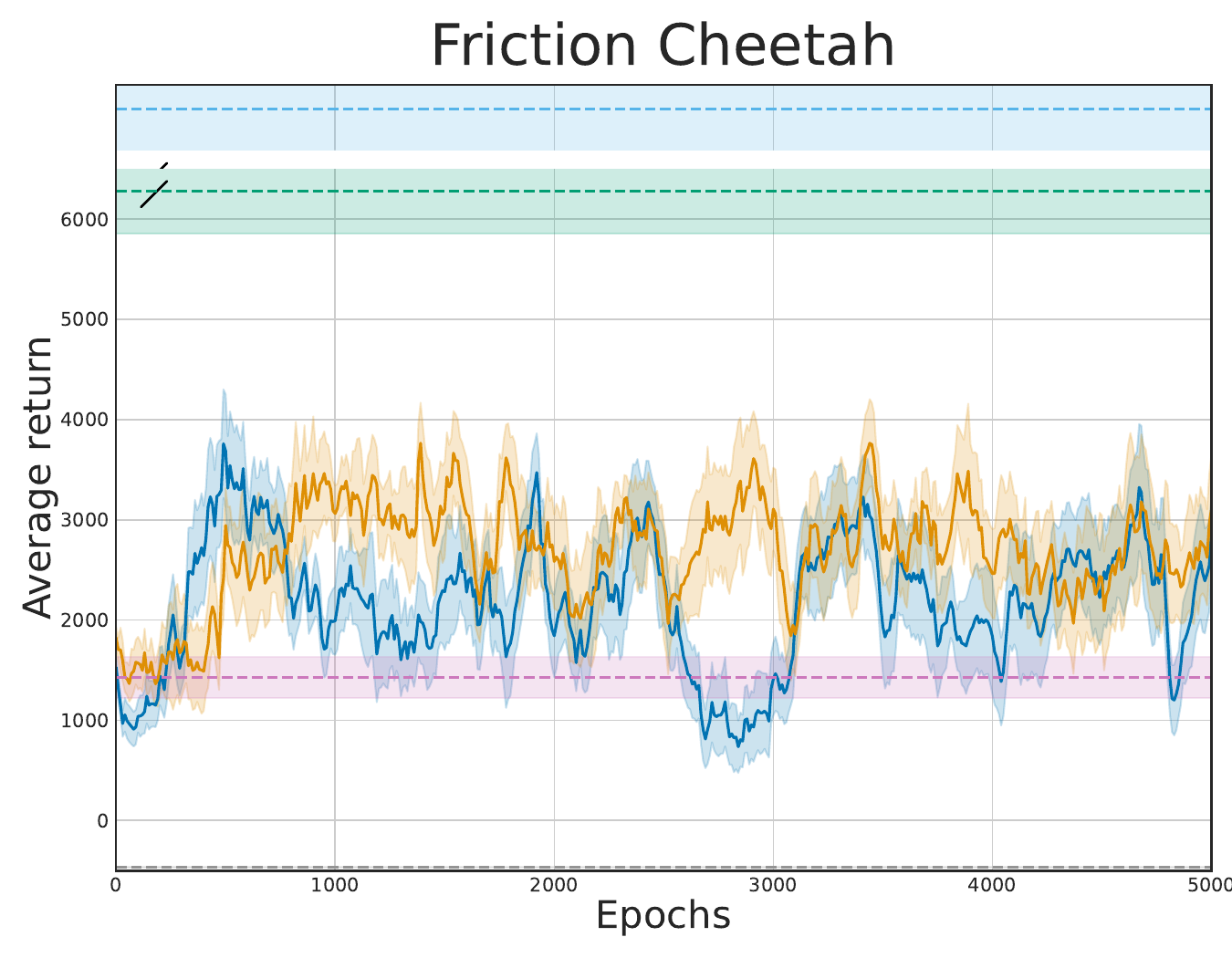}

\medskip

\includegraphics[width=.325\textwidth]{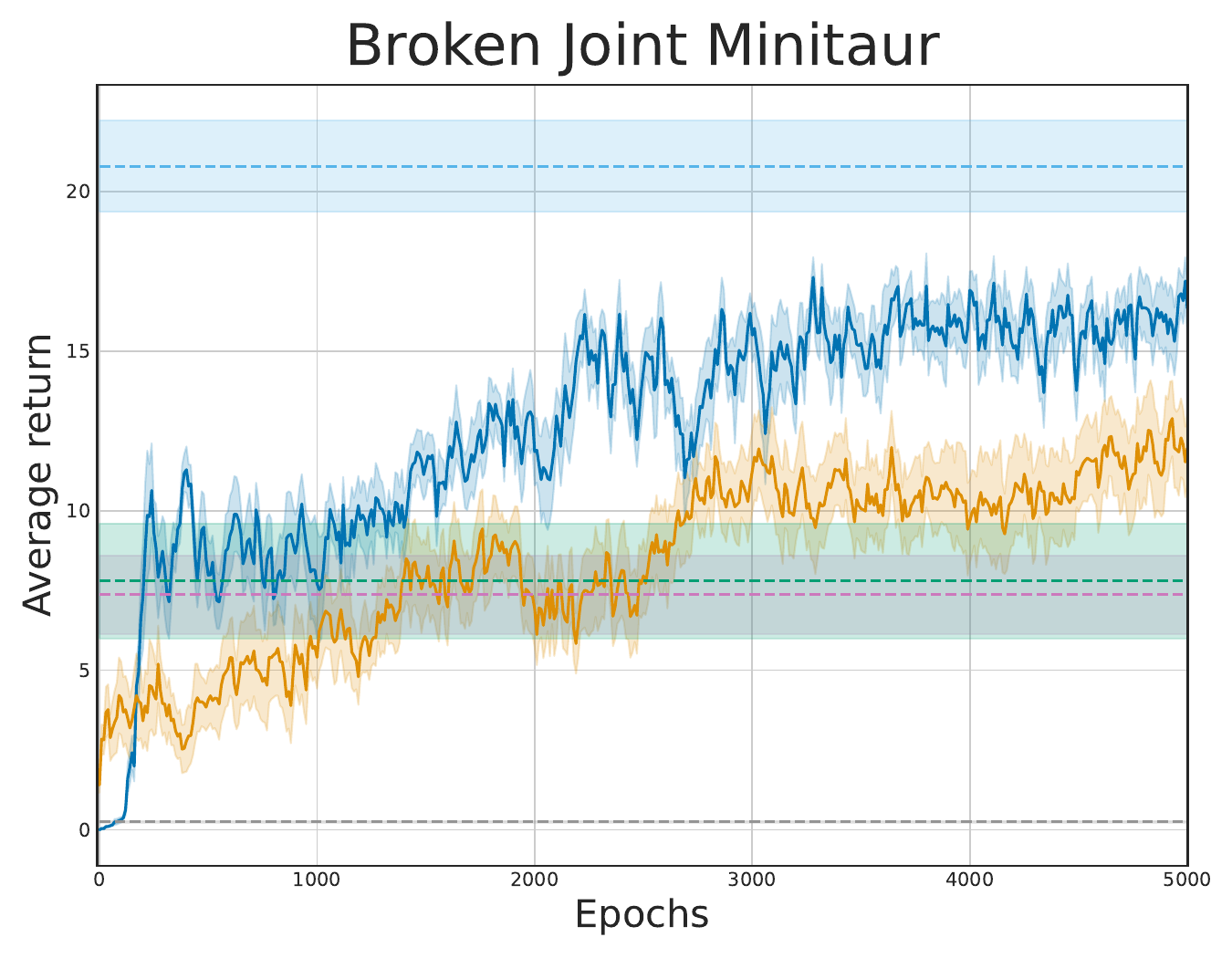}
\includegraphics[width=.325\textwidth]{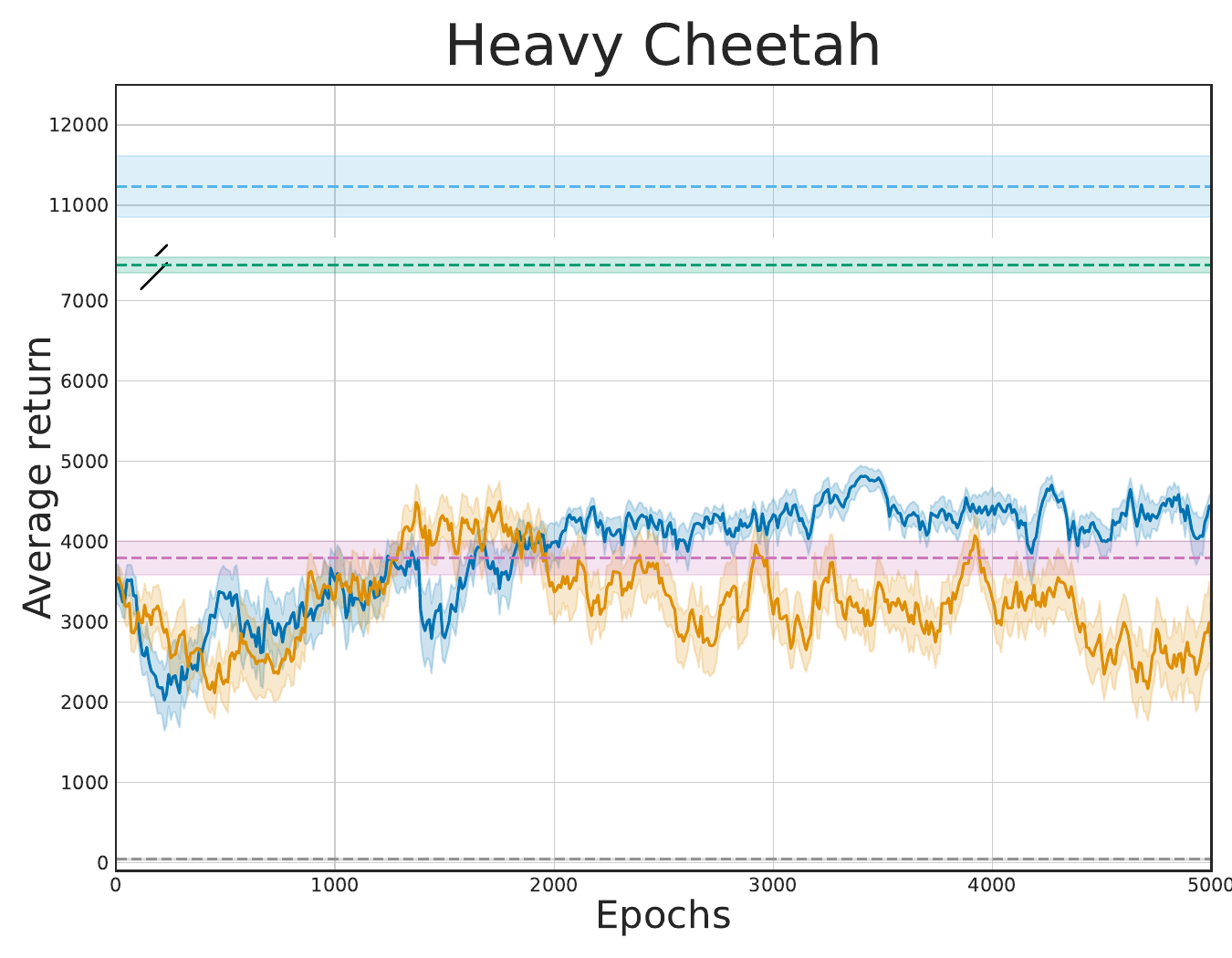}

\medskip

\includegraphics[width=.8\textwidth]{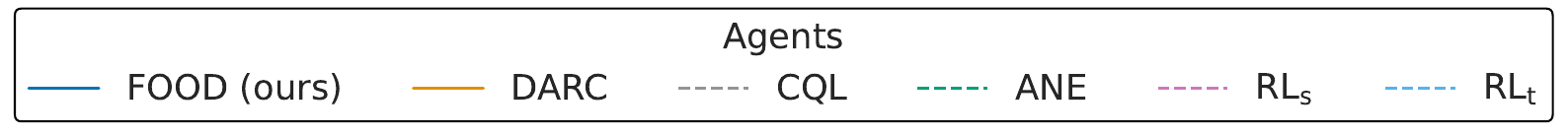}

\caption{Learning curves of FOOD and DARC for all the proposed environments.}
\label{fig:learning curves}
\end{figure}

\subsection{Global hyper-parameters} \label{app:hyperparam_details}

Our experiments are based on the A2C and PPO implementations proposed by the open-source code \cite{pytorchrl}. We also found that it may be profitable to add a TanH function at the end of the network's policy for the PPO agent to increase the performance of $\text{RL}_{\text{s}}$. We have selected their hyper-parameters according to the source \cite{rl_zoo3} and included them in Table~\ref{table:hyperparameters}.

\begin{table}[htbp]
\centering
\caption{Chosen hyper-parameters for both A2C and PPO. The PPO hyper-parameters were fixed for the other environments.}
\label{table:hyperparameters} 
\catcode`,=\active
\renewcommand\arraystretch{1.2}
\begin{tabular}{p{2.8cm}  p{3.5cm} p{2cm} }   
  \toprule
    Hyperparameters            & \textbf{A2C}                    &  \textbf{PPO}           \\  
  \midrule
    num-processes              &  $8$                            & $8$                   \\
    num-steps                  & $200$                           & $1000$                \\
    lr                         & $2.5 * 10^{-4}$                 & $3.0 * 10^{-4}$       \\
    $\gamma$                   & $0.99$                          & $0.99$                          \\
    use-gae                    & True                            & True                  \\
    gae-lambda                 & $0.9$                           & $0.95$                      \\ 
    entropy-coef               & $0.01$                          & $0.001$                        \\ 
    value-loss-coef            & $0.4$                           & $0.5$                       \\ 
    use-linear-lr-decay        & True                            & True                       \\ 
    ppo-epoch                  & N/A                             & $5$                       \\ 
    num-mini-batch             & N/A                             & $32$                       \\
    clip-param                 & N/A                             & $0.1$                       \\
    TanH Squash                & False                            & True                       \\
  \bottomrule
\end{tabular} 
\end{table}

\paragraph{The Minitaur environments} As proposed by the PyBullet library \cite{coumans2021}, $\gamma$ is set to $0.995$ for the Minitaur environments. Besides, unlike the Gym and Mujoco environments, they do not use a Tanh squashing function in their policy and the \texttt{num-processes} hyper-parameter is set to $1$.

\paragraph{Algorithms optimization} To allow a fair comparison between the different agents, FOOD, DARC, and ANE use the same underlying agent to optimize their objective. It is A2C for Gravity Pendulum and PPO for the others.

\paragraph{Discriminators training} Both FOOD and DARC incorporate classifiers in their objective. At each epoch, $1000$ data points are sampled from both source and target transition data sets. The classifiers are then trained with batch sizes of $128$ for Pendulum and $256$ for the MuJoCo environments. They share the same network structure: a $2$ hidden layer MLP with $64$ (for Pendulum) or $256$ (for MuJoCo) units and ReLU activations. We did not find that the size of the networks play an important role in the results.

\subsection{FOOD Hyper-parameters Sensitivity Analysis} \label{app:food_hyperparams_analysis}

This subsection investigates the impact of our main hyper-parameter $\alpha$, which regulates the strength of regularization that defines a threshold between maximizing the rewards of the source MDP and staying close to the target trajectories. All FOOD results are summarized in Figure~\ref{fig:complete_hyperparam_food}, where, similar to the previous section, FOOD uses the regularization with $d^\pi_P$ in Gravity Pendulum and $\nu^\pi_P$ for the other environments. Note that for the Gravity Pendulum environment, $\alpha\in\{ 0, 1, 5, 10 \}$.

\begin{figure}[ht!]
\centering
\includegraphics[width=.32\textwidth]{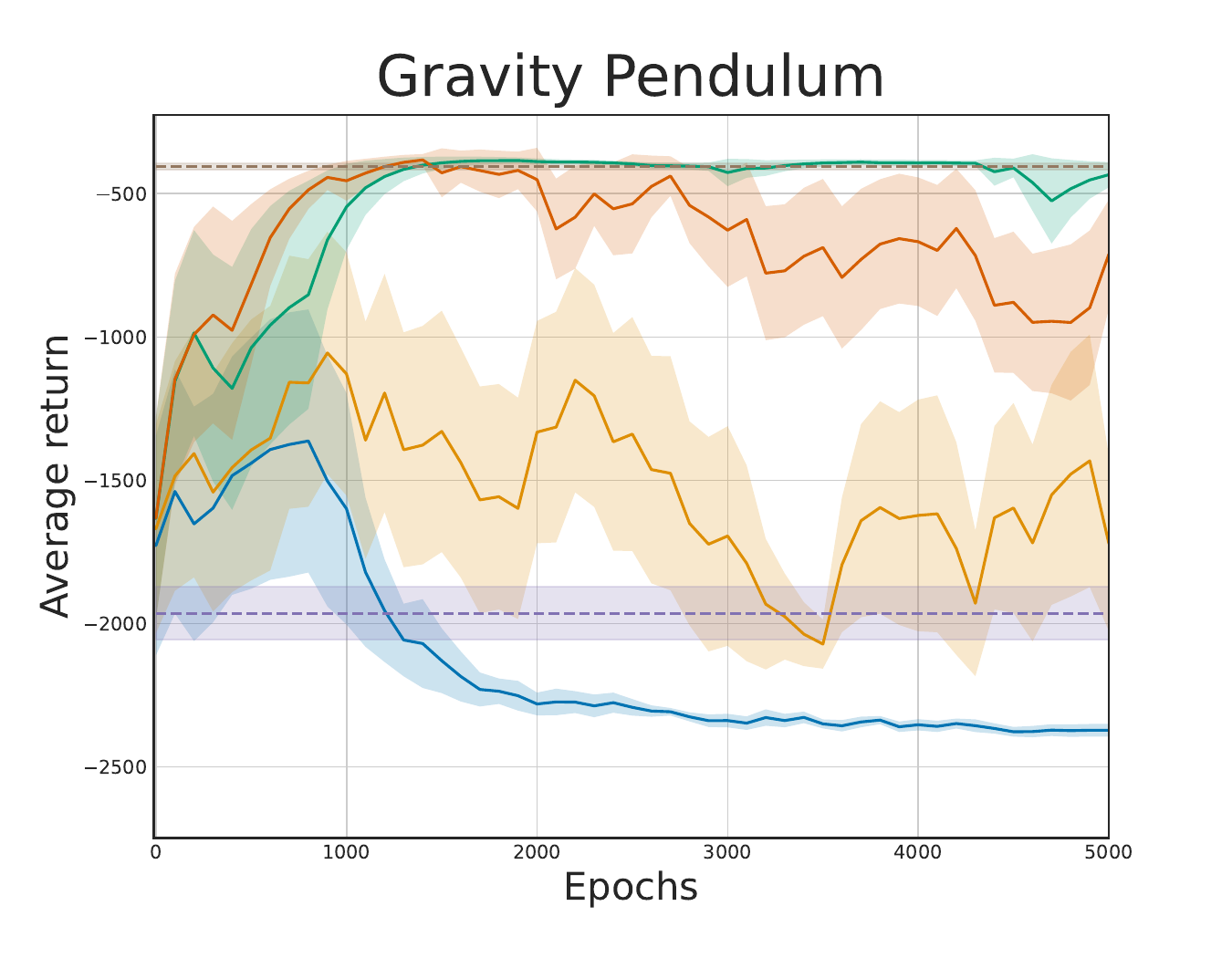}
\includegraphics[width=.32\textwidth]{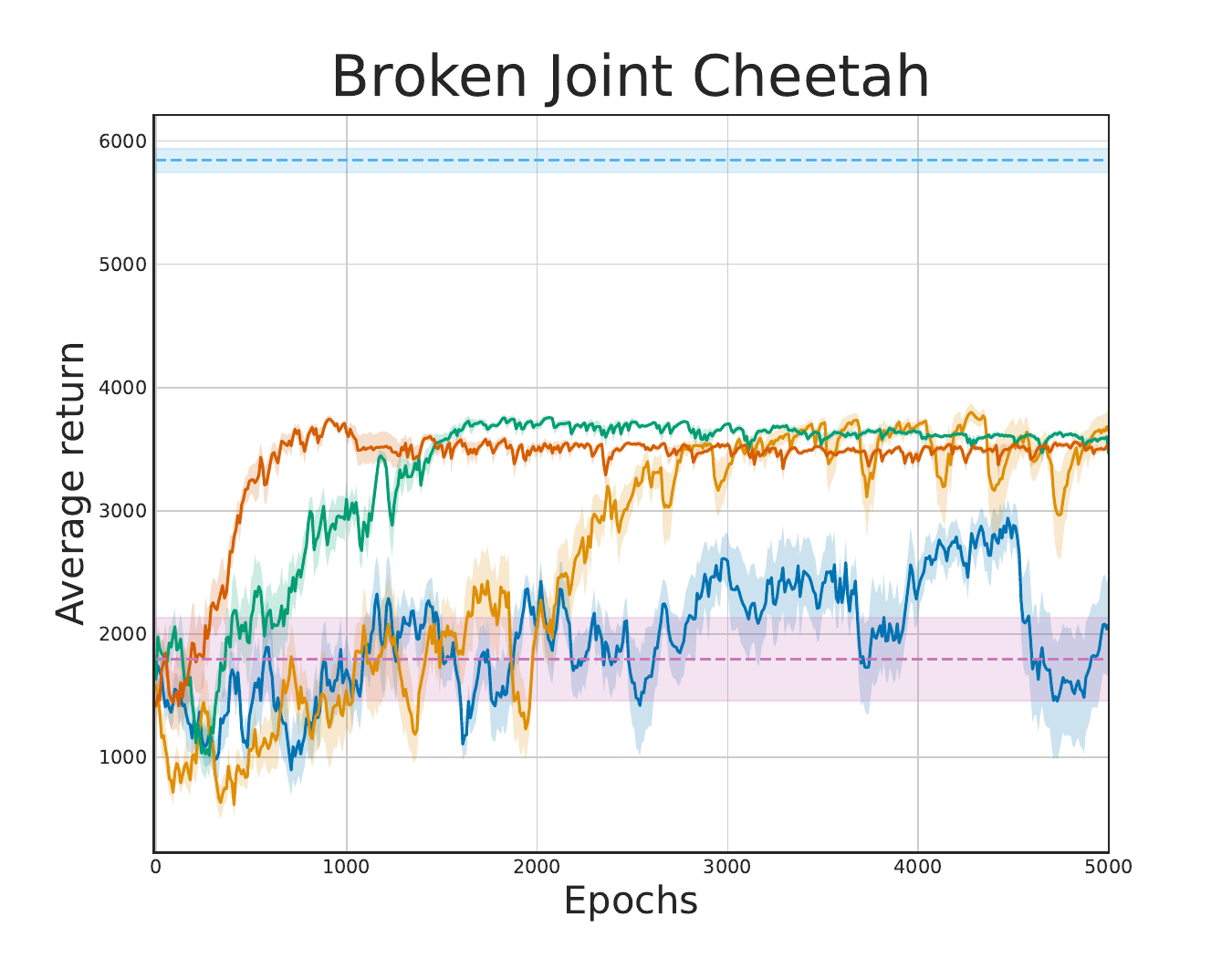}
\includegraphics[width=.32\textwidth]{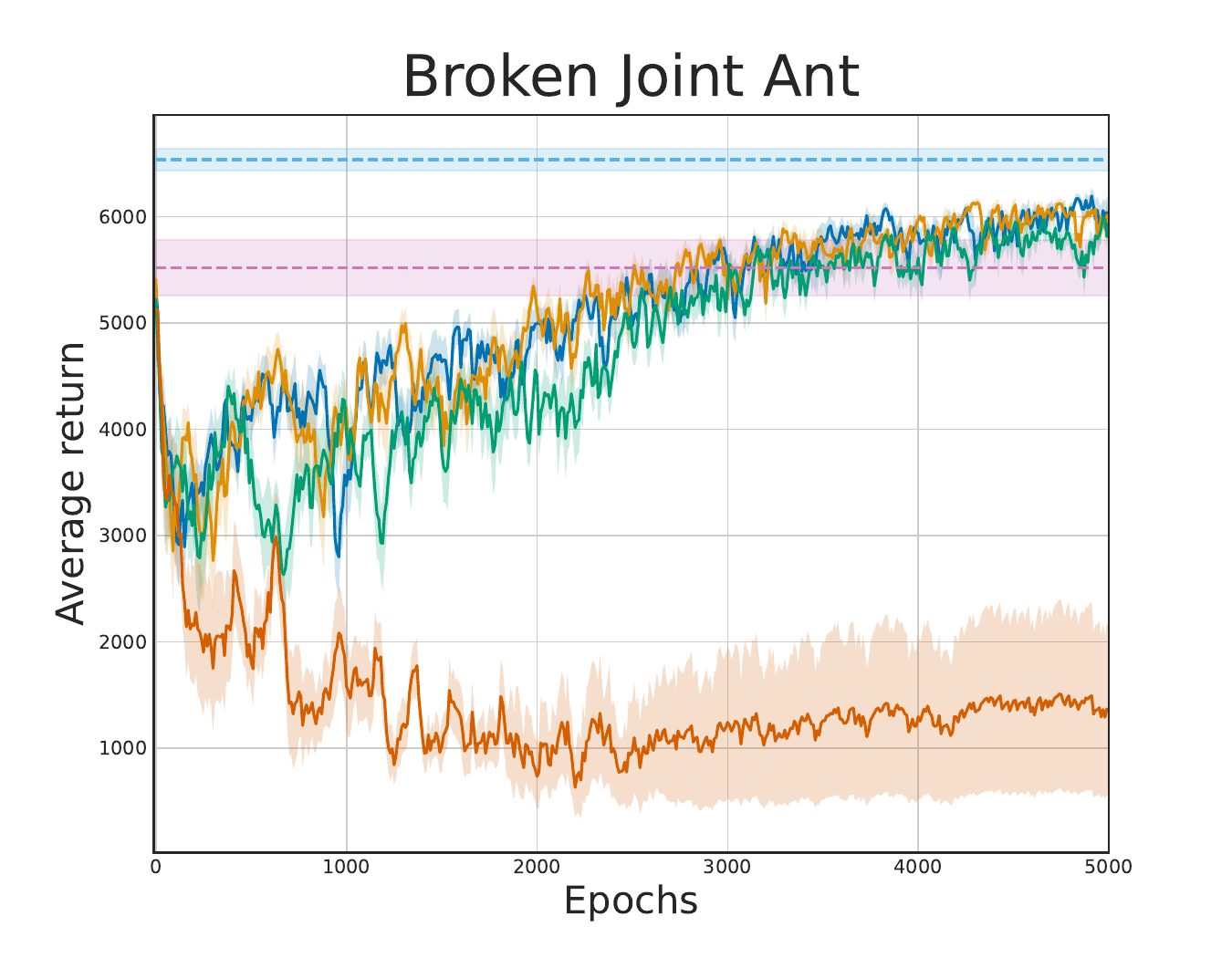}

\medskip

\includegraphics[width=.32\textwidth]{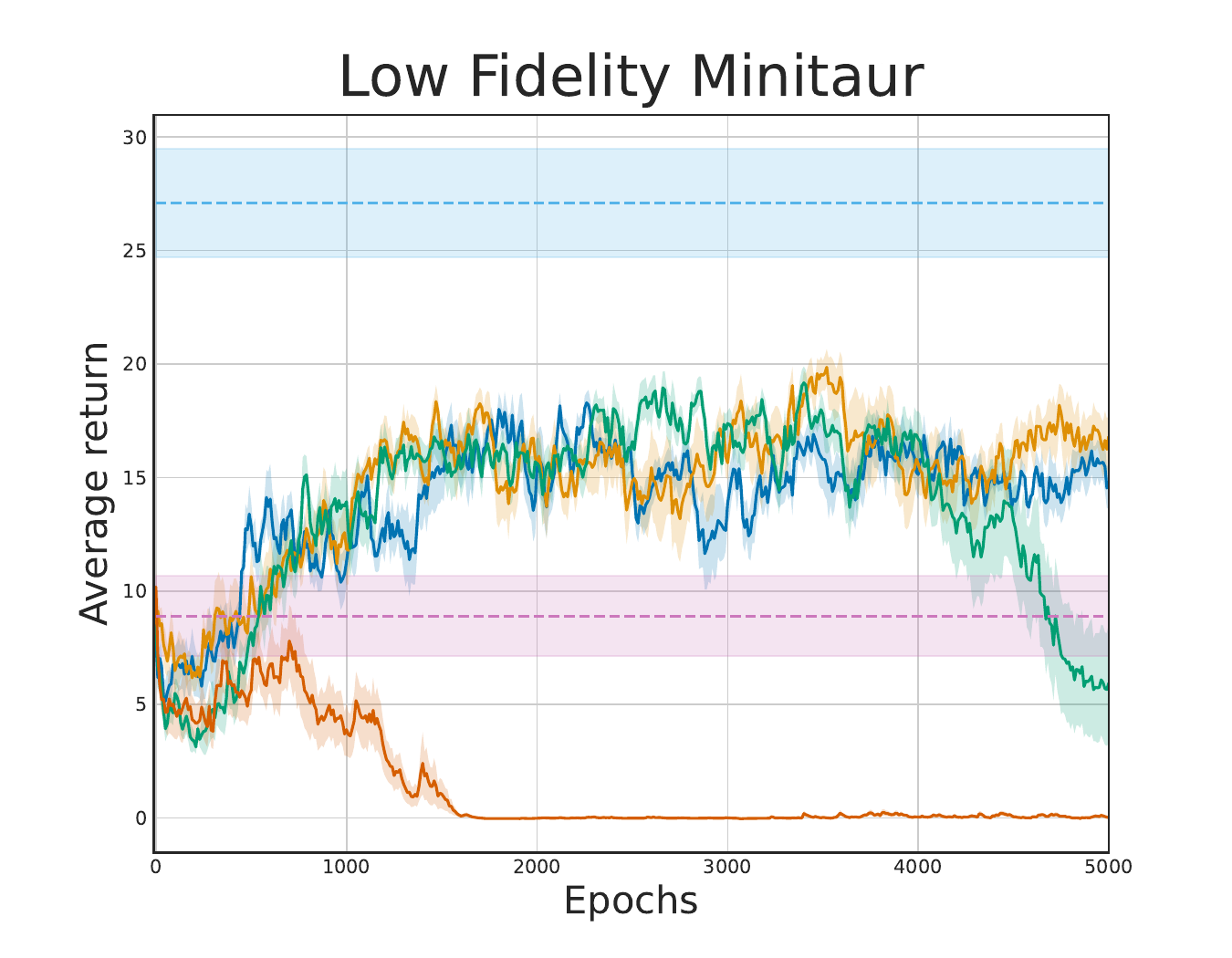}
\includegraphics[width=.32\textwidth]{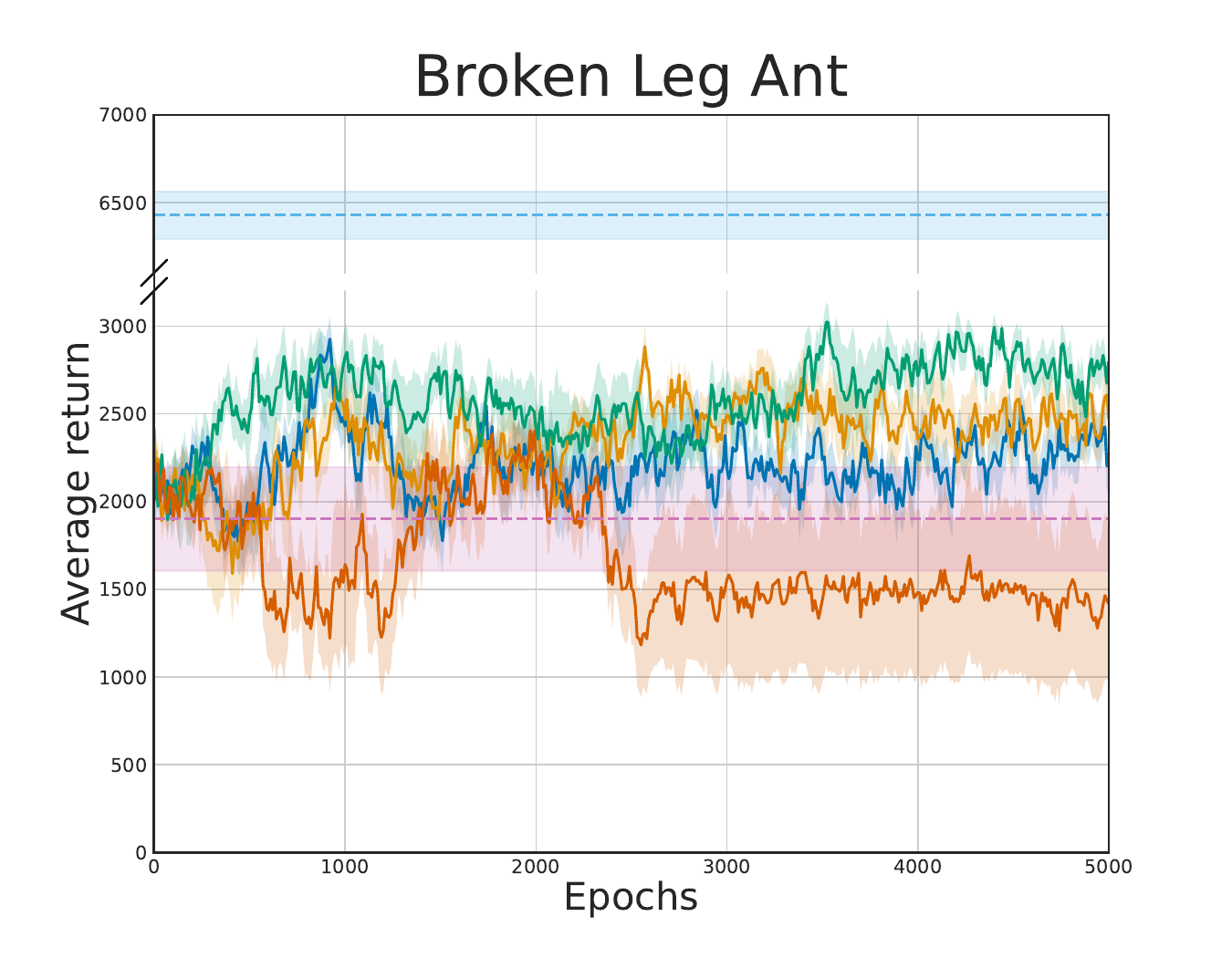}
\includegraphics[width=.32\textwidth]{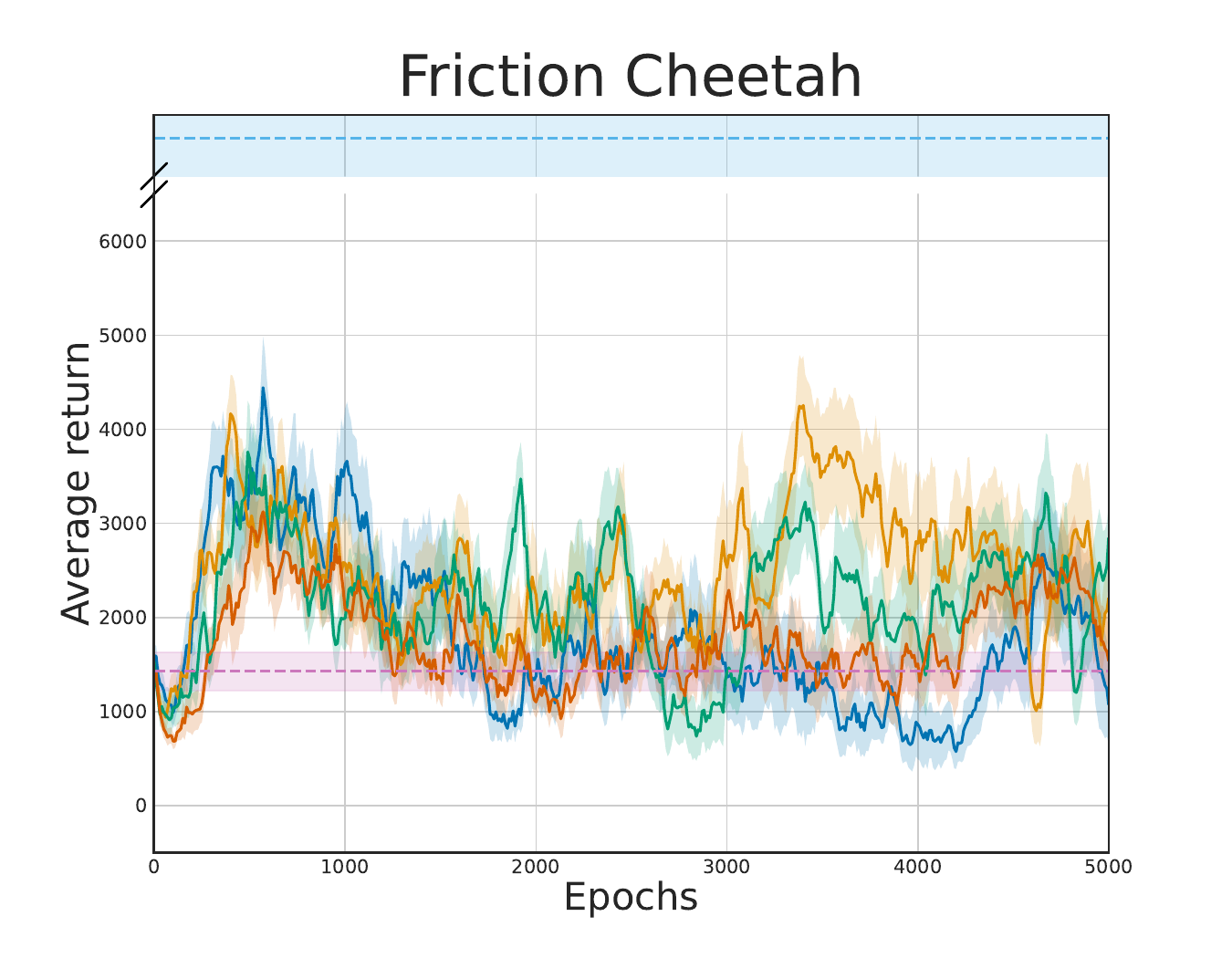}

\medskip

\includegraphics[width=.32\textwidth]{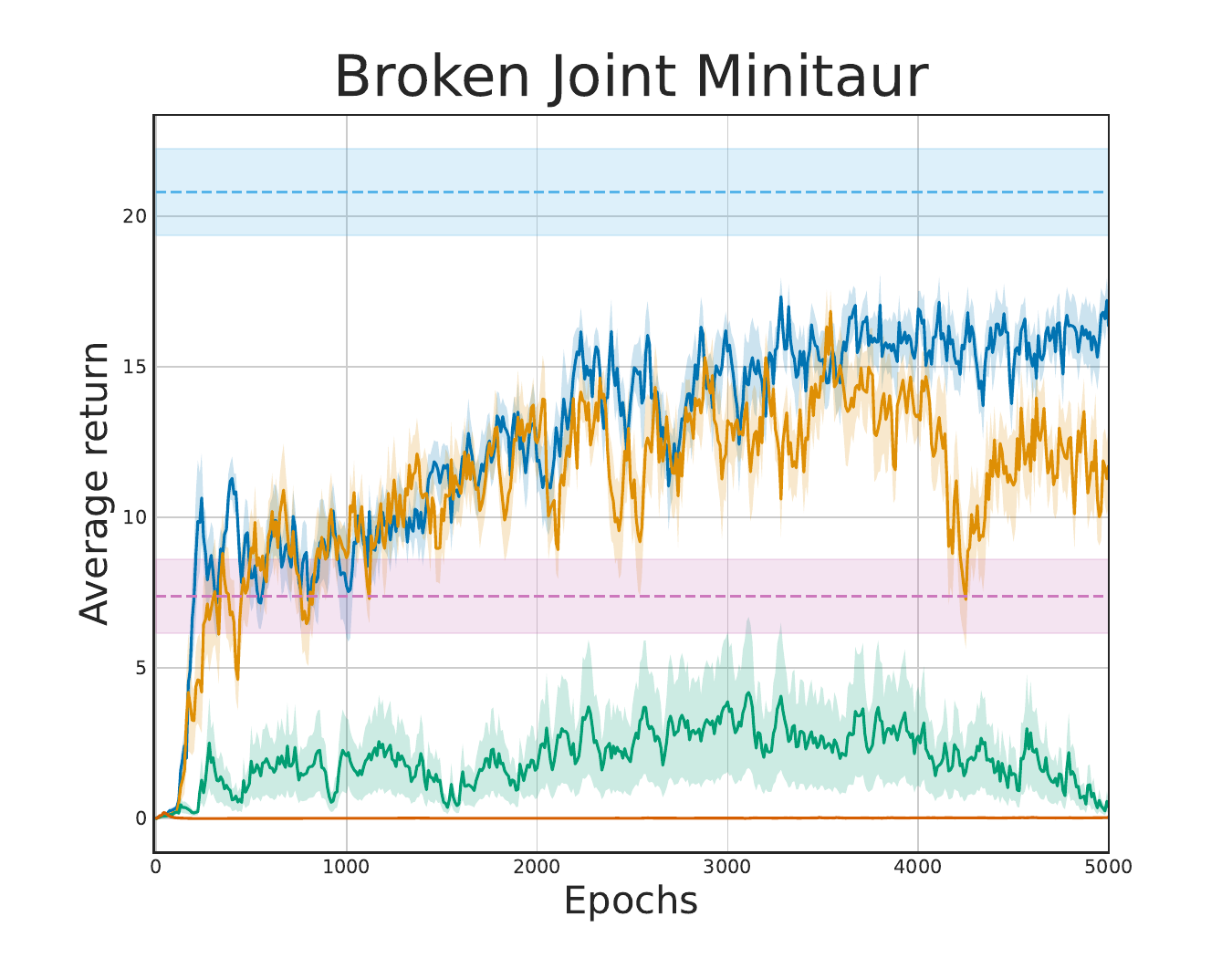}
\includegraphics[width=.32\textwidth]{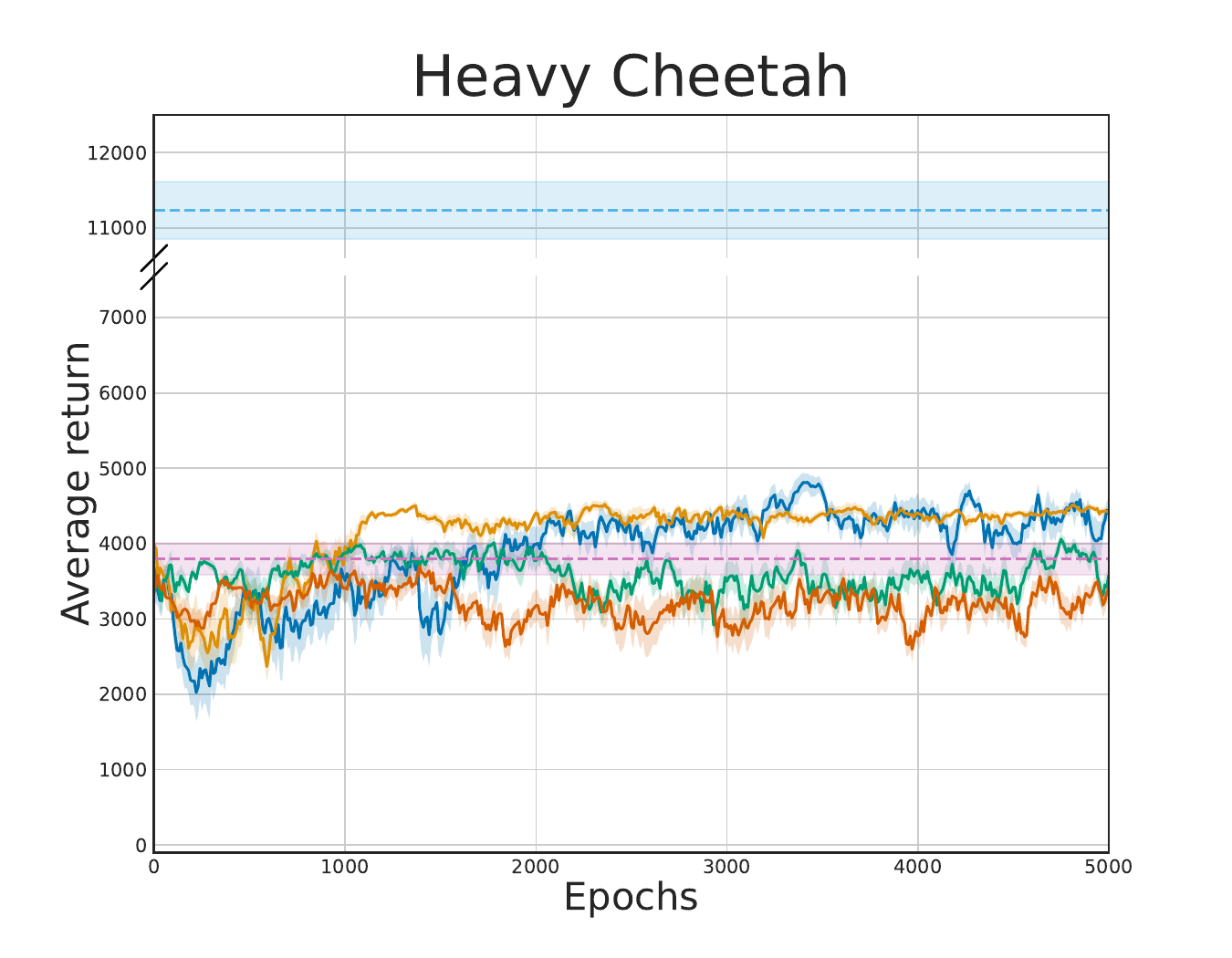}

\medskip

\includegraphics[width=.8\textwidth]{clean_imgs/legend_hyperparams_food_final.pdf}

\caption{Complete hyperparameter sensitivity analysis for the best FOOD agent on the different off-dynamics environments.}
\label{fig:complete_hyperparam_food}
\end{figure}

In all the studied environments where PPO was used, we observe that unless for the low or high values of $\alpha$ ($\alpha \in \{ 0.5, 5 \}$), the FOOD agent improves performance compared to $\text{RL}_\text{s}$. Both cases can be explained. If the value is too high, it may disrupt the gradients and prevent convergence to a good solution. As mentioned in the main paper, this phenomenon also affects the performance in the source environment, so it would be easy for practitioners to remove such bad hyper-parameters. It may also happen that the strength of the regularization is too low. In that case, FOOD has approximately the same performance as $\text{RL}_\text{s}$, as illustrated in Broken Joint HalfCheetah.

Hence, we recommend setting the regularization to have approximately the same weight as the average return. For this, since its advantages are normalized, we recommend using PPO and setting the $\alpha$ parameter to $1$.

\subsection{Comparison between the different IL algorithms for the FOOD agent} \label{app:il_comparison}

FOOD is a general algorithm that may use any chosen Imitation Learning algorithm. Each algorithm minimizes a certain type of divergence between state or state-action visitation distributions, as summarized in Table~\ref{table:imitation_distances}. Here, we investigate which IL is better suited for the considered environments. 

We compare GAIL-$\mu^\pi_P$ \cite{ho2016generative}, GAIL-$d^\pi_P$, GAIL-$\nu^\pi_P$, AIRL-$\mu^\pi_P$ \cite{fu2017learning}, PWIL-$\mu^\pi_P$ \cite{dadashi2020primal}, PWIL-$d^\pi_P$ and PWIL-$\nu^\pi_P$ in Table~\ref{table:comparison_distances}. GAIL and its extensions were extracted directly from \cite{pytorchrl}, AIRL from \cite{airlcode}, and PWIL and its extensions were recoded from scratch.

\begin{table}[ht!]
\centering 
\resizebox{\columnwidth}{!}{\begin{tabular}{|p{3.1cm}||c|c|c|c|c|c|c|}
  \hline
  Environment & GAIL-$d$ & GAIL-$\mu$ & GAIL-$\nu$ & AIRL-$\mu$ & PWIL-$d$ & PWIL-$\mu$ & PWIL-$\nu$  \T\B \\ \hline
  
  Gravity Pendulum
  & $\bm{-485 \pm 54^*}$
  & $-2224 \pm 43$ 
  & $-2327 \pm 14$ 
  & $-1926 \pm 572$
  & $\bm{-980 \pm 838}$
  & $\bm{-948 \pm 789}$
  & $\bm{-978 \pm 816}$
  \T\B \\

  Broken Joint Cheetah
  & $\bm{3888 \pm 201}$
  & $\bm{3801 \pm 155}$ 
  & $\bm{3921 \pm 85^*}$ 
  & $3617 \pm 225$
  & $3537 \pm 248$
  & $2999 \pm 752$
  & $\bm{3797 \pm 389}$
  \T\B \\

  Heavy Cheetah
  & $\bm{4828 \pm 553}$
  & $\bm{4876 \pm 181^*}$
  & $\bm{4519 \pm 240}$
  & $\bm{4604 \pm 184}$
  & $2945 \pm 856$
  & $2771 \pm 1235$
  & $3494 \pm 318$
  \T\B \\

  Broken Joint Ant
  & $5547 \pm 204$
  & $\bm{6145 \pm 98^*}$
  & $\bm{6135 \pm 122}$
  & $5014 \pm 401$
  & $3725 \pm 988$
  & $3483 \pm 747$
  & $3182 \pm 1337$
  \T\B \\

  Friction Cheetah
  & $\bm{3212 \pm 2279}$
  & $\bm{3890 \pm 1495}$
  & $\bm{3289 \pm 236}$
  & $\bm{2957 \pm 1526}$
  & $\bm{3451 \pm 361}$
  & $\bm{3926 \pm 735}$
  & $\bm{4227 \pm 740^*}$
  \T\B \\

  Broken Joint Minitaur
  & $\bm{13.6 \pm 3.8}$
  & $\bm{14.9 \pm 3}$
  & $\bm{16.9 \pm 4.7^*}$
  & $\bm{15.8 \pm 2.3}$
  & $\bm{14.6 \pm 1.9}$
  & $\bm{12.1 \pm 5.2}$
  & $10.5 \pm 6.1$
  \T\B \\

  Low Fidelity Minitaur
  & $15.7 \pm 2.8$
  & $\bm{17 \pm 2 }$
  & $\bm{17.6 \pm 0.4^*}$
  & $7.5 \pm 5.7$
  & $\bm{13.6 \pm 5.1}$
  & $11.4 \pm 3.5$
  & $12.1 \pm 5.5$
  \T\B \\

  Broken Leg Ant
  & $\bm{2345 \pm 806}$
  & $\bm{2652 \pm 356} $
  & $\bm{2977 \pm 85^*} $
  & $1634 \pm 857$
  & $1490 \pm 714$
  & $1554 \pm 886$
  & $1697 \pm 393$
  \T\B \\
  \hline

\end{tabular} }
\caption{FOOD sensitivity analysis with respect to the Imitation Learning agent used. We report the average return over $4$ seeds associated with their best hyper-parameter $\alpha$.}%
\label{table:comparison_distances}
\end{table}

Overall, we observe that all GAIL-associated algorithms have the best results. We attribute this success to the implementation we used, which was optimized for the PPO agent. In addition, FOOD with PWIL has poor results in some environments. This can be attributed to two factors. First, we cannot rule out an error in our code, as we coded it from scratch. Second, this algorithm was introduced in the D4PG agent \cite{barth2018distributed}: it is possible that PPO does not leverage well the PWIL's rewards.

An interesting discussion is about GAIL-$d^\pi_P$, GAIL-$\mu^\pi_P$ and GAIL-$\nu^\pi_P$. Intuitively, the one that focuses on state visitation distributions should give the FOOD agent more freedom to find a better action. This is for example what is observed in the Gravity Pendulum environment. However, in most cases,  GAIL-$\mu^\pi_P$ or GAIL-$\nu^\pi_P$ provide better results as they provide more information regarding the target trajectories. GAIL-$\nu^\pi_P$ is the one directly derived from Proposition~\ref{prop:trust_region}, and it seems GAIL-$\mu^\pi_P$ is implicitely able to optimize the second term in Proposition~\ref{prop:trust_region_1}.


\subsection{Data sensitivity analysis}\label{app:study_data}

In this sub-section, we conduct a comparative analysis between FOOD and DARC across the environments where PPO is used on the number of source trajectories they use. The trained agent $\text{RL}_{\text{s}}$ samples $5$, $10$, $25$ and $50$ trajectories on the source environment. During certain trajectories, the robot directly falls: we exclude them for both FOOD and DARC to avoid misleading regularization.

As depicted in Figure~\ref{fig:data_analysis_agents}, both methods demonstrate relative robustness to the number of source trajectories. Their reliance on a discriminator explains why a small number of trajectories appears to be sufficient for the development of a good agent. Additional insights can be extracted from Figure~\ref{fig:data_analysis_agents}. First, in Friction Cheetah, a larger amount of target data allows DARC to outperform FOOD. Second, in Broken Leg Ant and Heavy Cheetah, an increased number of trajectories decreases FOOD’s performance. This decline may result from including trajectories that have medium to poor performance in the target environment, leading to misguided regularization.

\begin{figure}[ht!]
\centering
\includegraphics[width=.32\textwidth]{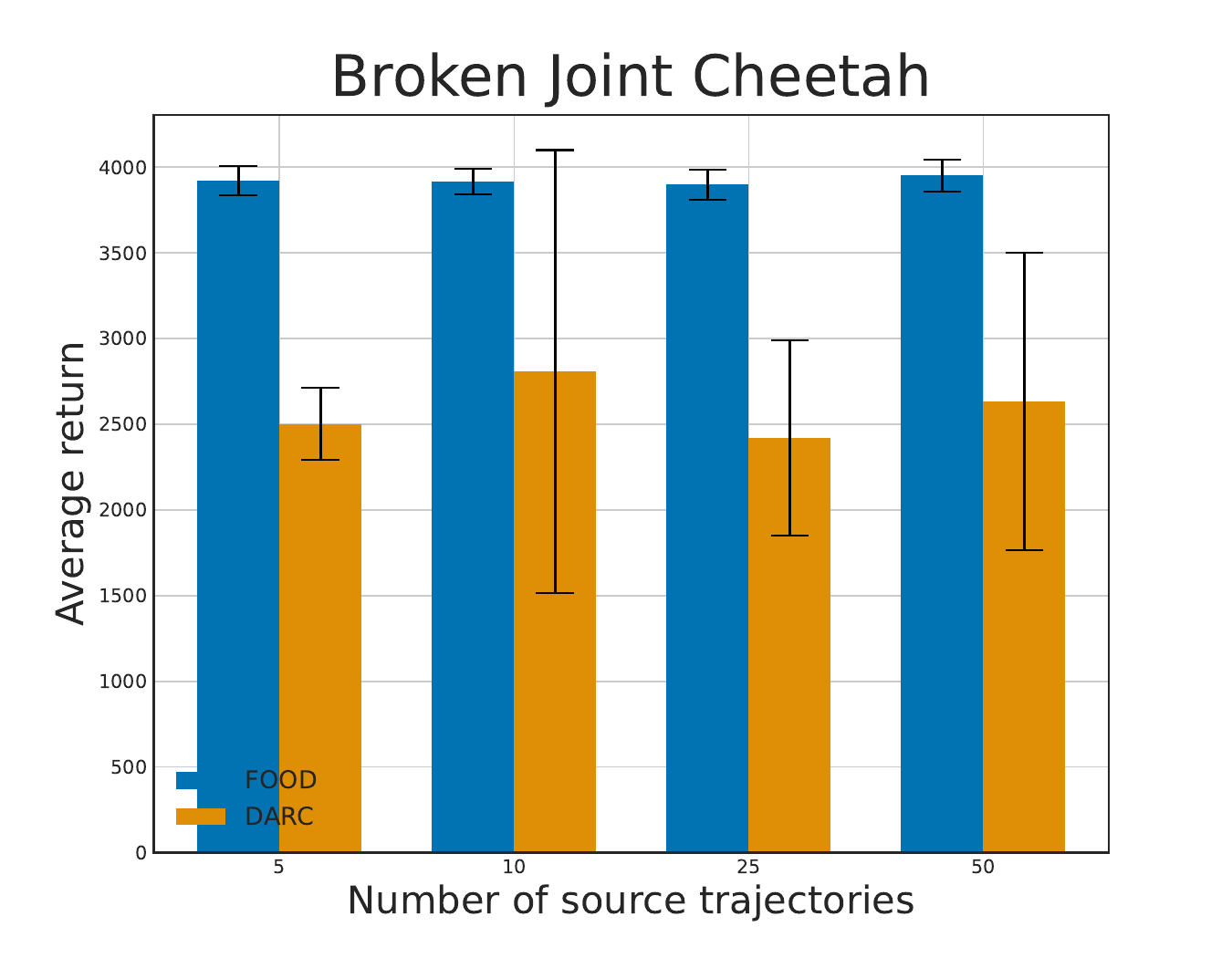}
\includegraphics[width=.32\textwidth]{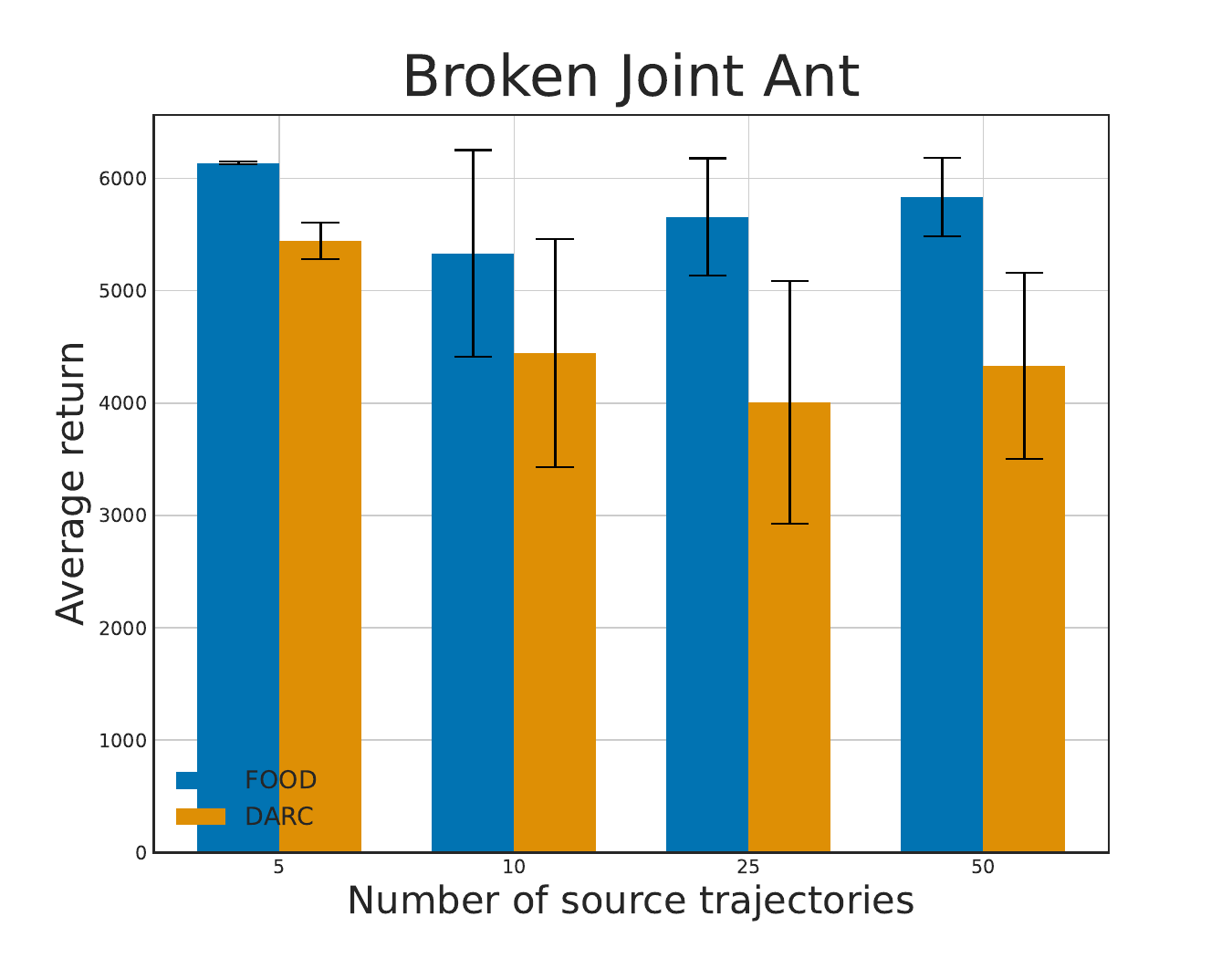}
\includegraphics[width=.32\textwidth]{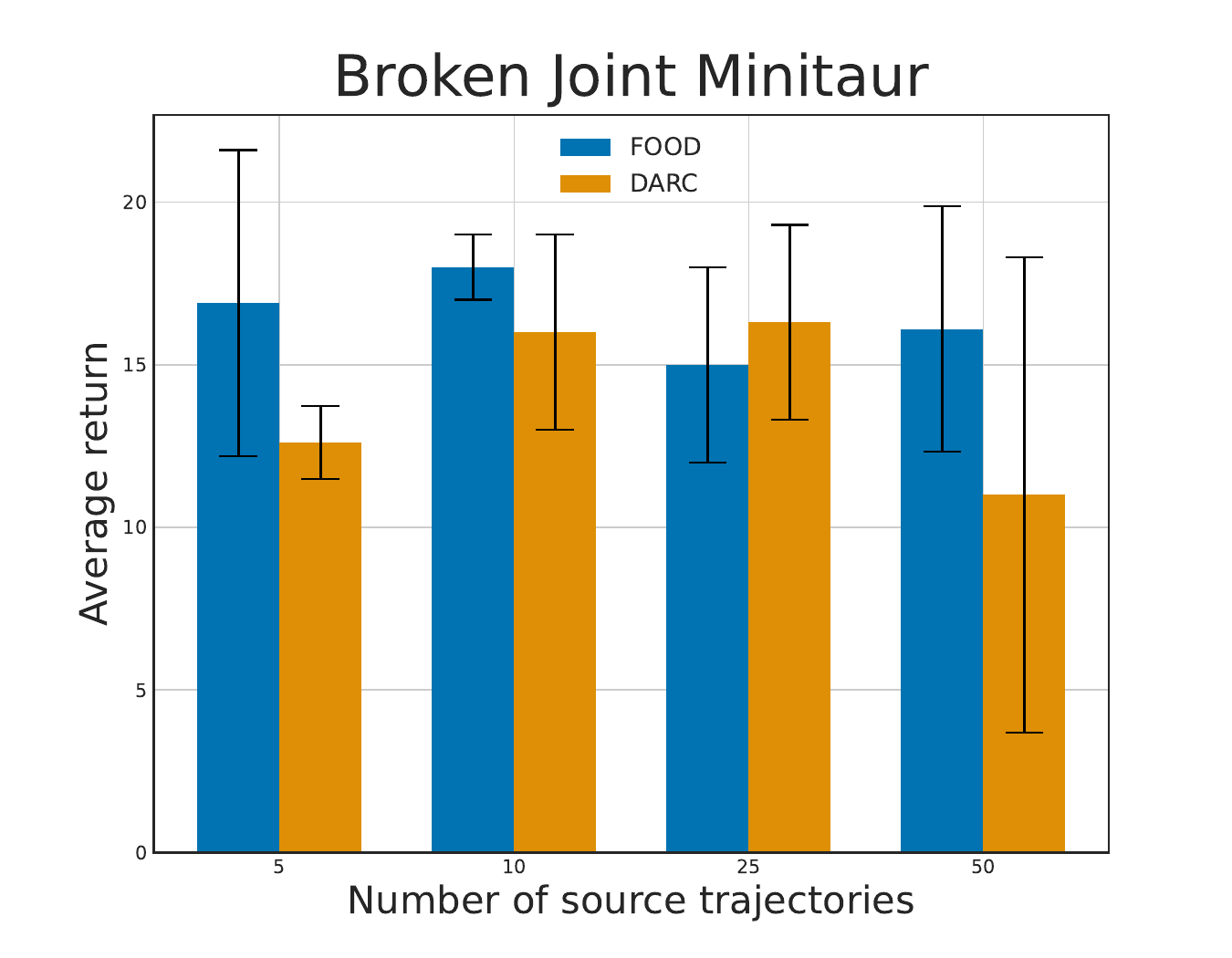}
\medskip
\includegraphics[width=.32\textwidth]{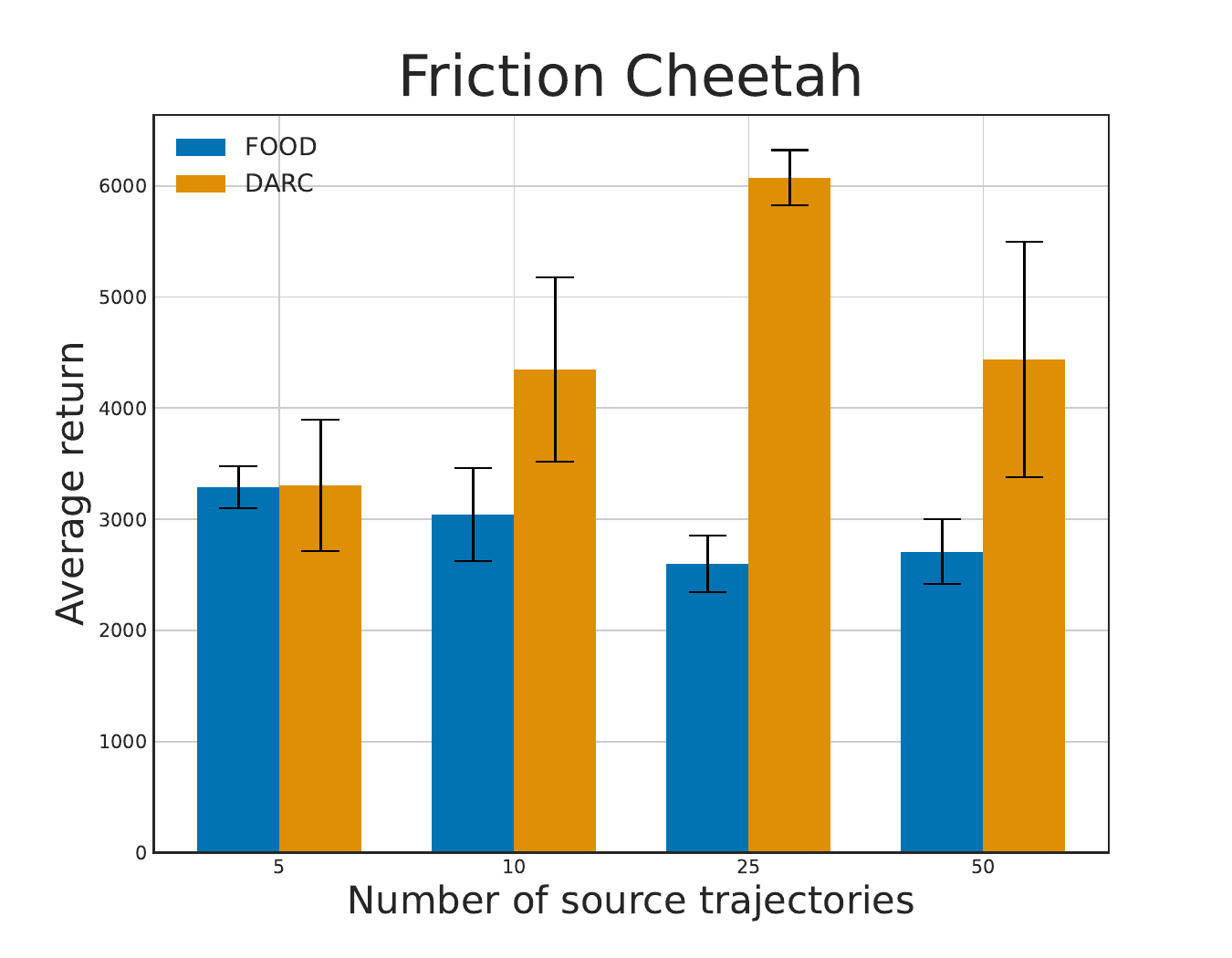}
\includegraphics[width=.32\textwidth]{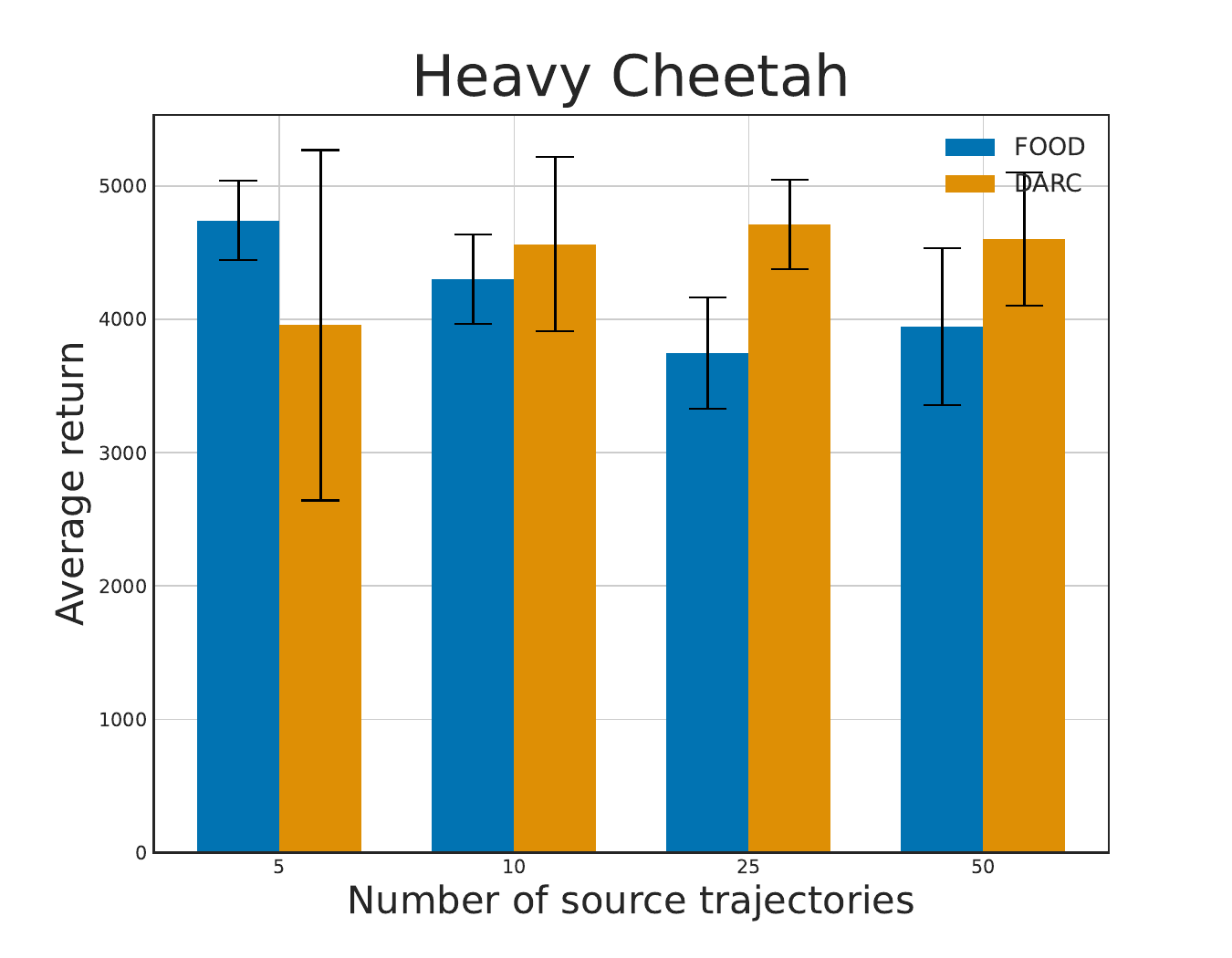}
\includegraphics[width=.32\textwidth]{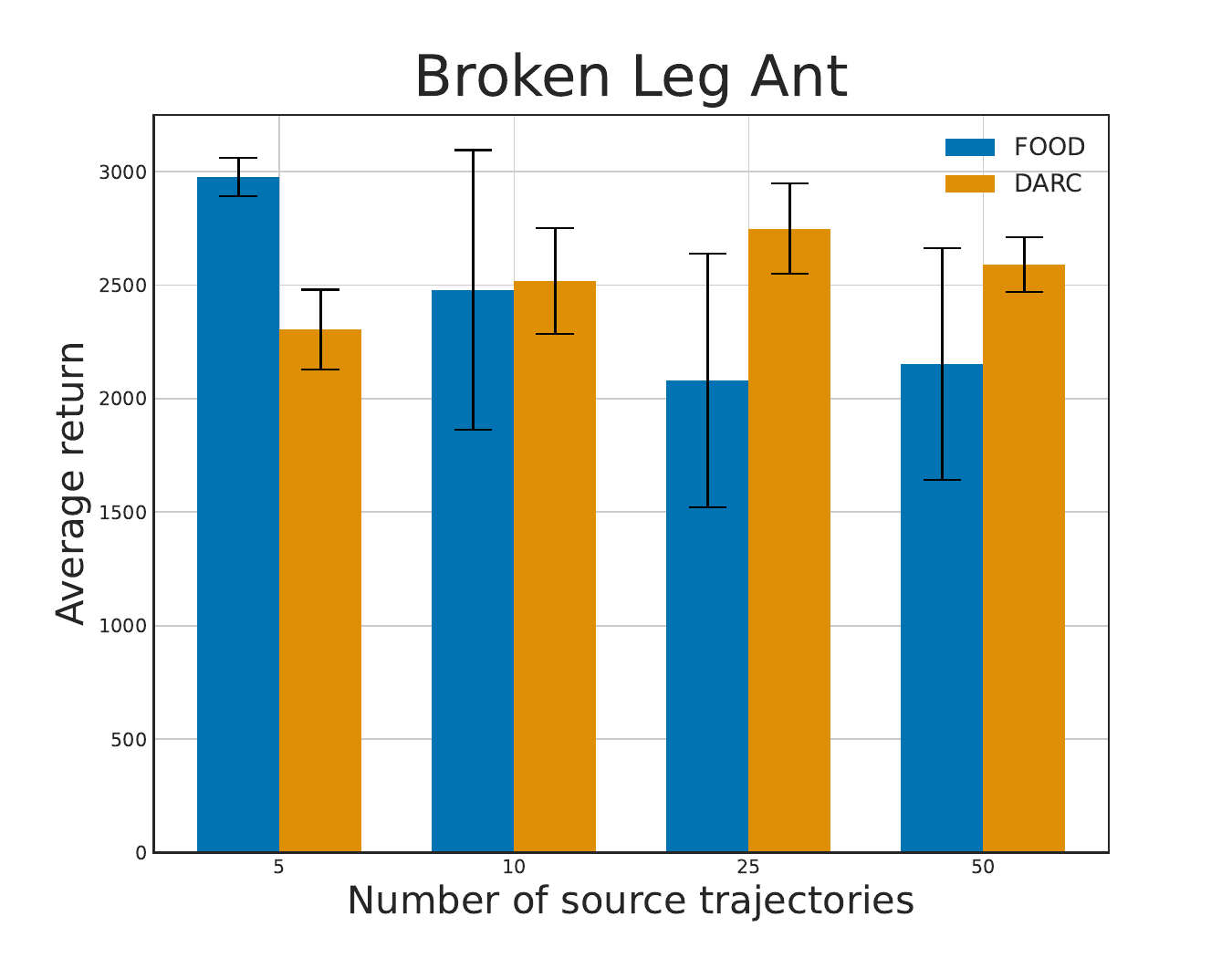}
\medskip
\includegraphics[width=.32\textwidth]{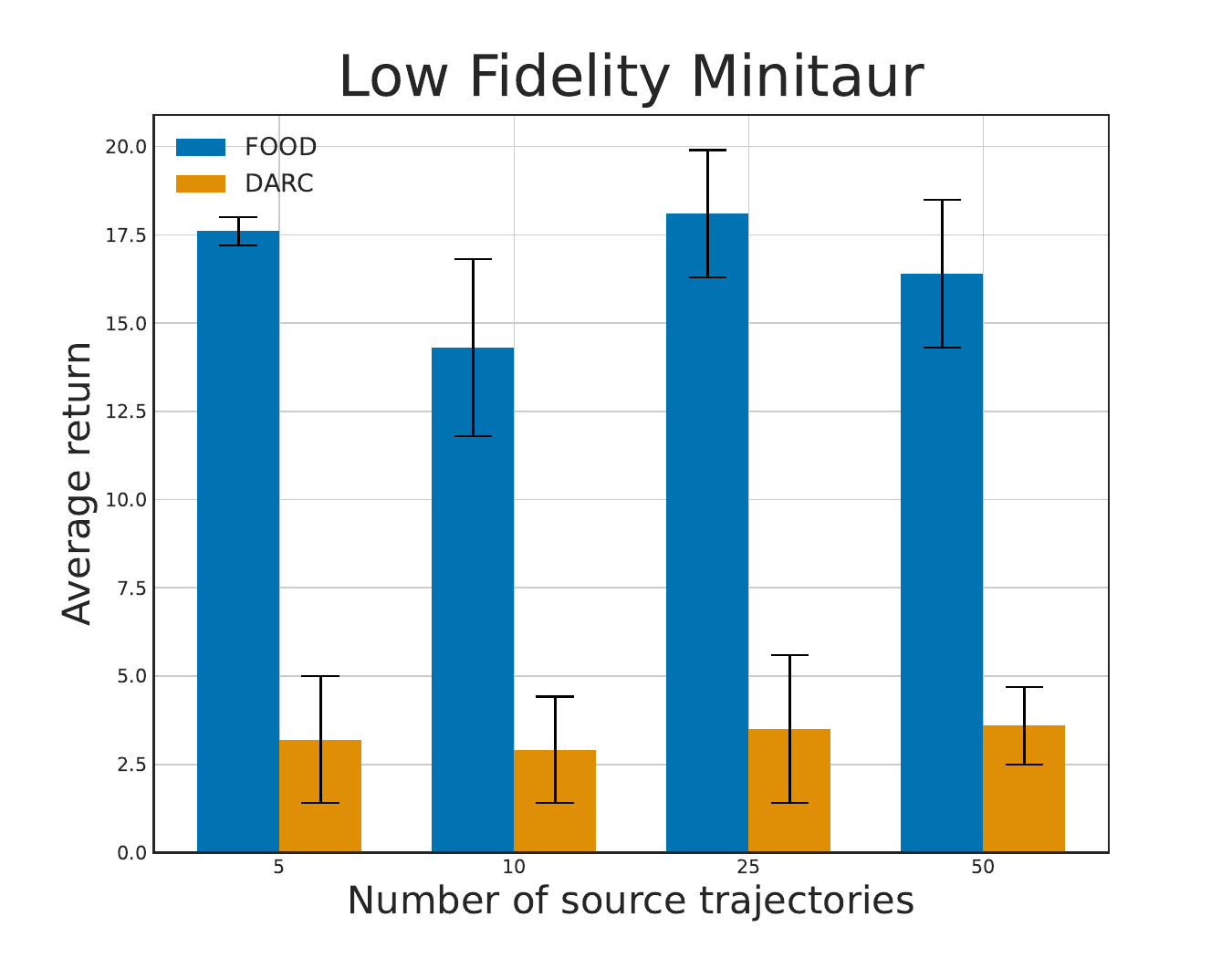}
\caption{Data sensitivity analysis for both FOOD and DARC agents on the environments where PPO is used.}
\label{fig:data_analysis_agents}
\end{figure}

\subsection{DARC Hyperparameters Sensitivity Analysis} \label{app:darc_hyperparams_analysis}

\begin{figure}[ht!]
\centering
\includegraphics[width=.32\textwidth]{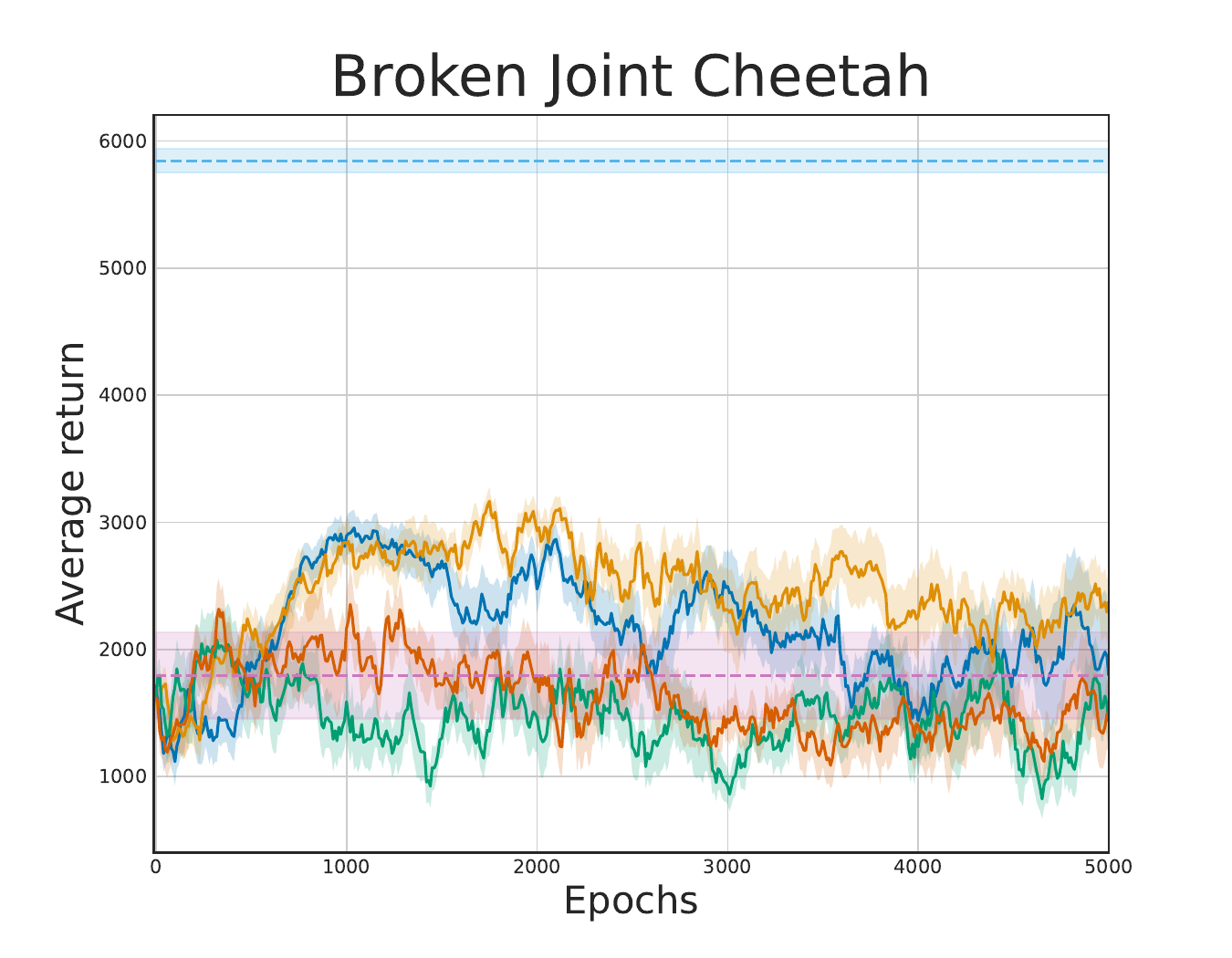}
\includegraphics[width=.32\textwidth]{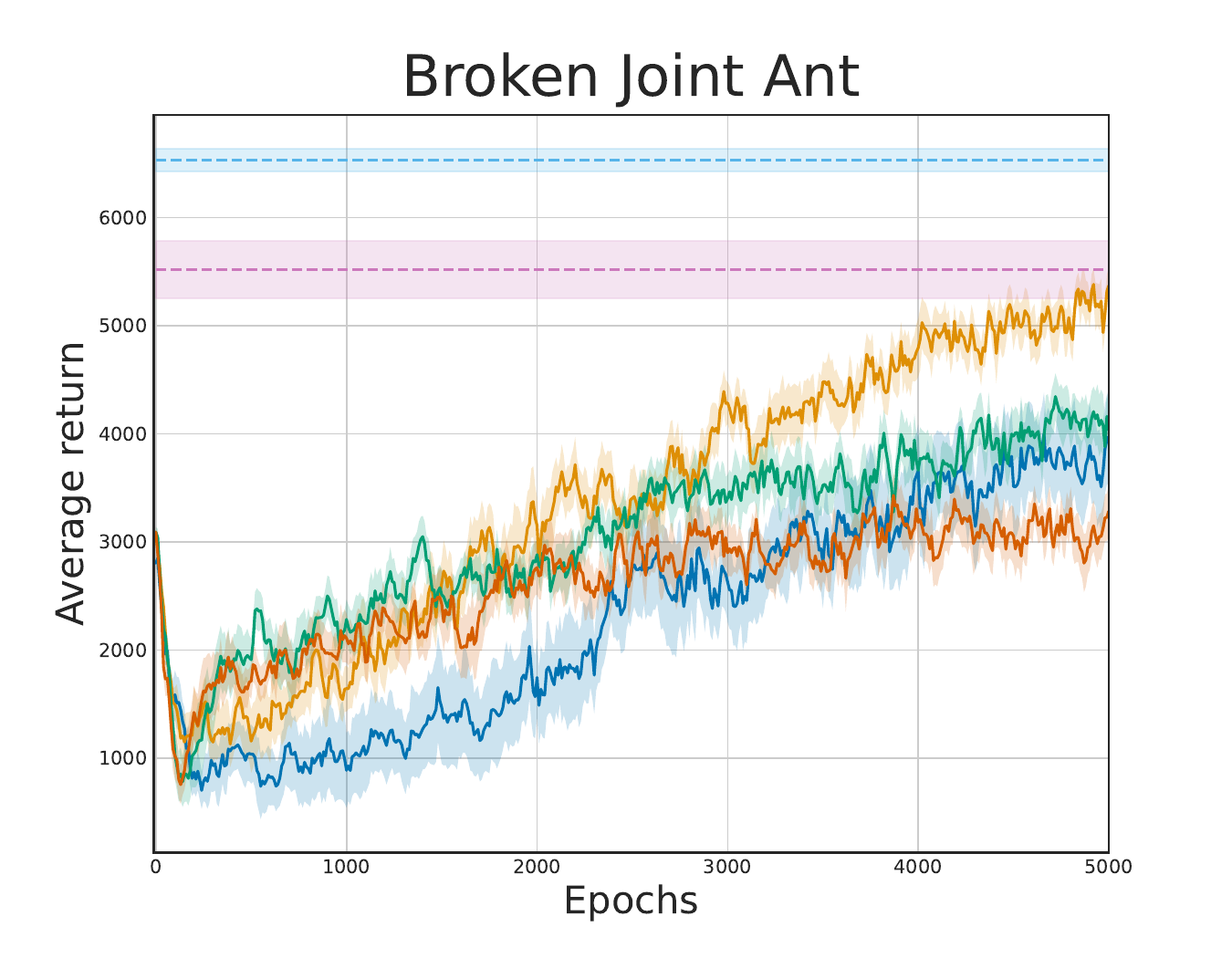}
\includegraphics[width=.32\textwidth]{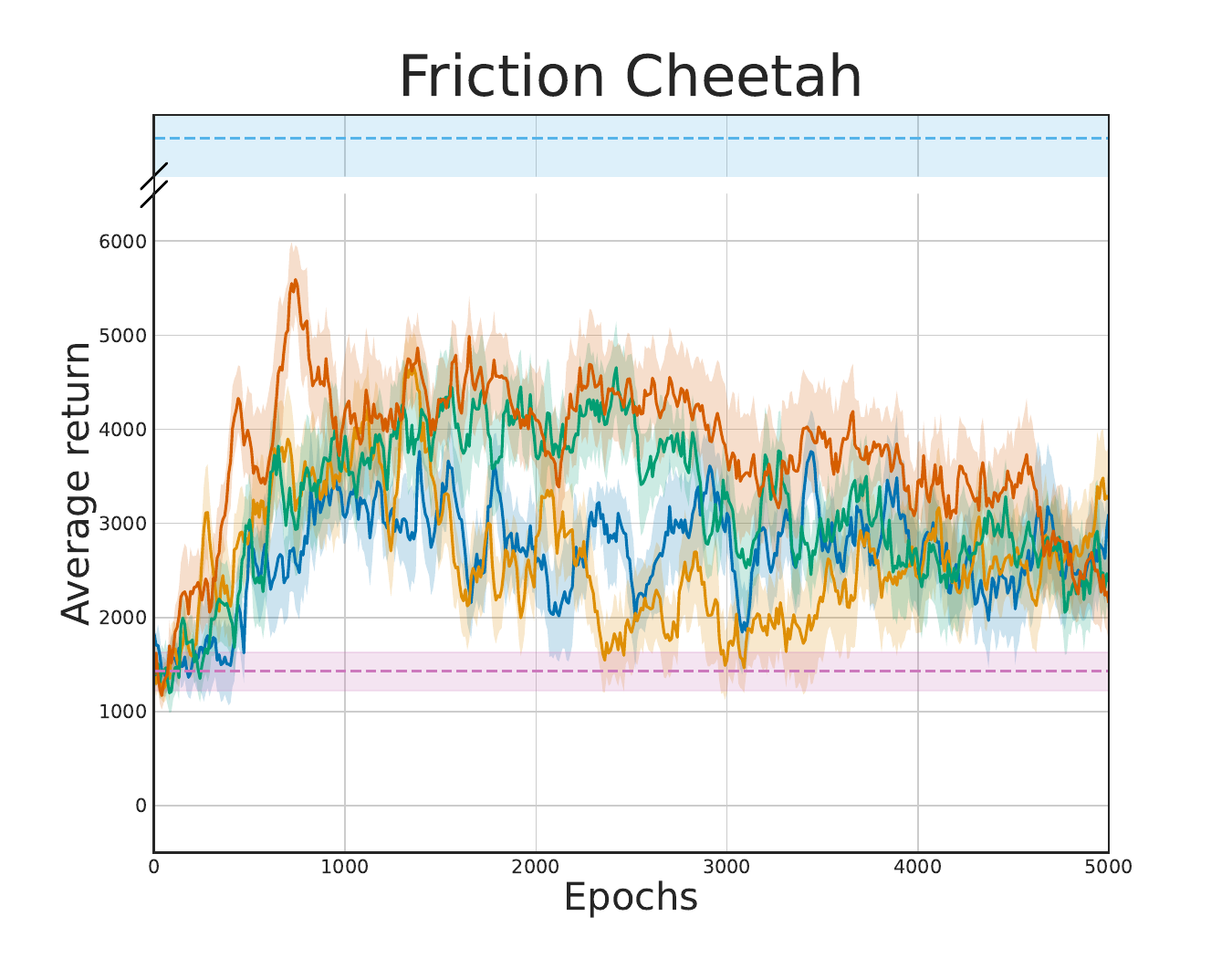}

\medskip

\includegraphics[width=.32\textwidth]{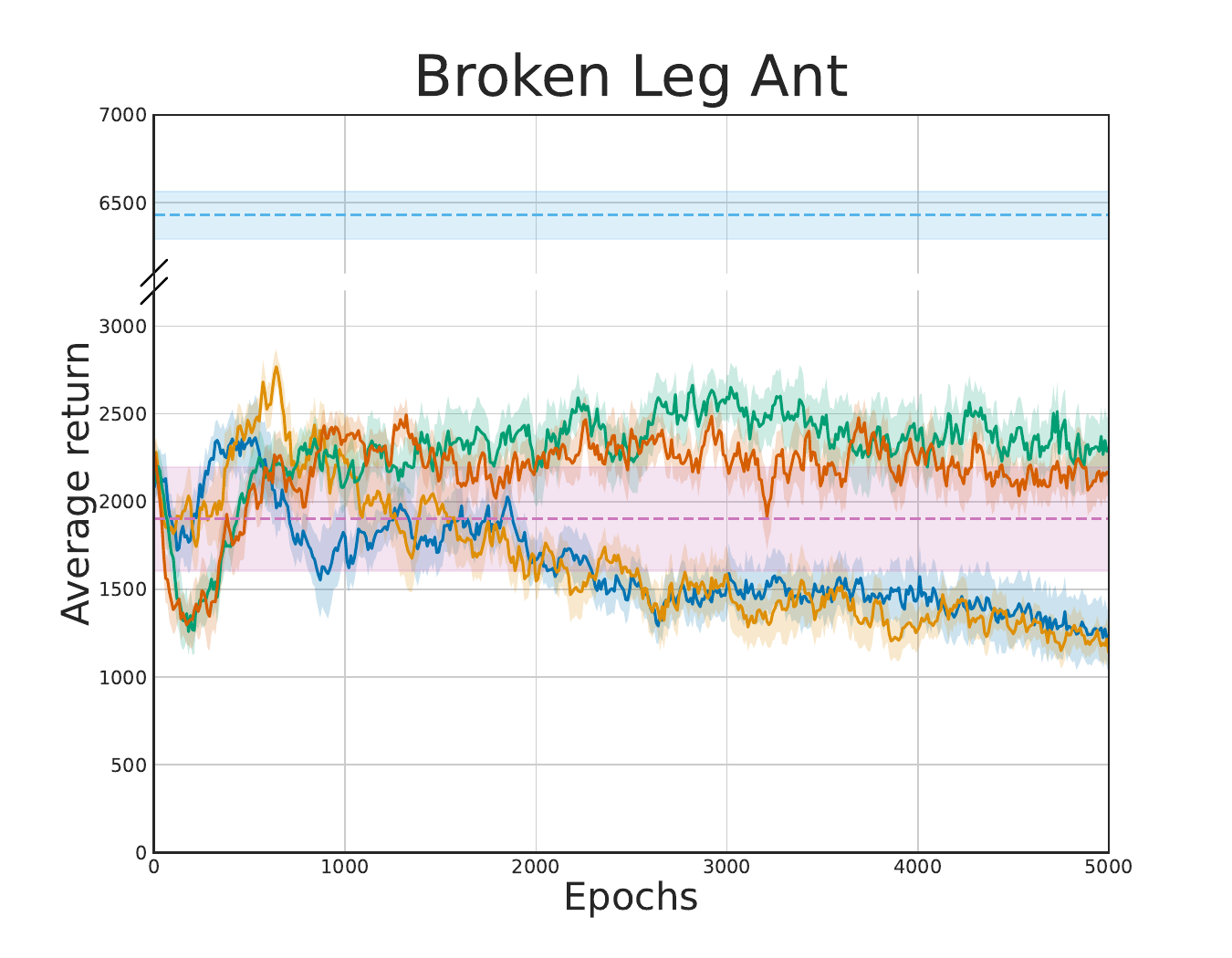}\quad
\includegraphics[width=.32\textwidth]{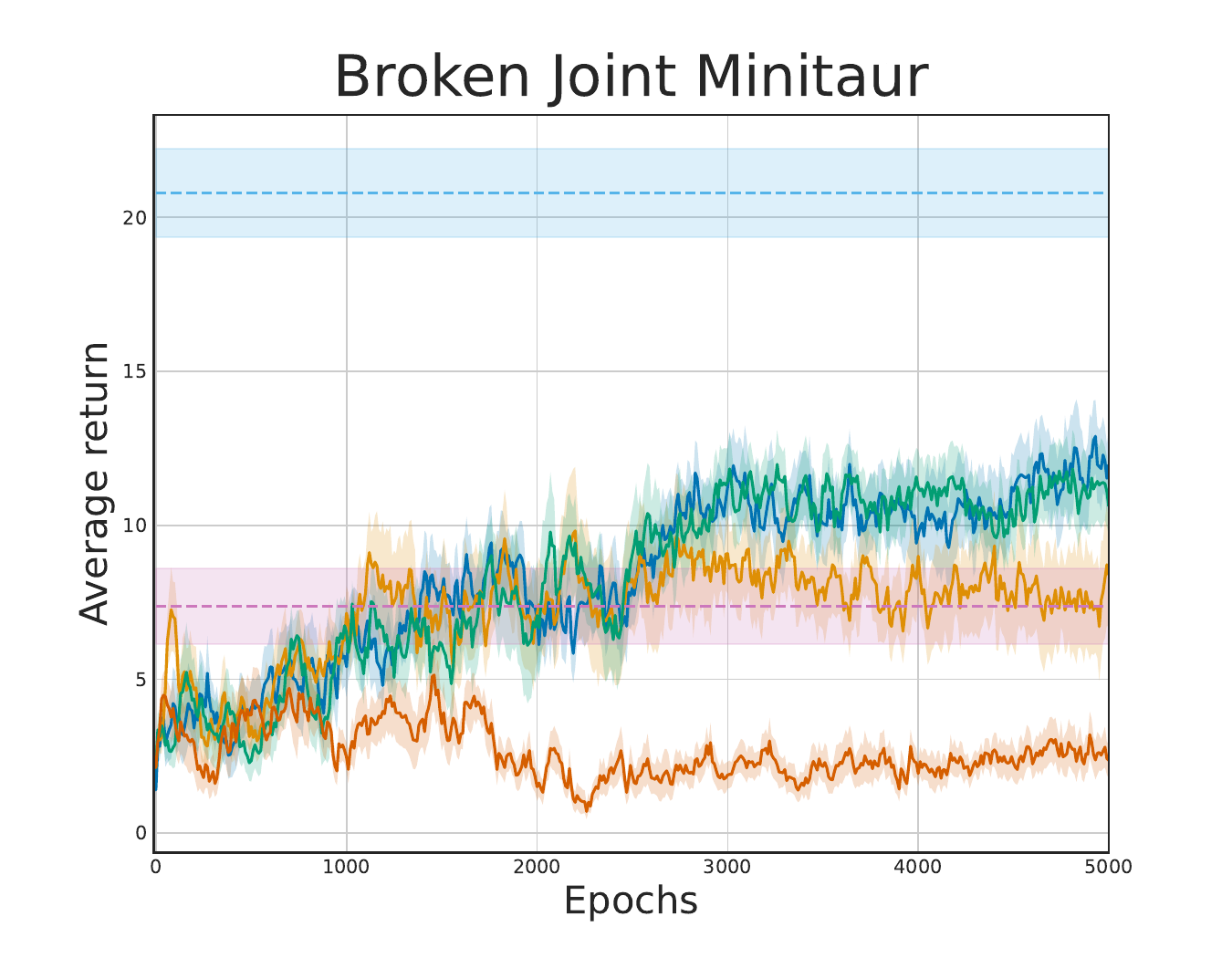}
\includegraphics[width=.32\textwidth]{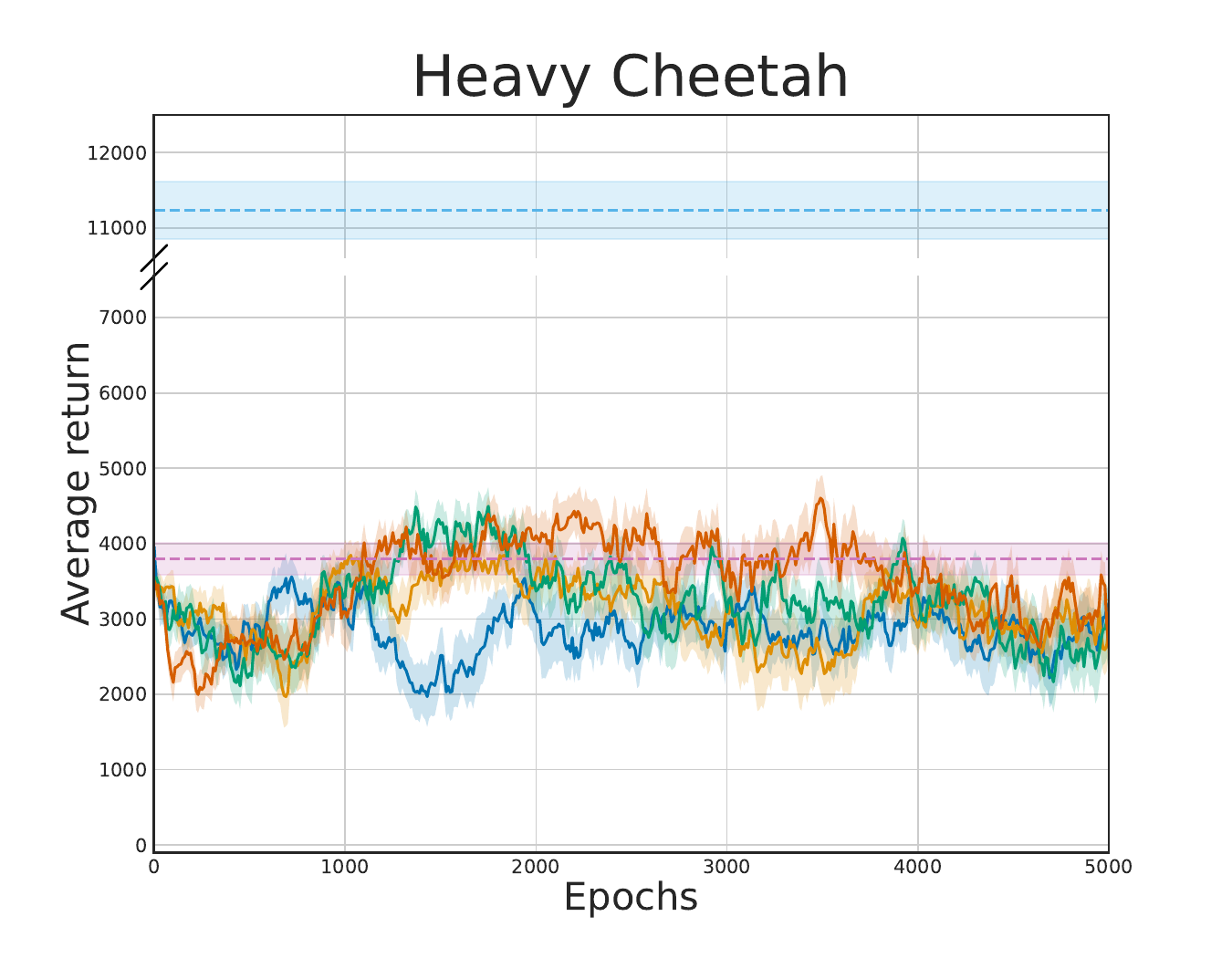}

\medskip

\includegraphics[width=.8\textwidth]{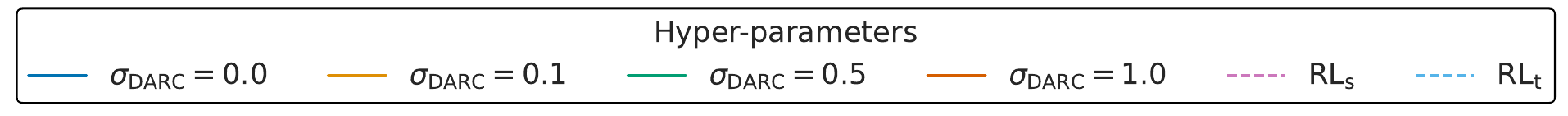}

\caption{Hyper-parameter sensitivity analysis for the DARC agent on the different environments where DARC works well.}
\label{fig:complete_hyperparam_darc}
\end{figure}

We detail in Figure~\ref{fig:complete_hyperparam_darc} DARC's sensitivity to its main hyper-parameter $\sigma_{\text{DARC}}$. We observe a clear dependence on the noise added to the discriminator, although there seems to be no pattern for choosing the right hyper-parameter. For instance, the best hyper-parameter for Broken Joint Cheetah and Broken Joint Ant is $\sigma_{\text{DARC}}=0.1$, but this value leads to worse performance than $\text{RL}_\text{s}$ on the two other presented environments. 

\subsection{ANE Hyperparameters Sensitivity Analysis} \label{app:ane_hyperparams_analysis}

We also detail the ANE's results for all environments in Figure~\ref{fig:complete_hyperparam_ane}. As a reminder, ANE adds a centered Gaussian noise with std $\sigma_{\text{ANE}}\in\{0.1, 0.2, 0.3, 0.5\}$ to the action during training.

\begin{figure}[ht!]
\centering
\includegraphics[width=.32\textwidth]{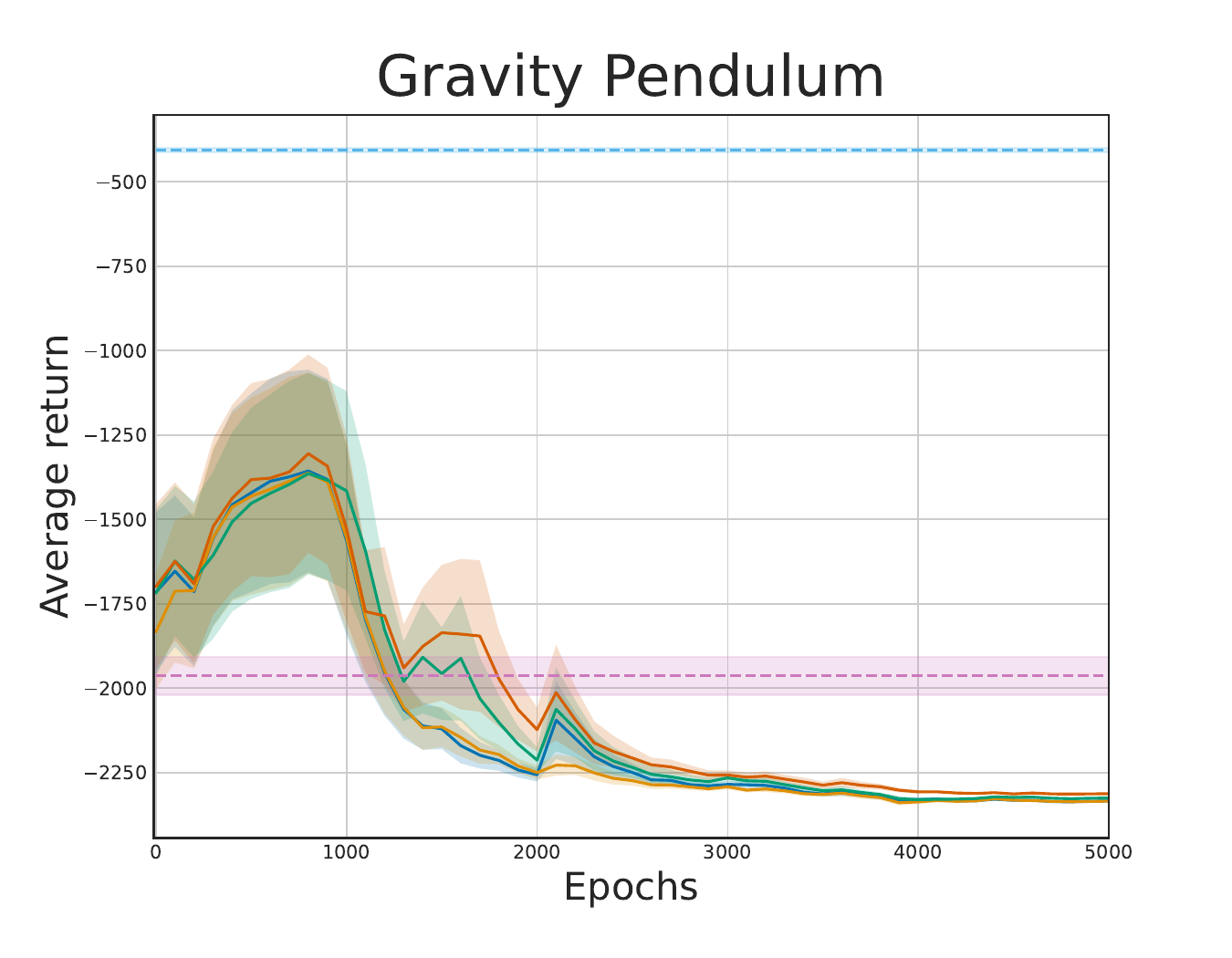}
\includegraphics[width=.32\textwidth]{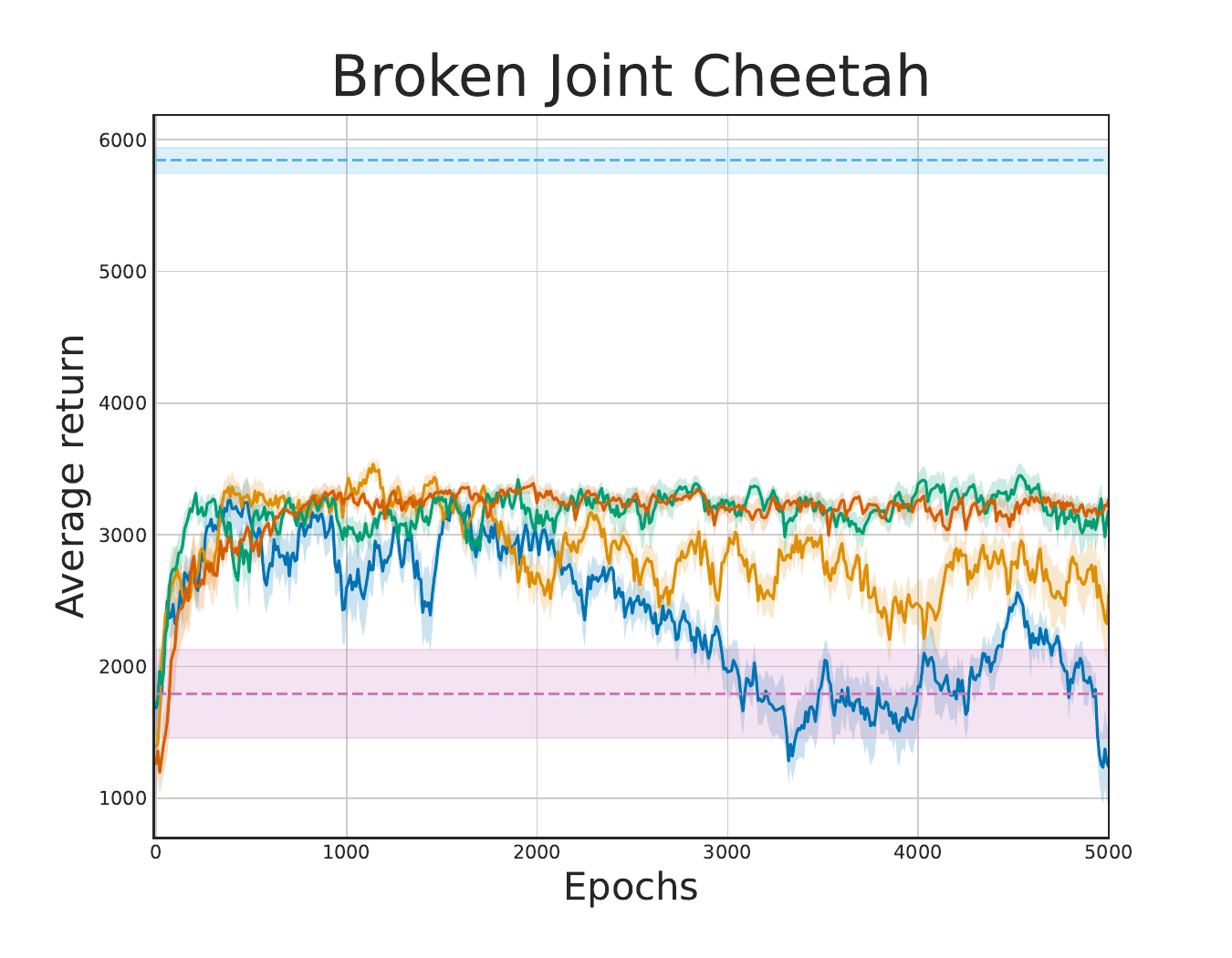}
\includegraphics[width=.32\textwidth]{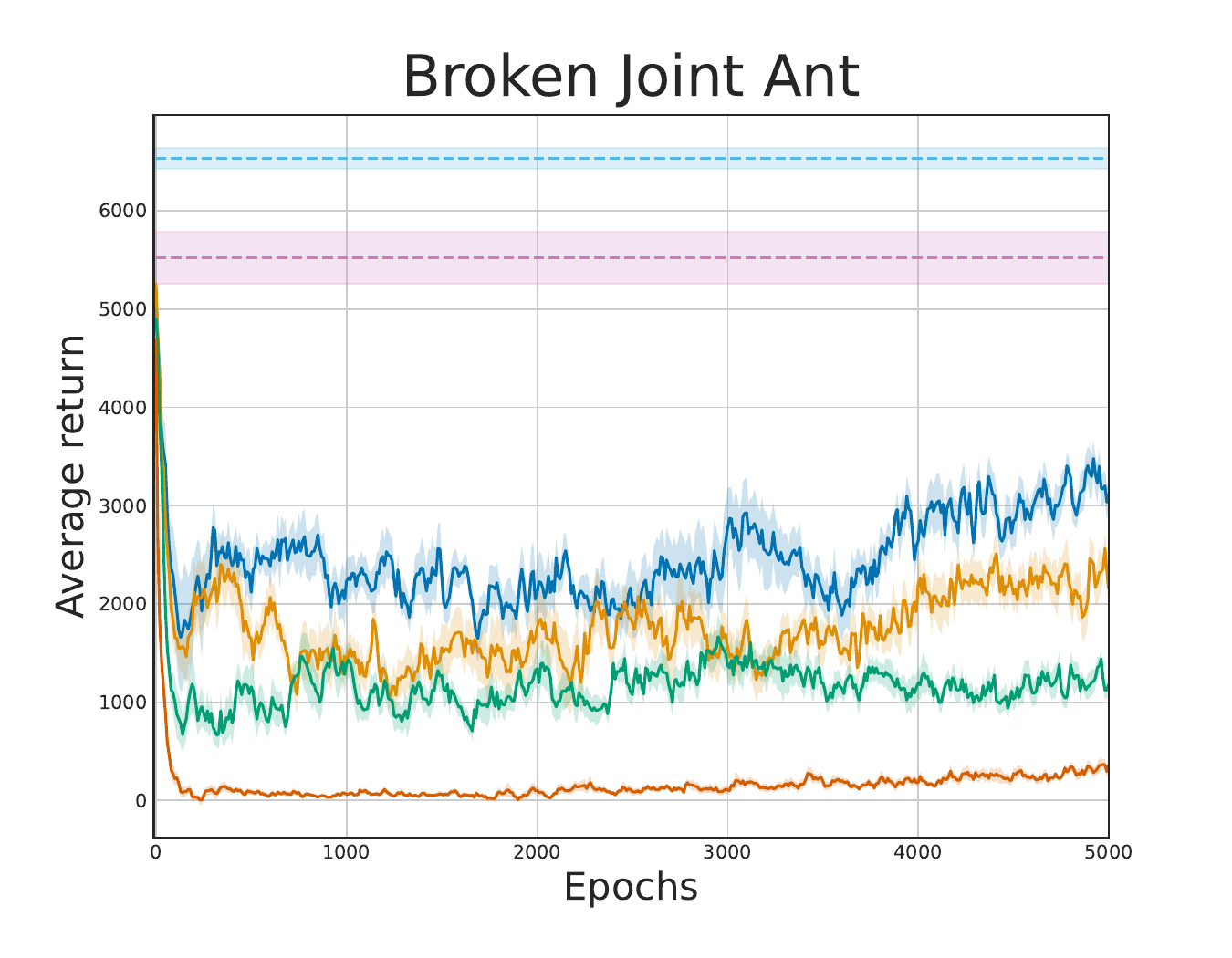}

\medskip

\includegraphics[width=.32\textwidth]{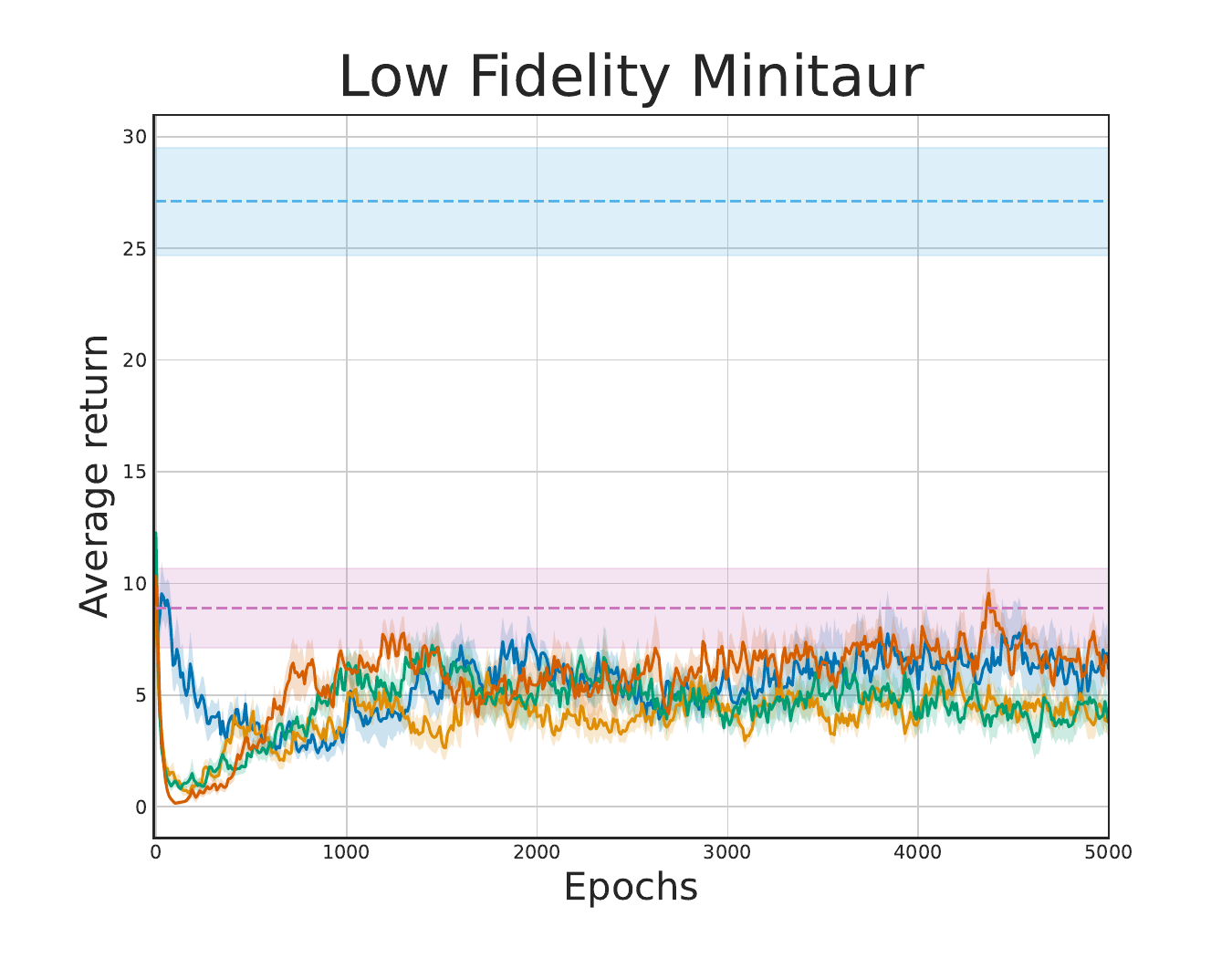}
\includegraphics[width=.32\textwidth]{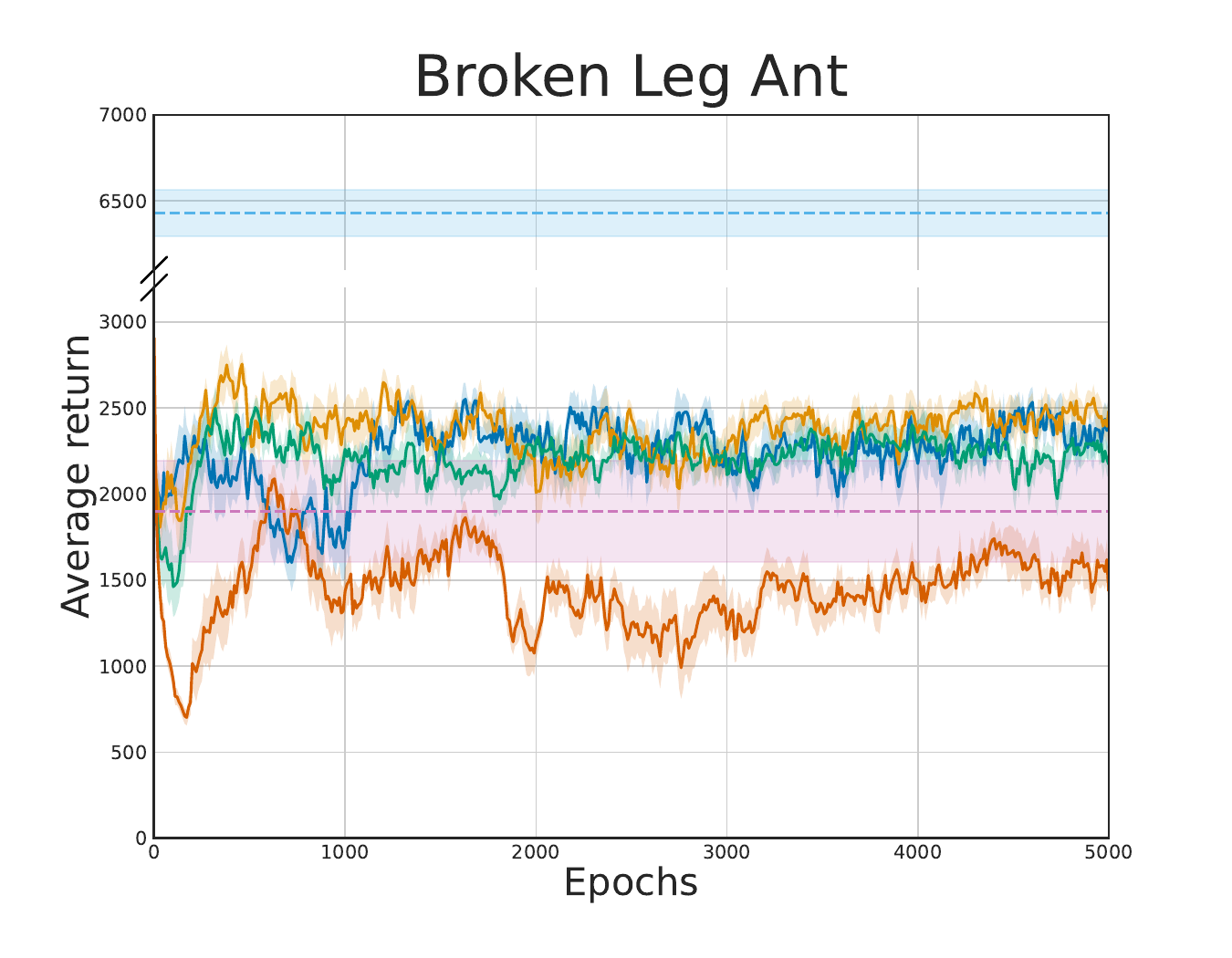}
\includegraphics[width=.32\textwidth]{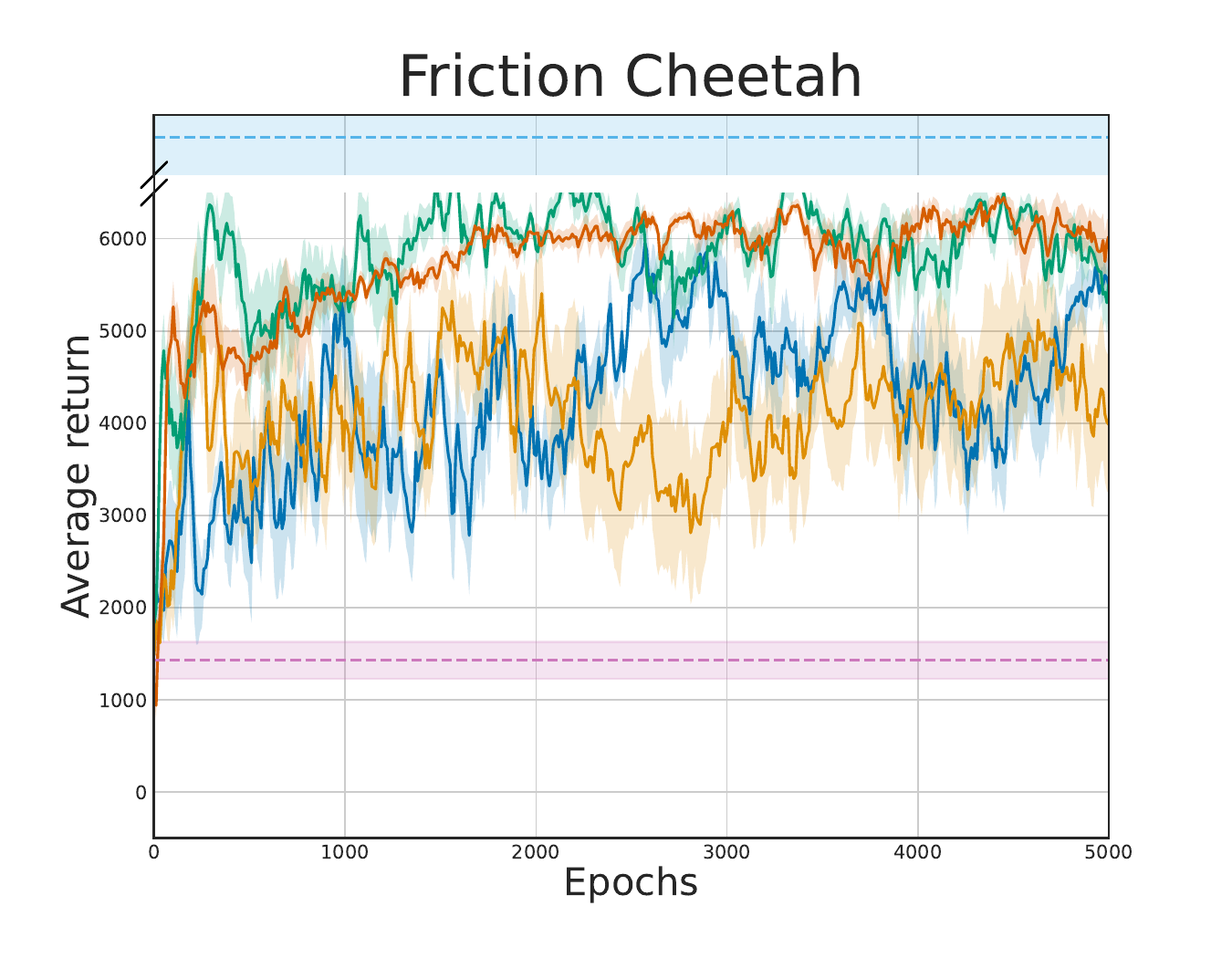}

\medskip



\includegraphics[width=.32\textwidth]{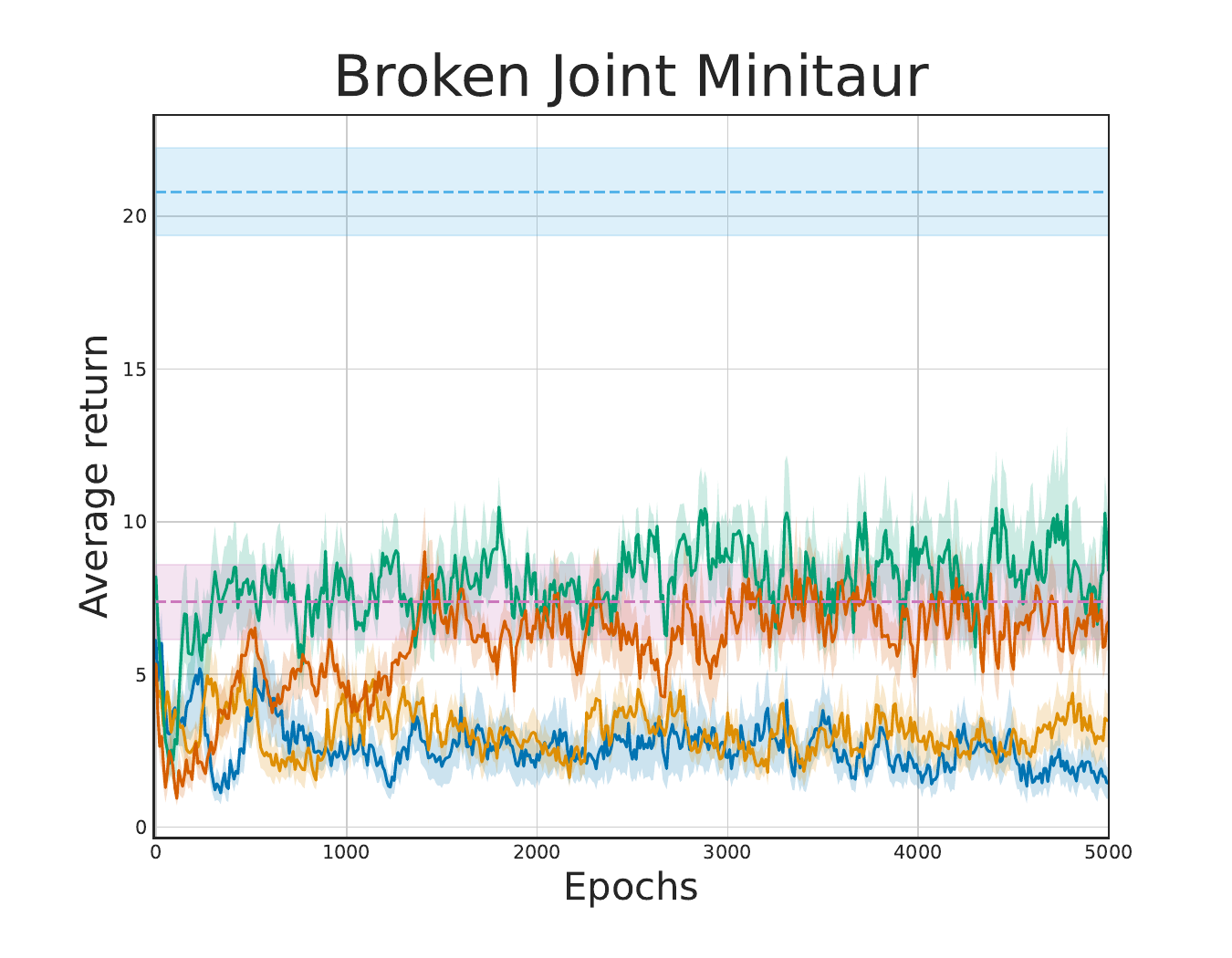}
\includegraphics[width=.32\textwidth]{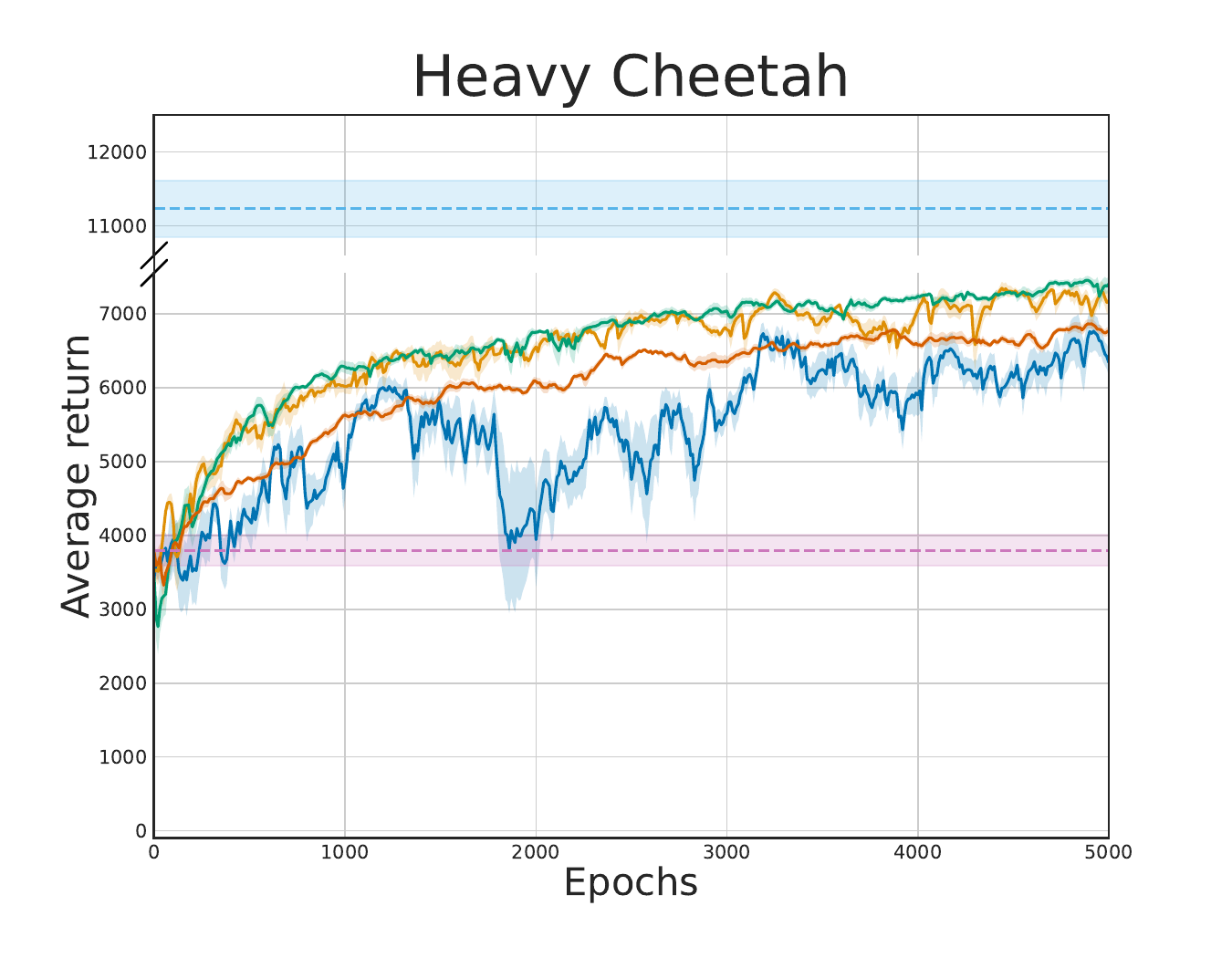}

\medskip

\includegraphics[width=.8\textwidth]{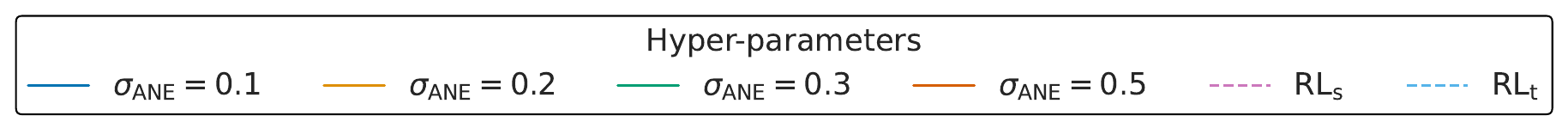}

\caption{Hyper-parameter sensitivity analysis for the ANE agent on the different environments.}
\label{fig:complete_hyperparam_ane}
\end{figure}

These figures are not easily interpretable. This technique may work very well as observed for Heavy Cheetah, but may fail for other environments such as Broken Joint Ant or Low Fidelity Minitaur.

\subsection{H2O results}\label{app:h2o_results}

Finally, we report H2O results in Figure~\ref{fig:h2o_results}. This method combines the regularization of DARC and CQL in the off-dynamics scenario when the agent has access to a large amount of target data. Since the agent also uses data from the source domain in its learning process, the strength of the regularization is lower than in CQL. It was set to $0.01$ in most of the benchmarks in H2O and to $1$ for the others. We did a grid search on these $2$ values. Given its poor results on the $2$ out of $3$ environments we tried and the high resources it requires, we did not try it on the other environments.

\begin{figure}[ht!]
\centering

\medskip

\includegraphics[width=.3\textwidth]{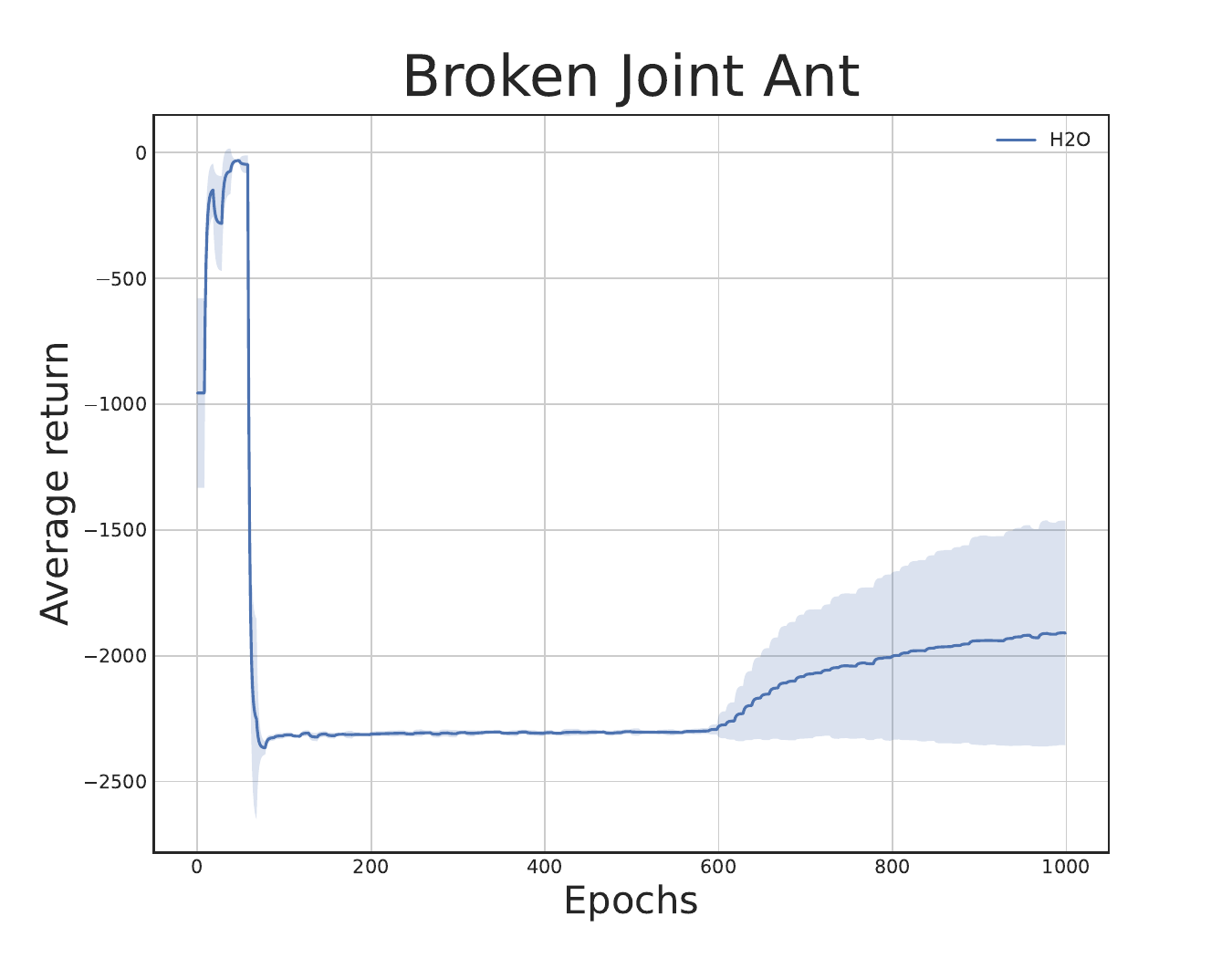}
\includegraphics[width=.3\textwidth]{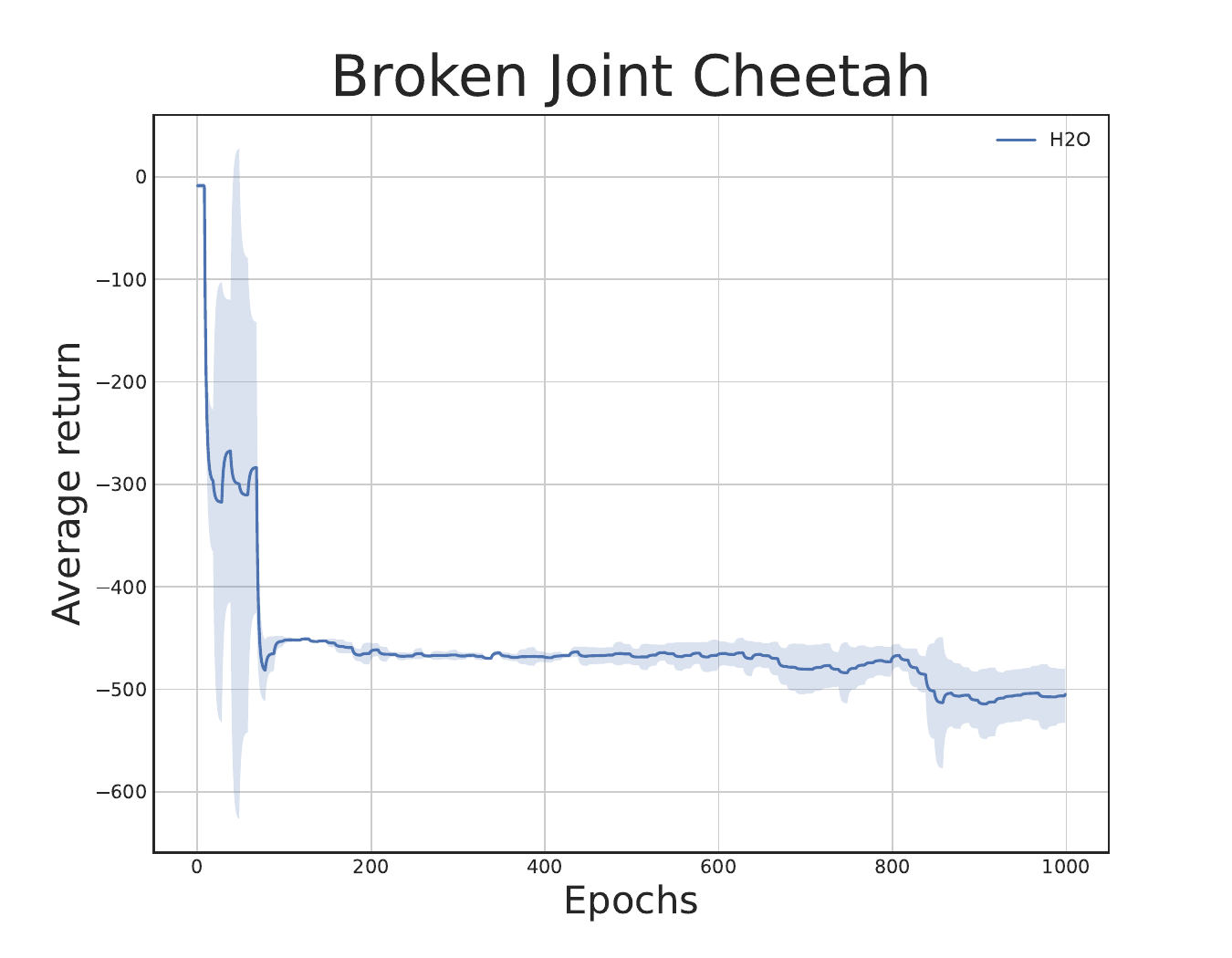}
\includegraphics[width=.3\textwidth]{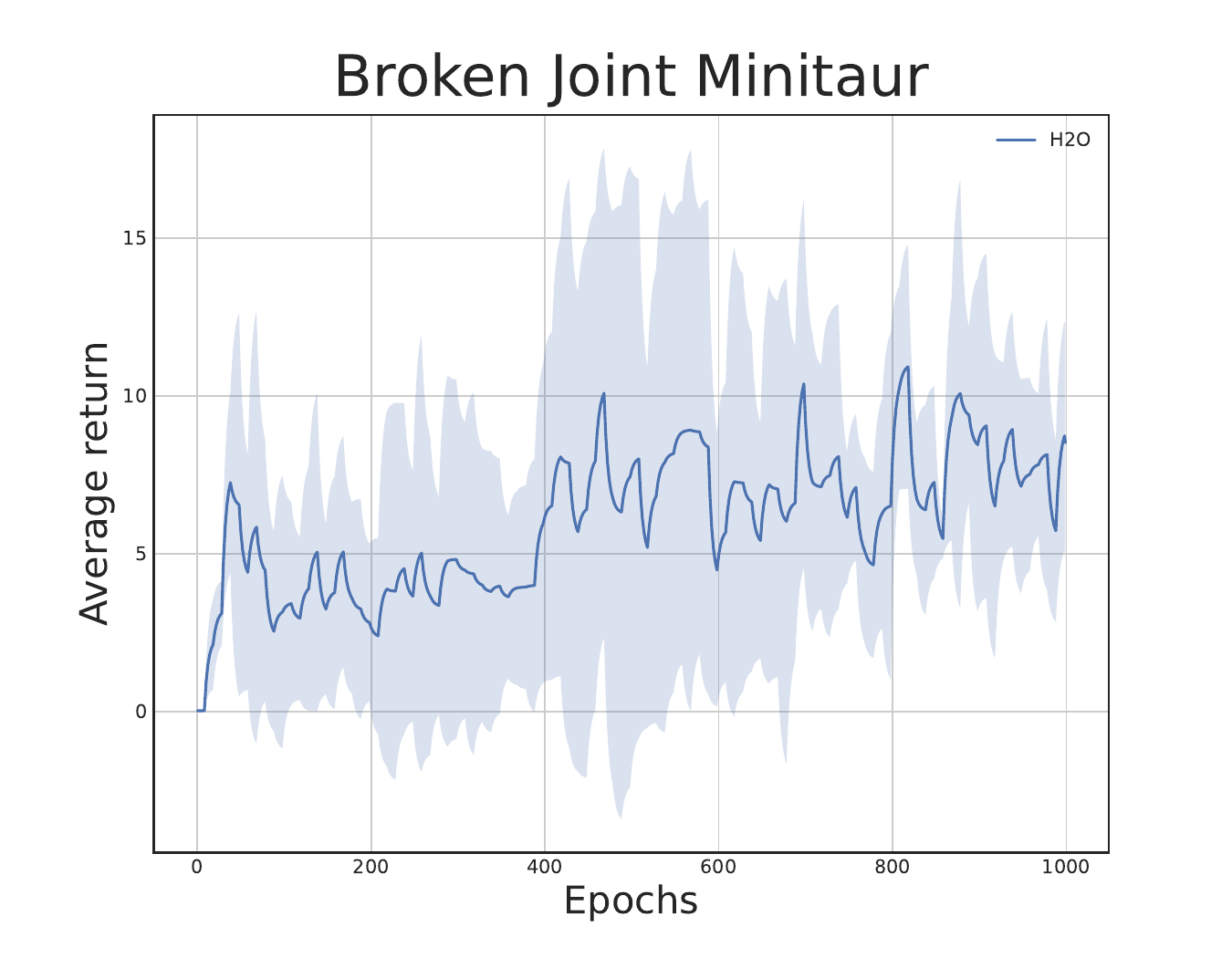}

\caption{H2O results on $3$ environments.}
\label{fig:h2o_results}
\end{figure}

\end{document}